\def\set@curr@file#1{\def\@curr@file{#1}} %temp workaround for 2019 latex release
\title[Finite-time System Identification and Adaptive Control in ARX Systems]{Finite-time System Identification and Adaptive Control in Autoregressive Exogenous Systems}
\newcommand{\bigzero}{\mbox{\normalfont\Large\bfseries 0}}
\newcommand\x{\times}
\newcommand{\ttt}{\tilde{\Theta}}
\newcommand{\tth}{\hat{\Theta}}
\newcommand{\G}{\mathcal{G}}
\tikzstyle[nome]=[anchor=west]
\tikzset{
    block/.style = {draw, rectangle,
        minimum height=0.4cm,
        minimum width=0.5cm},
    input/.style = {coordinate,node distance=1cm},
    output/.style = {coordinate,node distance=4cm},
    arrow/.style={draw, -latex,node distance=2cm},
    pinstyle/.style = {pin edge={latex-, black,node distance=2cm}},
    sum/.style = {draw, circle,minimum size=1pt, node distance=1cm},}
\newcommand{\OFU}{\textsc{\small{OFU}}\xspace}
\newcommand{\strong}{\underline\alpha_{loss}}
\newcommand{\smooth}{\wb\alpha_{loss}}
\newcommand{\Markov}{\mathbf{G}}
\newcommand{\Mcontrol}{\mathbf{M}}
\newcommand{\Mcontrolset}{\mathcal{M}}
\newcommand{\proj}{proj}
\newcommand{\nature}{b}
\newcommand{\nat}{\wb b}
\newcommand{\Tburn}{{T_{burn}}}
\newcommand{\Tw}{{T_{w}}}
\newcommand{\Tmax}{{T_{\max}}}
\newcommand{\Sys}{\textsc{\small{SysId-ARX}}\xspace}
\newcommand{\Gcl}{\mathcal{G}^{cl}}
\newcommand{\Gol}{\mathcal{G}^{ol}}
\newcommand{\LDC}{\textsc{\small{LDC}}\xspace}
\newcommand{\DFC}{\textsc{\small{DFC}}\xspace}
\DeclareMathOperator*{\argmin}{arg\,min}
\newcommand{\reg}{\textsc{\small{Regret}}\xspace}
\newcommand{\OO}{\mathcal{O}}
\DeclareMathOperator{\Tr}{Tr}
\newcommand{\ta}{\tilde{A}}
\newcommand{\tb}{\tilde{B}}
\newcommand{\tc}{\tilde{C}}
\newcommand{\tp}{\mathbf{\tilde{P}}}
\newcommand{\tpb}{\mathbf{\bar{\tilde{P}}}}
\newcommand{\tk}{\tilde{K}}
\newcommand{\tm}{\tilde{M}}
\newcommand{\tf}{\tilde{F}}
\newcommand{\T}{\mathcal T}
\newcommand{\R}{\mathbb{R}}
\newcommand{\boldR}{\mathbb R}
\newcommand{\wh}{\widehat}
\newcommand{\wb}{\overline}
\newtheorem{assumption}{Assumption}[section]
\newtheorem{theorem*}{Theorem}[section]
\newtheorem*{condition}{Persistence of Excitation of $\Mcontrol \in \Mcontrolset_r$ on ARX Systems}
\newtheorem*{condition1}{Persistence of Excitation of Optimal Control Policy on ARX Systems}
\author{%
 \Name{Sahin Lale}$\mathbf{^1}$ \Email{alale@caltech.edu} \\
 \Name{Kamyar Azizzadenesheli}$\mathbf{^2}$  \Email{kamyar@purdue.edu} \\
 \Name{Babak Hassibi}$\mathbf{^1}$ \Email{hassibi@caltech.edu}\\
 \Name{Anima Anandkumar}$\mathbf{^1}$ \Email{anima@caltech.edu}\\
 \addr $\mathbf{^1}$California Institute of Technology\\
 $\mathbf{^2}$Purdue University
}
\begin{document}

\maketitle

\begin{abstract}%
   Autoregressive exogenous (ARX) systems are the general class of input-output dynamical system used for modeling stochastic linear dynamical system (LDS) including partially observable LDS such as LQG systems. In this work, we study the problem of system identification and adaptive control of unknown ARX systems. We provide finite-time learning guarantees for the ARX systems under both open-loop and closed-loop data collection. Using these guarantees, we design adaptive control algorithms for unknown ARX systems with arbitrary strongly convex or convex quadratic regulating costs. Under strongly convex cost functions, we design an adaptive control algorithm based on online gradient descent to design and update the controllers that are constructed via a convex controller reparametrization. We show that our algorithm has $\Tilde{O}(\sqrt{T})$ regret via explore and commit approach and if the model estimates are updated in epochs using closed-loop data collection, it attains the optimal regret of $\text{polylog}(T)$ after $T$ time-steps of interaction. For the case of convex quadratic cost functions, we propose an adaptive control algorithm that deploys the optimism in the face of uncertainty principle to design the controller. In this setting, we show that the explore and commit approach has a regret upper bound of $\Tilde{O}(T^{2/3})$, and the adaptive control with continuous model estimate updates attains $\Tilde{O}(\sqrt{T})$ regret after $T$ time-steps.  
   
\end{abstract}

\begin{keywords}%
  ARX systems, system identification, adaptive control, regret%
\end{keywords}

% \section{Introduction}

% This is where the content of your paper goes.
% \begin{itemize}
%   \item Use the \texttt{\textbackslash documentclass[anon,12pt]\{alt2021\}} option during submission process -- this automatically hides the author names listed under \texttt{\textbackslash altauthor}. Do not include author names in the remainder of the text, and to the extent possible, avoid directly identifying the authors. You should still include all relevant references, including your own, and any other relevant discussion, even if this might allow a reviewer to infer the author identities. Use \texttt{\textbackslash documentclass[final,12pt]\{alt2021\}} only during camera-ready submission.
% \item The \textsf{jmlr} class automatically loads \textsf{natbib}
% and automatically sets the bibliography style, so you don't need to
% use \verb|\bibliographystyle|.
% This sample file has the citations defined in the accompanying
% BibTeX file \texttt{jmlr-sample.bib}. For a parenthetical
% citation use \verb|\citep|. For example: ``ALT 2020 proceedings
% \citep{ALT2020}". For a textual citation use
% \verb|\citet|. For example: ``The proceedings were edited by \citet{ALT2020}''.
% Both commands may take a comma-separated list.

% These commands have optional arguments and have a starred
% version. See the \textsf{natbib} documentation for further
% details.\footnote{Either \texttt{texdoc natbib} or
% \url{http://www.ctan.org/pkg/natbib}}

% \end{itemize}

% % Acknowledgments---Will not appear in anonymized version
% \acks{We thank a bunch of people.}

\section{Introduction}\label{Intro}

\textbf{Autoregressive Exogenous (ARX) Systems: } ARX systems are central dynamical systems in time-series modelings. They represent stochastic linear dynamical systems (LDS) in the input-output form which have a wide range of applicability to real dynamical systems and amenability for precise analysis. Due to their ability to approximate linear systems in a parametric model structure, ARX systems have been crucial in many areas including chemical engineering, power engineering, medicine, economics, and neuroscience \citep{norquay1998model,bacher2009online,fetics1999parametric,huang2009hybrid,burke2005parametric}. The ARX systems have corresponding linear time-invariant (LTI) state-space representations and in their most general form, they can be represented as follows,
\begin{align}
    {x}_{t+1}={A} {x}_t+B u_t+F y_t, \qquad \qquad
    y_{t}=C x_t+e_t. \label{predictor}
\end{align}
The dynamics are governed by $\Theta \!=\! (A, B, C, F)$ where $x_t$ is the internal state, $y_t$ is the output, $u_t$ is the input and $e_t$ is the measurement noise. Notice that by knowing the initial condition $x_0$ and $\Theta$, one can recover the state sequence. These models provide a \emph{general} representation of LDS with \emph{arbitrary} stochastic disturbances. In particular, via different distributions of $e_t$, they are able to model partially observed LDS (PO-LDS) with various process and measurement noises. For instance, LQG control systems, which are the canonical settings in control, can be modeled as ARX systems. In an LQG control system, the process and measurement noises have Gaussian distributions which corresponds (in predictive form) to an ARX system, where $e_t$ has a particular Gaussian distribution determined by the state-space parameters and noise distributions \citep{kailath2000linear}. 

\noindent \paragraph{System Identification and Adaptive Control:} They are the central problems in control theory and reinforcement learning \citep{lai1982least}. System identification aims to learn the unknown dynamics of the system from the collected data, whereas adaptive control pursues the goal of minimizing the cumulative control cost of dynamical systems with unknown dynamics. Thus, adaptive control inherently includes the system identification process to design a favorable controller. The data collection to achieve these tasks can be performed via independent control inputs yielding open-loop data collection, or via feedback controllers resulting in closed-loop data collection~\citep{ljung1999system}. 

\noindent \paragraph{Finite-time System Identification and Adaptive Control:} In contrast to classical results in both of these problems that analyze the asymptotic performances, recently, there has been a flurry of studies that consider the finite-time performance and learning guarantees in both. In finite-time system identification setting pioneered by \citet{campi2002finite,campi2005guaranteed}, currently, the main focus has been on obtaining the optimal learning rate of $1/\sqrt{T}$ after $T$ samples. Using open-loop data collection to avoid correlations in the inputs and outputs, \citet{oymak2018non,sarkar2019finite,tsiamis2019finite,simchowitz2019learning} suggest methods that achieve this rate for stable LDS. However, due to the difficulty in handling the correlations caused by the feedback controller, the closed-loop system identification guarantees are scarce. Recently, \citet{lale2020logarithmic} propose the first finite-time system identification algorithm that attains the optimal learning rate guarantee for both open and closed-loop data collection.

In finite-time adaptive control, the efforts have been centered around achieving sub-linear regret which measures the difference between the cumulative cost of the adaptive controller and the optimal controller that knows the system dynamics. Most of the prior works follow the explore and commit approach. This approach proposes to first use open-loop data collection to solely explore the system and then estimate the system dynamics and fix a policy to be applied for the remaining time-steps \citep{lale2020regret,mania2019certainty,simchowitz2020improper}. The recent introduction of the first finite-time closed-loop system identification algorithm in \citet{lale2020logarithmic} allowed the design of ``truly'' adaptive control algorithms that naturally use past experiences to improve the model estimates and the controller continuously. Deploying closed-loop data collection, \citet{lale2020root,lale2020logarithmic}
provide adaptive control algorithms for PO-LDS that achieve optimal regret results.

\begin{table}
\centering
% \captionsetup{justification=centering}
\caption{Comparison with prior works for PO-LDS. Our results extend similar regret guarantees to general ARX systems with sub-Gaussian noise disturbances, subsuming the prior works.
E\&C $\coloneqq$ Explore-and-commit approach $\quad$ CLU $\coloneqq$ Closed-loop model estimate updates}
\vspace{-1em}
 \begin{tabular}{c c c c c c } 
 \toprule
 \textbf{Work} & \textbf{Regret} & \textbf{Setting} &  \textbf{Cost} &
  \textbf{Noise} & \textbf{Method} \\ 
 \midrule
 \citet{mania2019certainty} & $\sqrt{T}$ & PO-LDS & Str. Convex & Gaussian & E\&C  \\
% %  \hline
 \citet{simchowitz2020improper} & $\sqrt{T}$ & PO-LDS & Str. Convex & Semi-adversarial & E\&C\\
% %  \hline
 \citet{lale2020logarithmic} & polylog$(T)$ & PO-LDS & Str. Convex & Gaussian & CLU   \\
 \citet{lale2020regret} & $T^{2/3}$ & PO-LDS & Convex  & Gaussian & E\&C   \\ 
%  \hline
 \citet{lale2020root} & $\sqrt{T}$ & PO-LDS & Convex & Gaussian & CLU \\
% %   \hline
\textbf{Theorem \ref{thm:reg_s_cvx_exp_commit} } & $\sqrt{T}$ & ARX & Str. Convex & Sub-Gaussian & E\&C \\
\textbf{Theorem \ref{thm:reg_s_cvx_adapt} } & polylog$(T)$ & ARX & Str. Convex & Sub-Gaussian & CLU \\
\textbf{Theorem \ref{thm:reg_cvx_exp_commit} } & $T^{2/3}$ & ARX & Convex & Sub-Gaussian & E\&C \\
\textbf{Theorem \ref{thm:reg_cvx_adapt} } & $\sqrt{T}$ & ARX & Convex & Sub-Gaussian & CLU \\
\bottomrule
\end{tabular}
\label{table:1}
\vspace{-0.4em}
\end{table}

\noindent \paragraph{Contributions:} In this work, we study finite-time system identification and adaptive control problems in ARX modeled systems with sub-Gaussian noise. First, we state the finite-time guarantees for learning the ARX systems that hold for both open and closed-loop data collection. Deploying the least-squares problem introduced in \citet{lale2020logarithmic}, we show that the estimation error of model parameters decays with $\Tilde{O}(1/\sqrt{T})$ rate after collecting $T$ samples with persistent excitation.

Secondly, we study the adaptive control problem in ARX modeled systems with sub-Gaussian noise. Leveraging the finite-time system identification results, we propose adaptive control frameworks for the ARX systems with arbitrary strongly convex or convex quadratic cost functions:

\begin{enumerate} [wide, labelwidth=!, labelindent=0pt, topsep=0pt,itemsep=-0.5ex,font=\bfseries]
    \item \textbf{ARX systems with strongly convex cost functions:} For this cost function setting, which can possibly be time-varying, we provide an adaptive control algorithm framework that deploys online learning for controller design and exploits the strong convexity. Using online gradient descent with a convex policy reparametrization of linear controllers, we show that adaptive control problem turns into an online convex optimization problem and optimal regret results can be achieved in this setting. To this end, we first show that the explore and commit approach, which fixes the model estimate after open-loop data collection, attains regret of $\Tilde{O}(\sqrt{T})$ after $T$ time-steps of interaction via the proposed framework. Here $\Tilde{O}(\cdot)$ presents the order up to logarithmic terms. We then show that if the model estimates are updated in epochs using the data collected in closed-loop, this adaptive control framework of ARX systems yields the optimal regret rate of $\text{polylog}(T)$. 
    \item \textbf{ARX models with fixed convex quadratic cost function:} For this setting, we propose an adaptive control framework that deploys the principle of optimism in the face of uncertainty (\OFU) \citep{auer2002using} to balance exploration vs. exploitation trade-off in the controller design. The \OFU principle prescribes to use the optimal policy of the model that has the lowest optimal cost, \textit{i.e.} the optimistic model, within the plausible set of systems according to system identification guarantees. We show that using this framework with the explore and commit approach yields regret of $\Tilde{O}(T^{2/3})$. Ultimately, we prove that the adaptive control based on \OFU principle attains regret of $\Tilde{O}(\sqrt{T})$ if the model estimates are continuously updated using closed-loop data in ARX systems.
\end{enumerate}

These results subsume the prior works in PO-LDS and extend them to the general class of ARX systems with sub-Gaussian noise which can be adopted in various real-world time-series modelings (Table \ref{table:1}).

\section{Preliminaries}\label{prelim}
The Euclidean norm of a vector $x$ is denoted as $\|x\|_2$. For a given matrix $A$, $\| A \|_2$ denotes its spectral norm, $\| A\|_F$ is its Frobenius norm, $A^\top$ is its transpose, $A^{\dagger}$ is its Moore-Penrose inverse, and $\Tr(A)$ is the trace. $\rho(A)$ denotes the spectral radius of $A$, \textit{i.e.}, the largest absolute value of its eigenvalues. The j-th singular value of a rank-$n$ matrix $A$ is denoted by $\sigma_j(A)$, where $\sigma_{\max}(A )\!\coloneqq \!\!\sigma_1(A) \!\geq\! \sigma_2(A) \!\geq\! \ldots \!\geq\! \sigma_n(A) \!\coloneqq\! \sigma_{\min}(A) \!>\! 0$. $I$ is the identity matrix with appropriate dimensions. $\mathcal{N}(\mu, \Sigma)$ denotes a multivariate normal distribution with mean vector $\mu$ and covariance matrix $\Sigma$.

Consider the unknown ARX model of $\Theta$ given in (\ref{predictor}).
% ,
% \begin{align}
%     {x}_{t+1}&={A} {x}_t+B u_t+F y_t, \nonumber \\
%     y_{t}&=C x_t+e_t, \label{state-space}
% \end{align}
% where $x_t \in \mathbb{R}^{n}$ is the state of the system, $u_t \in \mathbb{R}^{p}$ is the control input, the observation $y_t \in \mathbb{R}^{m}$ is the output of the system and $e_t \in \mathbb{R}^{m}$ is the measurement noise. 
At each time-step $t$, the system is at state $x_t$ and the agent observes $y_t$. Then, the agent applies a control input $u_t$, observes the loss function $\ell_t$, pays the cost of $c_t=\ell_t(y_{t},u_{t})$, and the system evolves to a new $x_{t+1}$ at time step $t+1$.
% The following is the assumption on the noise characteristics.
\begin{assumption}[Sub-Gaussian Noise] \label{general_noise}
There exists a filtration $\left(\mathcal{F}_{t}\right)$ such that for all $t\geq 0$, and $j\in[0,\ldots, m]$, $e_{t,j}${}s are $R^2$-sub-Gaussian, i.e., for any $\gamma \in \mathbb{R}$, $\mathbb{E}\left[\exp \left(\gamma e_{t, j}\right) | \mathcal{F}_{t-1}\right] \leq \exp \left(\gamma^{2} R^2 / 2\right)$ and $\mathbb{E}\left[e_{t} e_{t}^{\top} | \mathcal{F}_{t-1}\right]= \Sigma_E \succ \sigma_e^2 I$ for some $\sigma_e^2 > 0$.
\end{assumption}
Following general construction of ARX models we assume that $A$ is stable such that $\Phi(A) = \sup _{\tau \geq 0} \left\|A^{\tau}\right\|/\rho(A)^{\tau}$ is finite. This is a mild assumption and captures extensive number of systems including detectable partially observable linear dynamical systems \citep{kailath2000linear}.

\section{System Identification}\label{estimation}
Using the dynamics in (\ref{predictor}), for any positive integer $h$, the output of the system can be written as
\begin{equation}
    y_t = \sum\nolimits_{k=0}^{h-1} CA^k \left(Bu_{t-k-1}  \!+\! Fy_{t-k-1} \right) + e_t + CA^{h}x_{t-h}. \label{markov_rollout}
\end{equation}
The behavior of an ARX system is uniquely governed by its Markov parameters. 

\begin{definition} [Markov Parameters] \label{def:markov}
The set of matrices that maps the previous inputs to the output is called input-to-output Markov parameters and the ones that map the previous outputs to the output are denoted as output-to-output Markov parameters of the system $\Theta$. In particular, the matrices that map inputs and outputs to the output in (\ref{markov_rollout}) are the first $h$ parameters of the Markov operator, $\Markov \!=\!\lbrace G_{u\rightarrow y}^{i },G_{y\rightarrow y}^{i} \rbrace_{i\geq 1}$ where $\forall i\!\geq\!1$,  $G_{u\rightarrow y}^{i }\!=\!CA^{i-1}B$ and $G_{y\rightarrow y}^{i}\!=\!CA^{i-1}F $ which are unique.
\end{definition}

\noindent Let $\Markov_{\mathbf{u\rightarrow y}}(h) \!=\![G_{u\rightarrow y}^{1} ~\! G_{u\rightarrow y}^{2}\! ~\!\ldots\!~ G_{u\rightarrow y}^{h}] \!\in\! \R^{m\! \times \! hp}$ and $\Markov_{\mathbf{y\rightarrow y}}(h) \!=\![G_{y\rightarrow y}^{1} ~\! G_{y\rightarrow y}^{2}\! ~\!\ldots\!~ G_{y\rightarrow y}^{h}] \!\in\! \R^{m\! \times \! hm}$ denote the $h$-length Markov parameters matrices. Consider the following $h$-length operator $\mathcal{G}$ and the subsequences of $h$ input-output pairs from the data collected, either open or closed-loop or both,
\begin{equation}
     \mathcal{G} = [\Markov_{\mathbf{u\rightarrow y}}(h) ~~ \Markov_{\mathbf{y\rightarrow y}}(h) ] \in \mathbb{R}^{m \times h(m+p)}, \quad \phi_i \!=\! \![ u_{i-1}^\top \ldots u_{i-h}^\top ~~ y_{i-1}^\top \ldots y_{i-h}^\top ]^\top \in \mathbb{R}^{h(m+p)} 
\end{equation}
for $ h\leq i \leq t$. Using $\mathcal{G}$, at each time step $t$, the output of the system can be written as 
\begin{equation}
	y_t = \mathcal{G} \phi_t + e_t + CA^h x_{t-h}. \label{output}
\end{equation}
Since $A$ is stable, for $h = c_h \log(T)$, for some problem dependent constant $c_h$ and total execution duration of $T$, the last term in (\ref{output}) provides a negligible bias term of $1/T^2$. Therefore, we solve the following regularized least squares problem to estimate the Markov parameters of the system:
\begin{equation}\label{new_lse}
\wh{\mathcal{G}}_t = \argmin_{\mathcal{G}} \lambda \|X\|_F^2 +\! \sum\nolimits_{i=h}^t \|y_i - \mathcal{G} \phi_i\|^2_2.
\end{equation}
The problem in (\ref{new_lse}) is first introduced in \citet{lale2020logarithmic} to recover LQG systems in predictor form, which is a special case of ARX systems with sub-Gaussian noise. The following learning guarantee for (\ref{new_lse}) follows from Theorem 3 of \citet{lale2020logarithmic}, which is presented for i.i.d. Gaussian innovation terms yet holds for sub-Gaussian measurement disturbances of ARX systems.
\begin{theorem}[Learning Markov Parameters of ARX Systems]
	\label{theo:closedloopid}
	Let $\wh{\mathcal{G}}_{t}$ be the solution to (\ref{new_lse}) at time $t$. For the given choice of $h$, define $V_t = \lambda I + \sum_{i=h}^{t} \phi_i \phi_i^\top$.
	Let $\|\mathcal{G}\|_F \leq S$. For $\delta \in (0,1)$, with probability at least $1-\delta$, for all $t\leq T$, $\mathcal{G}$ lies in the set $\mathcal{C}_{\mathcal{G}}(t)$, where 
	\begin{equation*}
	\mathcal{C}_{\mathcal{G}}(t) = \{ \mathcal{G}': \Tr((\wh{\mathcal{G}}_{t} - \mathcal{G}')V_t(\wh{\mathcal{G}}_{t}-\mathcal{G}')^{\top}) \leq \beta_t \},
	\end{equation*}
	for $\beta_t = \big(\sqrt{m R \log (\delta^{-1} \operatorname{det}(V_t )^{1 / 2} \operatorname{det}(\lambda I)^{-1 / 2})} + S\sqrt{\lambda} + t \sqrt{h} / T^2 \big)^2$. Furthermore, for persistently exciting inputs, \textit{i.e.,} $\sigma_{\min}(V_t) \geq \sigma_\star^2 t $ for some $\sigma_\star \!>\! 0$, and bounded $\phi_i$, with high probability, the least square estimate $\wh{\mathcal{G}}_{t}$ obeys $\|\wh{\mathcal{G}}_{t} - \mathcal{G} \|_F = \Tilde{\OO}(1/\sqrt{t}) $
% 	\begin{equation}\label{fro_bound}
% \|\wh{\mathcal{G}}_{t} - \mathcal{G} \|_F \leq \sigma_\star^{-1} t^{-1/2} \sqrt{m R \left( \log(1/\delta) + \frac{h(m+p)}{2} \log  \left(\frac{\lambda(m+p) + t \Upsilon^2}{\lambda(m+p) }\right)\right)} + S\sqrt{\lambda} +\frac{\sqrt{h}}{T}
% \end{equation}
\end{theorem}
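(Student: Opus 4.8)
The plan is to adapt the proof of Theorem 3 of \citet{lale2020logarithmic} essentially without change, since that argument uses only two structural facts, both of which continue to hold for a general ARX system under Assumption \ref{general_noise} irrespective of whether the inputs were generated in open loop or by a feedback law: (i) each regressor $\phi_i$ is $\mathcal{F}_{i-1}$-measurable, because it is assembled only from $u_{i-1},\dots,u_{i-h},y_{i-1},\dots,y_{i-h}$, all of which are determined by time $i-1$; and (ii) $e_i$ is conditionally $R^2$-sub-Gaussian given $\mathcal{F}_{i-1}$ with $\mathbb{E}[e_ie_i^\top\mid\mathcal{F}_{i-1}]=\Sigma_E$. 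In particular no input or output feeding into $\phi_i$ ``peeks at'' $e_i$, so $S_t \coloneqq \sum_{i=h}^{t}\phi_i e_i^\top$ is a matrix-valued martingale in the natural filtration.

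First I would dispose of the truncation bias. Writing (\ref{output}) as $y_i=\mathcal{G}\phi_i+e_i+\eta_i$ with $\eta_i\coloneqq CA^h x_{i-h}$, stability of $A$ gives $\|\eta_i\|\le\|C\|\,\Phi(A)\,\rho(A)^h\max_{j\le T}\|x_j\|$; choosing $h=c_h\log T$ with $c_h$ large enough and bounding $\max_{j\le T}\|x_j\|$ by a polynomial in $T$ on a high-probability event — this envelope follows from the stable recursion for $x_j$, the bounded magnitude of the applied inputs over the horizon, and a sub-Gaussian maximal inequality for the accumulated noise — makes $\|\eta_i\|\le \sqrt{h}/T^2$ for every $i\le T$. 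Thus, on this event, the data obeys a ridge-regression model with conditionally sub-Gaussian noise plus a uniformly small deterministic-magnitude perturbation.

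Next, from the normal equations for (\ref{new_lse}), $(\wh{\mathcal{G}}_t-\mathcal{G})V_t=-\lambda\mathcal{G}+\sum_{i=h}^t e_i\phi_i^\top+\sum_{i=h}^t\eta_i\phi_i^\top$, so $\Tr\big((\wh{\mathcal{G}}_t-\mathcal{G})V_t(\wh{\mathcal{G}}_t-\mathcal{G})^\top\big)=\big\|V_t^{-1/2}\big(\lambda\mathcal{G}^\top-\sum_i\phi_i e_i^\top-\sum_i\phi_i\eta_i^\top\big)\big\|_F^2$. By the triangle inequality in Frobenius norm this is at most $\big(S\sqrt{\lambda}+\|V_t^{-1/2}S_t\|_F+t\sqrt{h}/T^2\big)^2$: the first term uses $V_t\succeq\lambda I$ and $\|\mathcal{G}\|_F\le S$; the last uses $\|V_t^{-1/2}\phi_i\|_2\le 1$ (since $V_t\succeq\phi_i\phi_i^\top$) together with $\|\eta_i\|\le\sqrt h/T^2$ summed over at most $t$ indices. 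The remaining term $\|V_t^{-1/2}S_t\|_F=\sqrt{\Tr(S_t^\top V_t^{-1}S_t)}$ is exactly the self-normalized martingale quantity; the anytime self-normalized tail bound for vector/matrix-valued martingales (which yields uniformity over all $t\le T$ at no extra cost, and which underlies Theorem 3 of \citet{lale2020logarithmic}), applied coordinatewise over the $m$ outputs, bounds it with probability at least $1-\delta$ by $\sqrt{mR\log(\delta^{-1}\det(V_t)^{1/2}\det(\lambda I)^{-1/2})}$. Combining the three bounds yields $\mathcal{G}\in\mathcal{C}_{\mathcal{G}}(t)$ with the stated $\beta_t$.

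Finally, for the rate, the persistent-excitation hypothesis $\sigma_{\min}(V_t)\ge\sigma_\star^2 t$ gives $\sigma_\star^2 t\,\|\wh{\mathcal{G}}_t-\mathcal{G}\|_F^2\le \Tr\big((\wh{\mathcal{G}}_t-\mathcal{G})V_t(\wh{\mathcal{G}}_t-\mathcal{G})^\top\big)\le\beta_t$, hence $\|\wh{\mathcal{G}}_t-\mathcal{G}\|_F\le\sqrt{\beta_t/(\sigma_\star^2 t)}$; boundedness of $\phi_i$ gives $\det(V_t)\le(\lambda+tL^2/(h(m{+}p)))^{h(m+p)}$, so $\beta_t=O\big(h(m{+}p)\,mR\log(tL/(\lambda\delta))+S^2\lambda\big)=\mathrm{polylog}(T)$ while $t\sqrt h/T^2\le\sqrt h/T$ is negligible, whence $\|\wh{\mathcal{G}}_t-\mathcal{G}\|_F=\wt{\OO}(1/\sqrt t)$. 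The only real work, as opposed to transcription of the linear-least-squares argument, is verifying (i) that the $\mathcal{F}_{i-1}$-measurability of $\phi_i$ genuinely survives closed-loop feedback and (ii) the high-probability polynomial envelope on $\|x_{i-h}\|$ needed to make the truncation term $O(1/T^2)$ rigorous; I expect (ii) to be the fussiest step.
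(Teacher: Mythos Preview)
Your proposal is correct and matches the paper's approach. The paper does not give a standalone proof of this theorem at all; it simply states that the result ``follows from Theorem 3 of \citet{lale2020logarithmic}, which is presented for i.i.d.\ Gaussian innovation terms yet holds for sub-Gaussian measurement disturbances of ARX systems,'' and your proposal is precisely a careful unpacking of that claim---the normal-equations decomposition, the $\mathcal{F}_{i-1}$-measurability of $\phi_i$, the self-normalized martingale bound on $\|V_t^{-1/2}S_t\|_F$, the $\sqrt{\lambda}S$ regularization term, the $t\sqrt{h}/T^2$ truncation bias via $\|V_t^{-1/2}\phi_i\|\le 1$, and the $\Tilde{\OO}(1/\sqrt{t})$ rate under persistent excitation via Lemma~\ref{lem:logdet}.
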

% we get 
% $\sqrt{\sigma_{\min}(V_t)}\|\wh{\mathcal{G}}_{t} - \mathcal{G} \|_F \leq \sqrt{m R \left( \log(1/\delta) + \log(\operatorname{det}\left(V_t \right)^{1 / 2}  \operatorname{det}(\lambda I)^{-1 / 2} ) \right)} + S\sqrt{\lambda} +\frac{t \sqrt{h}}{T^2} $. 
This result shows that under persistent of excitation, the least squares problem (\ref{new_lse}) provides consistent estimates and the estimation error decays with the optimal rate. Note that both input-to-output and output-to-output Markov parameters of ARX system are submatrices of $\mathcal{G}$. Therefore, the given bound trivially holds for $\|\Markov_{\mathbf{u\rightarrow y}}(h) - \wh\Markov_{\mathbf{u\rightarrow y}}(h) \|_F$ and $\|\Markov_{\mathbf{y\rightarrow y}}(h) - \wh\Markov_{\mathbf{y\rightarrow y}}(h) \|_F$.

\section{Adaptive Control of ARX Systems with Strongly Convex Cost }

In this section, we will first introduce linear dynamic controllers (LDC) and provide a convex policy reparametrization, disturbance feedback controllers (\DFC) \citep{simchowitz2020improper,lale2020logarithmic}, to approximate LDC controllers. We then provide the details of the setting of ARX systems regarding the loss and regret definition. Finally, we consider two variants of an algorithm that uses \DFC policies in adaptive control of ARX system and provide the regret performances. 

\noindent \textbf{Linear Dynamic Controllers (\LDC):}
An \LDC, $\pi$, is a linear controller with internal state dynamics $s^\pi_{t+1} = A_\pi s_t^\pi + B_\pi y_t$ and $u^\pi_{t} = C_\pi s_t^\pi + D_\pi y_t$ where $s^\pi_t\in \boldR^s$ is the state of the controller, $y_t$ is the input to the controller, \textit{i.e.} the observation from the system, and $u^\pi_t$ is the output of the controller. $(A_\pi, B_\pi, C_\pi, D_\pi)$ control the internal dynamics of the \LDC. \LDC include a large number of controllers including $H_2$ and $H_\infty$ controllers of fully and partially observable LDS \citep{hassibi1999indefinite}. The optimal control law for ARX models with quadratic cost is also an \LDC (Section \ref{sec:convex_quadratic}).

\noindent \textbf{Output uncertainties 
$\nat_t(\mathcal{G})$:} The output can be decomposed to its components via $\Markov$ as follows,
\begin{align*} 
    y_t &= \sum\nolimits_{k=0}^{t-1} G_{u\rightarrow y}^{k+1} u_{t-k-1} + G_{y\rightarrow y}^{k+1} y_{t-k-1}+ CA^t x_0 + e_t .
\end{align*}

The output uncertainties of ARX system at time $t$ is denoted as follows:

\begin{equation} 
   \nat_t(\mathcal{G}) = y_t - \left( \sum\nolimits_{k=0}^{t-1} G_{u\rightarrow y}^{k+1} u_{t-k-1} + G_{y\rightarrow y}^{k+1} y_{t-k-1} \right)= CA^t x_0 + e_t.
\end{equation}

This definition is similar to Nature's output adopted in \citet{simchowitz2020improper,lale2020logarithmic}. It represents the only unknown components on the output. Notice that, one can identify the uncertainty in the output at any time step uniquely using the history of inputs, outputs and the Markov parameters. This gives the ability of counterfactual reasoning, \textit{i.e.}, consider what the output would have been, if the agent had taken different sequence of inputs and observed different outputs. 

\subsection{Adaptive Control Setting}
\noindent \textbf{Disturbance Response Controllers (\DFC):} For adaptive control of ARX systems with strongly convex cost functions, we adopt a convex policy parametrization called \DFC. A \DFC of length $h'$ is defined as a set of parameters, $\Mcontrol(h') \coloneqq \lbrace M^{[i]} \rbrace_{i=0}^{h'-1}$ acting on the last $h'$ output uncertainties, \textit{i.e.}, 
\begin{equation}\label{DFC_policy}
    u^\Mcontrol_t = \sum\nolimits_{i=0}^{h'-1}M^{[i]}\nat_{t-i}(\mathcal{G}).
\end{equation}

This convex policy parameterization follows the classical Youla parameterization \citep{youla1976modern} and used for adaptive control of PO-LDS in \citet{simchowitz2020improper,lale2020logarithmic}. \DFC policies are truncated approximations of LDC policies and for any LDC policy there exists a \DFC policy which provides equivalent performance (see Appendix A)% \ref{apx:Policies})
. 

Define the closed, convex and compact sets of \DFC{}s, $\Mcontrolset$ and $\Mcontrolset_r$, such that the controllers $\Mcontrol(h_0') = \lbrace M^{[i]} \rbrace_{i=0}^{h_0'-1} \in \Mcontrolset$ are bounded and $\Mcontrolset_r$ is an $r$-expansion of $\Mcontrolset$, \textit{i.e.}, for any $\Mcontrol(h_0') \in \Mcontrolset$ we have, $\sum\nolimits_{i\geq 0}^{h_0'-1}\!\|M^{[i]}\| \!\leq\! \kappa_\psi$ and $\Mcontrolset_r = \lbrace \Mcontrol(h')=\Mcontrol(h_0')+ \Delta : \Mcontrol(h_0') \in \Mcontrolset, \sum\nolimits_{i\geq 0}^{h'-1}\|\Delta^{[i]}\| \leq r \kappa_\psi \rbrace$,
where $h'_0 = \lfloor \frac{h'}{2} \rfloor - h$. Therefore, all controllers $\Mcontrol(h') \in \Mcontrolset_r$ are also bounded $\sum\nolimits_{i\geq 0}^{h'-1}\|M^{[i]}\| \leq \kappa_\psi (1+r)$. Throughout the interaction with the system, the agent has access to $\Mcontrolset_r$. 

\noindent \paragraph{Loss function:} The loss function $\ell_t(\cdot,\cdot)$ is strongly convex, smooth, sub-quadratic and Lipschitz with a parameter $L$, such that for all $t$, $0\!\prec\!\strong I\preceq\nabla^2\ell_t(\cdot,\cdot)\preceq\smooth I$ for a finite constant $\smooth$ and for any $\Gamma$ with $\|u\|,\|u'\|, \|y\|,\|y'\| \leq \Gamma$, we have,
\begin{align}
\label{asm:lipschitzloss}|\ell_t(y,u)-\ell_t(y',u') |\leq L\Gamma(\|y-y'\|+\|u-u'\|) \enskip \text{ and } \enskip |\ell_t(y,u)|\leq L\Gamma^2. \end{align}
\noindent \textbf{Regret definition:} Let $\Mcontrol_\star$ be the optimal, in hindsight, \DFC policy in the given set $\Mcontrolset$, \textit{i.e.}, $\Mcontrol_\star \!=\!  \argmin_{\Mcontrol \in \Mcontrolset} \sum\nolimits_{t=1}^{T} \ell_t(y_{t}^\Mcontrol,u_{t}^\Mcontrol)$. For ARX systems with strongly convex loss function, the adaptive control algorithm's performance is evaluated by its regret with respect to $\Mcontrol_\star$ after $T$ steps of interaction and it is denoted as $\reg(T) = \sum\nolimits_{t=1}^T c_t- \ell_t(y^{\Mcontrol_\star},u^{\Mcontrol_\star}).$
% \begin{equation}\label{regret_def}
% \end{equation}

The proposed algorithm for the ARX systems with strongly convex cost is given in Algorithm \ref{algo_strong}. It has two possible approaches depending on the persistence of excitation of given \DFC set $\Mcontrolset_r$: explore and commit approach or adaptive control with closed-loop estimate updates.

\begin{algorithm}[t] 
\caption{Adaptive Control of ARX Systems with Strongly Convex Cost}
  \begin{algorithmic}[1]
 \STATE \textbf{Input:} $\text{ID}$, $T$, $h$, $h'$ $\Tw$, $\tau$, $S>0$, $\delta > 0$, $\eta_t$\\
\STATE \textbf{if} $\text{ID}$ = $\text{Explore \& Commit}$ \textbf{then} Set $T_{\text{warm}} = T_w$, \textbf{else} Set $T_{\text{warm}} = \tau$ \\
 ------ \textsc{\small{Warm-Up}} ------------------------------------------------ \\
\FOR{$t = 0, 1, \ldots, T_{\text{warm}}$}
\STATE Deploy $u_t \!\sim\! \mathcal{N}(0,\sigma_u^2 I)$ and store $\mathcal{D}_{T_{\text{warm}}} \!=\! \lbrace y_t,u_t \rbrace_{t=1}^{T_{\text{warm}}}$ and set $\Mcontrol_t$ as any member of $\Mcontrolset_r$ \\
\ENDFOR 
------ \textsc{\small{Adaptive Control}} ----------------------------------- \\
\FOR{$i = 0, 1, \ldots$}
    \STATE Calculate $\wh{\mathcal{G}}_i$ via (\ref{new_lse}) using $\mathcal{D}_i = \lbrace y_t,u_t \rbrace_{t=1}^{2^i T_{\text{warm}}}$
    \STATE \textbf{if} $\text{ID}$ = $\text{Explore \& Commit}$ \textbf{then} Set $\wh{\mathcal{G}}_i = \wh{\mathcal{G}}_0$ $\rightarrow$ \textsc{\small{In E\&C, only $\wh{\mathcal{G}}_0$ used for control}} \\
    \STATE Compute $\nat_j(\wh{\mathcal{G}}_i) :=y_j - ( \sum\nolimits_{k=0}^{h-1} \wh{G}_{u\rightarrow y}^{k+1} u_{j-k-1} + \wh{G}_{y\rightarrow y}^{k+1} y_{j-k-1} )$, $\forall j \leq t$
    \FOR{$t = 2^i T_{\text{warm}}, \ldots, 2^{i+1}T_{\text{warm}}-1$ }
        \STATE Observe $y_t$, and compute $\nat_t(\wh{\mathcal{G}}_i) :=y_t - ( \sum\nolimits_{k=0}^{h-1} \wh{G}_{u\rightarrow y}^{k+1} u_{t-k-1} + \wh{G}_{y\rightarrow y}^{k+1} y_{t-k-1} )$
        \STATE Commit to $u_t^{\Mcontrol_t} = \sum_{j=0}^{H'-1}M_t^{[j]} \nat_{t-j}(\wh{\mathcal{G}}_i)$, observe $\ell_t$, and pay a cost of $\ell_t(y_t,u_t^{\Mcontrol_t})$
        % \STATE Observe the loss function $\ell_t$
        \STATE Update $\Mcontrol_{t+1}=\proj_{\Mcontrolset_r}\left(\Mcontrol_t-\eta_t\nabla f_t\left(\Mcontrol_t,\wh{\mathcal{G}}_i\right)
        % |_{\Mcontrol=\Mcontrol_{t}}
        \right)$,~ $\mathcal{D}_{t+1}=\mathcal{D}_{t}\cup\lbrace y_t,u_t\rbrace$
    \ENDFOR
\ENDFOR
  \end{algorithmic}
 \label{algo_strong} 
\end{algorithm}

\subsection{Adaptive Control via Explore and Commit Approach}

In the explore and commit approach, Algorithm \ref{algo_strong} has two phases: an exploration (warm-up) phase with the duration of $\Tw = \OO(\sqrt{T})$ and an exploitation phase for the remaining $T-\Tw$ time-steps. 

\noindent \textbf{Warm-up:}  During the warm-up period, Algorithm \ref{algo_strong} applies $u_t \sim \mathcal{N}(0,\sigma_u^2 I)$ in order to recover the Markov parameters of the system. The duration of warm-up $T_w$ is chosen to guarantee reliable estimate of Markov parameters of ARX system and the stability of \DFC controllers in exploitation phase. The exact duration of warm-up is given in Appendix C%\ref{apx:warm_strong}
. 

\noindent \textbf{Exploitation:} At the end of warm-up, Algorithm \ref{algo_strong} estimates the Markov parameters of ARX system, $\mathcal{G}$, using the data gathered in warm-up. It deploys the regularized least-squares estimation of (\ref{new_lse}) to obtain $\wh{\mathcal{G}}$. At each time-step $t$, Algorithm \ref{algo_strong} uses this estimate and the past inputs to approximate the output uncertainties, $\nat_t(\wh{\mathcal{G}}) = y_t - \sum\nolimits_{k=0}^{h-1} \wh{G}_{u\rightarrow y}^{k+1} u_{t-k-1} + \wh{G}_{y\rightarrow y}^{k+1} y_{t-k-1}$. These approximate output uncertainties are then used to execute a \DFC policy $\Mcontrol_t \in \Mcontrolset_r$ as given in (\ref{DFC_policy}). Upon applying the control input, the algorithm observes the output of the system along with the loss function $\ell_t(\cdot,\cdot)$ and pays the cost of $c_t = \ell_t(y_t, u_t^{\Mcontrol_t})$. At each time-step, Algorithm \ref{algo_strong} employs the counterfactual reasoning introduced in \citet{simchowitz2020improper} to compute a counterfactual loss. Briefly, it considers what the loss would be if the current DFC policy has been applied from the beginning. This provides a noisy metric to evaluate the performance of the current \DFC policy. The details of the counterfactual reasoning are in Appendix E%\ref{apx:strong_convex_control}
. Finally, Algorithm \ref{algo_strong} deploys projected online gradient descent on the counterfactual loss to update and keep the \DFC policy within the given set $\Mcontrolset_r$ for the next time-step. This process is repeated for the remaining $T-T_w$ time-steps. 

Note that deploying \DFC policies turns adaptive control problem into an online convex optimization problem which is computationally and statistically efficient. Moreover, using online gradient descent for controller updates exploits the strong convexity grants the following regret rate. 
\begin{theorem}\label{thm:reg_s_cvx_exp_commit}
Given $\Mcontrolset_r$, a closed, compact and convex set of \DFC policies, Algorithm \ref{algo_strong} with explore and commit approach attains $\reg(T)=\Tilde{\OO}(\sqrt{T}$) with high probability.
\end{theorem}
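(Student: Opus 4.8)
The plan is to split $\reg(T)$ into the contribution of the warm-up phase ($t\le\Tw$) and of the exploitation phase ($t>\Tw$), and to show that both are $\Tilde{\OO}(\sqrt T)$ once $\Tw$ is chosen of order $\sqrt T$. For the warm-up, stability of $A$ (finiteness of $\Phi(A)$) together with the sub-Gaussian tails of $e_t$ and of the i.i.d.\ Gaussian exploration inputs $u_t\sim\mathcal{N}(0,\sigma_u^2 I)$ imply, on a single high-probability event obtained by a union bound over the $T$ rounds, that all $x_t,y_t,u_t$ stay bounded by a $\text{polylog}(T)$ quantity; by the boundedness estimate in (\ref{asm:lipschitzloss}) the warm-up cost is then $\OO(\Tw\cdot\text{polylog}(T))=\Tilde{\OO}(\sqrt T)$. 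On the same event the open-loop inputs are persistently exciting: the regressor $\phi_i$ stacks the lagged inputs, which are i.i.d.\ Gaussian, and the lagged outputs, whose conditional second moment is lower bounded through $\Sigma_E\succ\sigma_e^2 I$ in Assumption~\ref{general_noise}, so a matrix concentration bound gives $\sigma_{\min}(V_{\Tw})\ge\sigma_\star^2\Tw$ with high probability. Theorem~\ref{theo:closedloopid} then yields $\eps\coloneqq\|\wh{\G}-\G\|_F=\Tilde{\OO}(1/\sqrt{\Tw})$, and for $\Tw$ large enough $\eps$ is small enough that every controller in $\Mcontrolset_r$ run on $\wh{\G}$ keeps the closed loop stable, so the Lipschitz and smoothness hypotheses persist through exploitation (the $r$-expansion of $\Mcontrolset$ provides exactly the slack absorbing the replacement of $\nat_t(\G)$ by $\nat_t(\wh{\G})$ in the \DFC realization).

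For the exploitation phase I would use the counterfactual-loss decomposition of \citet{simchowitz2020improper,lale2020logarithmic}. Let $F_t(\Mcontrol)$ be the counterfactual loss, i.e.\ the loss that would have been paid at time $t$ had the fixed \DFC $\Mcontrol$ been run from the start on the true system, and let $f_t(\Mcontrol,\wh{\G})$ be the analogous quantity in the estimated model, on which Algorithm~\ref{algo_strong} runs projected online gradient descent. Since \DFC policies induce exponentially decaying memory on a stable closed loop and the updates use step sizes $\eta_t$ of order $1/t$, I would show that the gap between the realized cost $\sum_{t>\Tw}c_t$ and $\sum_{t>\Tw}F_t(\Mcontrol_t)$, comprising the Markov-truncation error, the policy-switching error, the warm-up-state burn-in, and the $\nat_t(\wh{\G})$-vs-$\nat_t(\G)$ mismatch (which I would fold into an $\OO(\eps)$ perturbation of the applied policy), is $\Tilde{\OO}(1)$ plus a term quadratic in $\eps$; and likewise for the benchmark, $\sum_{t>\Tw}\ell_t(y_t^{\Mcontrol_\star},u_t^{\Mcontrol_\star})$ versus $\sum_{t>\Tw}F_t(\Mcontrol_\star)$. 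It then remains to bound $\sum_{t>\Tw}F_t(\Mcontrol_t)-F_t(\Mcontrol_\star)$.

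Let $\wh{\Mcontrol}\coloneqq\argmin_{\Mcontrol\in\Mcontrolset_r}\sum_{t>\Tw}f_t(\Mcontrol,\wh{\G})$, the fixed policy the online iterates converge to, and split
\[
\sum_{t>\Tw}F_t(\Mcontrol_t)-F_t(\Mcontrol_\star)=\underbrace{\sum_{t>\Tw}F_t(\Mcontrol_t)-F_t(\wh{\Mcontrol})}_{\text{(a)}}+\underbrace{\sum_{t>\Tw}F_t(\wh{\Mcontrol})-F_t(\Mcontrol_\star)}_{\text{(b)}}.
\]
Term (b) is a certainty-equivalence gap: $\Mcontrol_\star$ minimizes $\sum_t F_t$ while $\wh{\Mcontrol}$ minimizes the $\eps$-perturbed $\sum_t f_t(\cdot,\wh{\G})$, whose gradient is $\OO(\eps)$-close to that of $\sum_t F_t$; using strong convexity of the counterfactual cost in the \DFC parameters, which is where the curvature $\strong I\preceq\nabla^2\ell_t$ and a persistence-of-excitation property of $\Mcontrolset_r$ on ARX systems enter, gives $\|\wh{\Mcontrol}-\Mcontrol_\star\|=\OO(\eps)$, hence by smoothness $\text{(b)}=\OO(\smooth\,\eps^2\,T)=\Tilde{\OO}(T/\Tw)$. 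Term (a) is controlled by the strongly convex online-learning analysis: with $\eta_t$ of order $1/(\strong t)$ the iterates obey $\sum_{t>\Tw}\|\Mcontrol_t-\wh{\Mcontrol}\|^2=\Tilde{\OO}(1)$ and $\sum_{t>\Tw}f_t(\Mcontrol_t,\wh{\G})-f_t(\wh{\Mcontrol},\wh{\G})=\OO(\log T)$, and combining this with smoothness of $F_t$, the stationarity $\sum_t\nabla f_t(\wh{\Mcontrol},\wh{\G})=0$, the $\OO(\eps)$ bound on $\|\nabla F_t-\nabla f_t\|$, and Cauchy--Schwarz yields $\text{(a)}=\Tilde{\OO}(\sqrt T)$. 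Adding the warm-up cost, the burn-in terms, and $\text{(a)}+\text{(b)}$ with $\eps=\Tilde{\OO}(1/\sqrt{\Tw})$, and taking $\Tw$ of order $\sqrt T$ so that $T/\Tw=\OO(\sqrt T)$, gives $\reg(T)=\Tilde{\OO}(\sqrt T)$.

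The main obstacle is term (b): showing that the model-estimation error enters the committed-phase regret quadratically, $\OO(\eps^2 T)$, rather than linearly, $\OO(\eps T)$, since the linear scaling would only give $\Tilde{\OO}(T^{3/4})$. This rests on establishing that the counterfactual cost is strongly convex and smooth in the \DFC parameters with a well-conditioned, perturbation-stable minimizer, so that an $\eps$-accurate model produces a policy that is $\OO(\eps)$-suboptimal in policy space and therefore only $\OO(\eps^2)$-suboptimal in cost. Carrying this out rigorously, including verifying the required persistence of excitation of the policy class on ARX systems and propagating it through the counterfactual reasoning and the truncation arguments so that the burn-in terms remain $\text{polylog}(T)$, is the technical heart of the proof.
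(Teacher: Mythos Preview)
Your high-level skeleton is sound and would reach $\tilde{\OO}(\sqrt{T})$, but the route you take through the exploitation phase is genuinely different from the paper's, and one of your justifications is off.

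\textbf{How the paper proceeds.} The paper never introduces your intermediate comparator $\wh{\Mcontrol}$. Instead it writes a five-term decomposition (warm-up, algorithm truncation, $f^{\mathrm{pred}}$ policy regret, comparator approximation, comparator truncation) and applies the strongly-convex OCO-with-memory bound (Theorem~8 of \citet{simchowitz2020improper}) directly with a fixed comparator $\Mcontrol_{\mathrm{comp}}\in\Mcontrolset_r$. That theorem outputs a \emph{negative} bonus $-\tfrac{\alpha}{48}\sum_t\|\Mcontrol_t-\Mcontrol_{\mathrm{comp}}\|_F^2$ in addition to the usual $O(\log T)$ regret and the $\tfrac{1}{\alpha}\sum_t\|\epsilon_t\|^2$ gradient-error term. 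The comparator-approximation error (passing from $\Mcontrol_{\mathrm{comp}}$ to $\Mcontrol_\star\in\Mcontrolset$) is then bounded via Young's inequality as $\gamma\sum_t\|\Mcontrol_t-\Mcontrol_{\mathrm{comp}}\|_F^2+O(\eps^2/\gamma)\cdot T$; choosing $\gamma=\alpha/48$ cancels the bonus. Thus the quadratic-in-$\eps$ dependence arises from (i) the $\sum\|\epsilon_t\|^2$ term in the OGD bound and (ii) the $\eps^2/\gamma$ leftover from Young's---not from a certainty-equivalence argument about minimizers.

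\textbf{Where your argument is loose.} Your term~(b) relies on $\|\wh{\Mcontrol}-\Mcontrol_\star\|=O(\eps)$ via strong convexity; this needs care because $\wh{\Mcontrol}$ minimizes over $\Mcontrolset_r$ while $\Mcontrol_\star$ minimizes over $\Mcontrolset$, and either could sit on a boundary. It is fixable (compare $\wh{\Mcontrol}$ to the $\Mcontrolset_r$-minimizer of $\sum F_t$ first), but you should say so. Your term~(a) needs $\sum_t\|\Mcontrol_t-\wh{\Mcontrol}\|^2=\tilde{O}(1)$; the counterfactual losses are strongly convex only in \emph{conditional expectation}, and $\wh{\Mcontrol}$ is a data-dependent hindsight minimizer of the \emph{estimated} losses $f_t(\cdot,\wh{\G})$, so you must first argue that those estimated losses inherit conditional strong convexity from $f_t^{\mathrm{pred}}$ (they do, once $\eps$ is small enough) and then invoke the Theorem~8-type bound with zero gradient error for them. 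The paper avoids this detour by always working with $f_t^{\mathrm{pred}}$ and pushing the model error entirely into the gradient-error term.

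\textbf{A conceptual correction.} You attribute the strong convexity of the counterfactual cost in the \DFC parameters to ``a persistence-of-excitation property of $\Mcontrolset_r$.'' That is not where it comes from, and in fact Theorem~\ref{thm:reg_s_cvx_exp_commit} assumes \emph{no} persistence of excitation on $\Mcontrolset_r$ (that hypothesis enters only in Theorem~\ref{thm:reg_s_cvx_adapt}). The strong convexity constant $\alpha$ is determined by $\strong$ and by $\sigma_e^2$ from Assumption~\ref{general_noise}: since $\tilde u_t(\Mcontrol)=\sum_j M^{[j]}\nat_{t-j}$ is linear in $\Mcontrol$, the Hessian of the counterfactual loss is lower bounded by $\strong$ times the second moment of the $\nat$-sequence, which is controlled by $\Sigma_E\succ\sigma_e^2 I$. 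Persistence of excitation of the \emph{inputs} during warm-up is what drives the $\eps=\tilde{O}(1/\sqrt{\Tw})$ estimation rate, not the curvature of the counterfactual cost.
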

The proof is in Appendix E%\ref{apx:strong_convex_control}
. In the proof, we first show that the choice of $\Tw$ guarantees that the open-loop data is persistently exciting and the Markov parameter estimates are refined. Then, we show that the estimates of the output uncertainties, the \DFC policy inputs and the outputs of the ARX system are bounded. Following the regret decomposition of Theorem 5 of \citet{simchowitz2020improper}, we show that with the choice of $\Tw$, the regret of running gradient descent on strongly convex losses scales quadratically with the Markov parameters estimation error. This roughly gives $\reg(T) \!=\! \Tilde{\OO} \left(\Tw \!+\! (T\!-\!\Tw)/(\sqrt{\Tw})^2 \right)$ which is minimized by $\Tw \!=\! \OO(\sqrt{T})$, giving the advertised bound.

\subsection{Adaptive Control with Closed-Loop Model Estimate Updates}

Prior to describing Algorithm \ref{algo_strong} with closed-loop model estimate updates, we need a further condition on the sets $\Mcontrolset$ and $\Mcontrolset_r$, such that the \DFC policies in these sets persistently excite the underlying ARX system. The exact definition of the persistence of excitation is given in Appendix B%\ref{apx:persistence}
. Note that this condition is mild and briefly implies having a full row rank condition on a significantly wide matrix that maps past $e_t$ to inputs and outputs. One can also show that if a controller satisfies this, then there exists a neighborhood around it that consists of persistently exciting controllers. In the adaptive control with closed-loop model estimates approach, Algorithm \ref{algo_strong} also has two phases: a fixed length warm-up phase and an adaptive control phase in epochs.

\noindent \textbf{Warm-up:} Algorithm \ref{algo_strong} applies $u_t\! \sim\! \mathcal{N}(0,\sigma_u^2 I)$ for a fixed duration of $\tau$ that solely depends on the underlying system. This phase guarantees the access to a refined first estimate of the system, the persistence of excitation and the stability of the controllers during adaptive control.  

\noindent \textbf{Adaptive control in epochs:} After warm-up, Algorithm \ref{algo_strong} starts controlling the system and operates in epochs with doubling length, \textit{i.e.}, the $i$'th epoch is of duration $2^{i-1}\tau$ for $i\!\geq\!1$. Unlike the explore and commit approach, at the beginning of each epoch, it uses all the data gathered so far to estimate the Markov parameters via (\ref{new_lse}). It then uses this estimate throughout the epoch to approximate the output uncertainties and  implement the \DFC policies. At each time step, the \DFC policies are updated via projected online gradient descent on the computed counterfactual loss. The main difference from the explore and commit approach is that Algorithm \ref{algo_strong} updates the model estimates during adaptive control which further refines the estimates and improves the controllers. 
\begin{theorem}\label{thm:reg_s_cvx_adapt}
Given $\Mcontrolset_r$ with \DFC{}s that persistently excite the underlying ARX system, Algorithm \ref{algo_strong} with closed-loop model estimate updates attains $\reg(T)=\text{polylog}(T)$, with high probability.
\end{theorem}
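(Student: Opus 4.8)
The plan is an epoch-based analysis that couples two effects: the logarithmic regret of online gradient descent (OGD) on strongly convex losses \emph{within} each epoch, and the fact that closed-loop data generated by persistently exciting \DFC policies drives the Markov-parameter estimation error down at the optimal rate $\Tilde{\OO}(1/\sqrt{t})$ \emph{across} epochs. First I would fix a high-probability event on which (i) the warm-up of length $\tau$ yields a first estimate $\wh{\mathcal{G}}_0$ with $\|\wh{\mathcal{G}}_0-\mathcal{G}\|_F$ below a problem-dependent threshold that guarantees internal stability of every \DFC in $\Mcontrolset_r$ when driven by the estimated uncertainties $\nat_{\cdot}(\wh{\mathcal{G}}_0)$, and (ii) the self-normalized martingale bound behind Theorem~\ref{theo:closedloopid} holds uniformly for all $t\le T$.

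Next I would argue by induction over epochs $i\ge 1$, the $i$-th having length $2^{i-1}\tau$ and cumulative horizon $T_i \asymp 2^{i-1}\tau$. The inductive hypothesis is that entering epoch $i$ the internal state, outputs and inputs are bounded by problem-dependent constants (so the regressors $\phi_t$ stay bounded), and that the closed-loop data gathered so far is persistently exciting, $\sigma_{\min}(V_{T_i})\ge \sigma_\star^2 T_i$. The persistence-of-excitation condition on $\Mcontrolset$ (Appendix~B) ensures every \DFC executed during adaptive control injects enough of the uncertainty sequence $e_t$ into $\phi_t$; since the OGD iterates actually run lie in the $r$-expansion $\Mcontrolset_r$ and are computed with $\wh{\mathcal{G}}_i$ rather than $\mathcal{G}$, I would use the neighborhood-robustness of the PE property (a persistently exciting \DFC has a neighborhood of persistently exciting \DFC{}s) together with the smallness of $\|\wh{\mathcal{G}}_i-\mathcal{G}\|_F$ to conclude that the executed controllers also persistently excite. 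Feeding $\sigma_{\min}(V_{T_i})\ge \sigma_\star^2 T_i$ and boundedness of $\phi_t$ into Theorem~\ref{theo:closedloopid} yields $\|\wh{\mathcal{G}}_i-\mathcal{G}\|_F = \Tilde{\OO}(1/\sqrt{T_i})$, which re-establishes stability and boundedness for epoch $i{+}1$ and closes the induction.

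For the regret I would use the decomposition of Theorem~5 of \citet{simchowitz2020improper}, applied epoch by epoch: the regret accrued in epoch $i$ by projected OGD-with-memory on the counterfactual loss splits into (a) the OGD-with-memory regret, which for $\strong$-strongly convex, smooth, Lipschitz losses is $\OO(\mathrm{poly}(\log(2^{i}\tau)))$; (b) the comparator approximation error from truncating \LDC{} policies to \DFC{} policies and from the gap between $\Mcontrolset$ and $\Mcontrolset_r$, made of order $1/T$ by the choice of $h,h'$; and (c) the estimation-dependent terms, namely the cost of acting on $\wh{\mathcal{G}}_i$ instead of $\mathcal{G}$, which scales like $(\text{epoch length})\cdot\|\wh{\mathcal{G}}_i-\mathcal{G}\|_F^2 = \Tilde{\OO}(2^{i-1}\tau)\cdot\Tilde{\OO}(1/T_{i-1}) = \Tilde{\OO}(1)$ per epoch (the key cancellation), plus the error of replacing the true loss with the counterfactual loss and the burn-in of the uncertainty approximation, both controlled by $\|\wh{\mathcal{G}}_i-\mathcal{G}\|_F$ and $h$. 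Summing (a)--(c) over the $\OO(\log(T/\tau))$ epochs and adding the $\OO(\tau)$ warm-up cost gives $\reg(T)=\text{polylog}(T)$.

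The main obstacle I anticipate is the bootstrapping among persistence of excitation, estimation accuracy, and closed-loop stability: the PE lower bound on $\sigma_{\min}(V_t)$ is exactly what makes $\wh{\mathcal{G}}_i$ accurate, yet it must be shown to survive the facts that the executed controller is a moving OGD iterate and that it is computed from the \emph{wrong} Markov parameters, while the closed-loop signals must simultaneously be kept bounded for the sub-Gaussian self-normalized bounds to apply --- all of this has to live inside one induction whose base case is supplied by a constant-length warm-up $\tau$. Making ``a neighborhood of a persistently exciting \DFC is persistently exciting'' quantitative enough to absorb the $\Tilde{\OO}(1/\sqrt{T_i})$ perturbation, with constants uniform over all epochs, is the crux.
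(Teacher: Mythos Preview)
Your proposal is correct and follows essentially the same route as the paper: the same regret decomposition (Theorem~5 of \citet{simchowitz2020improper}), the same quadratic dependence on the Markov-parameter error bought by strong convexity, and the same per-epoch cancellation $(\text{epoch length})\cdot\|\wh{\mathcal{G}}_i-\mathcal{G}\|_F^2=\Tilde{\OO}(1)$ summed over $\OO(\log T)$ epochs. For the bootstrapping obstacle you flag as the crux, the paper resolves it just as you outline, except that instead of a pointwise ``neighborhood-robustness'' argument it uses an $\epsilon$-net covering of $\Mcontrolset_r$ together with matrix Azuma to obtain a persistence-of-excitation lower bound on $\sigma_{\min}(\sum_i \phi_i\phi_i^\top)$ that is uniform over all controllers the OGD iterates can visit.
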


The proof is in Appendix E %\ref{apx:strong_convex_control}
and it follows similarly with Theorem \ref{thm:reg_s_cvx_exp_commit}. One major difference that allows to achieve the optimal regret rate is the use of data collected during adaptive control to improve the Markov parameter estimates. This approach roughly gives the following decomposition 
$\reg(T) = \OO (\tau + \text{polylog}(T) \sum_{i=1}^{\log(T)} 2^{i-1} \tau  / (\sqrt{2^{i-1}\tau})^2  )$. Notice that unlike explore and commit approach, the estimation error decays at each epoch gives the advertised logarithmic regret.

\section{Adaptive Control of ARX Systems with Convex Quadratic Cost } \label{sec:convex_quadratic}

In this section, we present the setting of ARX systems with convex quadratic cost and the regret definition that competes against the optimal controller for this setting. Finally, we propose an optimism based adaptive control algorithm with two variants and provide the regret guarantees.

\subsection{Adaptive Control Setting} 
The unknown ARX system belongs to a set $\mathcal{S}$ which consists of systems that are $(A,B)$ and $(A,F)$ controllable and $(A,C)$ observable. The ARX system has quadratic cost on $u_t$ and $y_t$, \textit{i.e.}, $c_t = y_t^\top Q y_t + u_t^\top R u_t$ where $Q \succeq 0$ and $R\succ 0$, hence the cost is convex but not strongly convex. For this ARX system, the minimum average expected cost problem is given as follows
\begin{equation*} 
J_{\star}(\Theta) \!=\!  \lim _{T \rightarrow \infty} \min _{u=[u_{1}, \ldots, u_{T}]} \frac{1}{T} \mathbb{E}\left[\sum\nolimits_{t=1}^{T} y_{t}^{\top} Q y_{t}+u_{t}^{\top} R u_{t}\right].
\end{equation*}
Using the average cost optimality equation, one can derive the optimal control law for this problem (Appendix G)%\ref{apx:bellman})
. The optimal control law of ARX systems, $\pi^*$, is a linear feedback policy,
\begin{equation} \label{arx_optimal}
    u_t^* = K_x^* x_t + K_y^* y_t = - (R + B^\top \mathbf{P} B)^{-1} B^\top \mathbf{P}  \left( A x_t + F y_t \right)
\end{equation}
where $\mathbf{P}$ is the unique positive semidefinite solution to the discrete-time algebraic Riccati equation:
\begin{equation} \label{dare}
    \mathbf{P} = C^\top Q C + (A+FC)^\top \mathbf{P} (A+FC) - (A+FC)^\top \mathbf{P} B (R+B^\top \mathbf{P} B)^{-1} B^\top \mathbf{P} (A+FC).
\end{equation}
Note that $\pi^*$ is an LDC policy with the optimal minimum average expected cost of $J_{\star}(\Theta) = \Tr(\Sigma_E (Q \!+\! F^\top (\mathbf{P} \!-\! \mathbf{P} B (R \!+\! B^\top \mathbf{P} B)^{-1} B^\top \mathbf{P})F))$. We assume that the systems in the set $\mathcal{S}$ are \emph{contractible} such that the optimal controller produces contractive closed-loop system dynamics for the state and the output, \textit{i.e.} $\|A + B K_x^* \| \leq \rho < 1$ and $\|F + B K_y^* \| \leq \upsilon < 1$. Finally, the regret measure in this setting is $\reg(T) = \sum_{t=0}^T (c_t - J_*(\Theta))$. 

\noindent \textbf{Optimism in the face of uncertainty (\OFU) principle:} \OFU principle has been widely adopted in sequential decision making tasks in order to balance exploration and exploitation. It suggests to estimate the model up to confidence interval and proposes to act according to the optimal controller of the model that has the lowest optimal cost within the confidence interval, \textit{i.e.}, the optimistic model. For adaptive control in this setting, we deploy the controllers designed via \OFU principle. 

The proposed algorithm for the ARX systems with convex quadratic cost is given in Algorithm \ref{algo_quad}. It has two variants depending on the persistence of excitation of the optimal controller $\pi^*$: explore and commit approach or adaptive control with closed-loop estimate updates.

\begin{algorithm}[t] 
\caption{Adaptive Control of ARX Systems with Convex Quadratic Cost}
  \begin{algorithmic}[1]
 \STATE \textbf{Input:} $\text{ID}$, $T$, $\Tw$, $\tau$, $h$, $S>0$, $\delta > 0$, $n$, $m$, $p$, $Q$, $R$, $\rho$, $\upsilon$ \\
\STATE \textbf{if} $\text{ID}$ = $\text{Explore \& Commit}$ \textbf{then} Set $T_{\text{warm}} = T_w$, \textbf{else} Set $T_{\text{warm}} = \tau$ \\
 ------ \textsc{\small{Warm-Up}} ------------------------------------------------ \\
\FOR{$t = 0, 1, \ldots, T_{\text{warm}}$}
\STATE Deploy $u_t \!\sim\! \mathcal{N}(0,\sigma_u^2 I)$ and store $\mathcal{D}_0 \!=\! \lbrace y_t,u_t \rbrace_{t=1}^{T_{\text{warm}}}$ \\
\ENDFOR 
------ \textsc{\small{Adaptive Control}} ----------------------------------- \\
\FOR{$i = 0, 1, \ldots$}
    \STATE Calculate $\wh{\mathcal{G}}_i$ via (\ref{new_lse}) using $\mathcal{D}_i = \lbrace y_t,u_t \rbrace_{t=1}^{2^i T_{\text{warm}}}$
    \STATE Deploy \Sys($h,\wh{\mathcal{G}}_i,n$) for $\hat{A}_i, \hat{B}_i, \hat{C}_i, \hat{F}_i$
    \STATE Construct $\mathcal{C}_i \coloneqq \{\mathcal{C}_A(i), \mathcal{C}_B(i), \mathcal{C}_C(i),$ $\mathcal{C}_F(i)\}$ s.t. w.h.p.
    $(A,B,C,F) \!\in\! \mathcal{C}_i$ \\
    \STATE Find a $\ttt_i = (\ta_i,\tb_i,\tc_i,\tilde{F}_i ) \in \mathcal{C}_i \cap \mathcal{S} $ s.t. \quad
    $ J(\ttt_i) \leq \inf_{\Theta' \in \mathcal{C}_i \cap \mathcal{S}} J(\Theta') + T^{-1}$\\
    \STATE \textbf{if} $\text{ID}$ = $\text{Explore \& Commit}$ \textbf{then} Set $\ttt_i = \ttt_0$ $\rightarrow$ \textsc{\small{In E\&C, only $\ttt_0$ used for control}} \\
     \FOR{$t = 2^i T_{\text{warm}}, \ldots, 2^{i+1}T_{\text{warm}}-1$ }
            \STATE Execute the optimal controller for $\ttt_i$
    \ENDFOR
\ENDFOR
  \end{algorithmic}
 \label{algo_quad} 
\end{algorithm}

\subsection{Adaptive Control via Explore and Commit Approach}
Similar to prior setting, in the explore and commit approach, Algorithm \ref{algo_quad} has two phases: an exploration (warm-up) phase with the duration of $\Tw = \OO(T^{2/3})$ and an exploitation phase. 

\noindent \textbf{Warm-up:} Algorithm \ref{algo_quad} uses $u_t \!\sim\! \mathcal{N}(0, \sigma_u^2 I)$ for exploration. The exact $\Tw$ is given in Appendix D %\ref{apx:warm_quad}
and it guarantees reliable estimation of system parameters and the stability of \OFU based controller. 

\noindent \paragraph{Exploitation:} At the end of warm-up, Algorithm \ref{algo_quad} estimates the Markov parameters of ARX system via (\ref{new_lse}) and constructs confidence sets ($\mathcal{C}_A, \mathcal{C}_B, \mathcal{C}_C, \mathcal{C}_F$) for the system parameters up to similarity transform using $\Sys$, a variant of Ho-Kalman realization algorithm~\citep{ho1966effective}. The procedure follows similarly with SYS-ID of \citet{lale2020root} and the details are given in Appendix F%\ref{apx:quad_control}
. Algorithm \ref{algo_quad} then deploys the \OFU principle and chooses the optimistic system parameters, $\ttt$, that lie in the intersection of the confidence sets and $\mathcal{S}$. Finally, Algorithm \ref{algo_quad} constructs the optimal control law for $\ttt$ via (\ref{arx_optimal}) and (\ref{dare}) and executes it for the remaining $T-\Tw$ time-steps. 

\begin{theorem}\label{thm:reg_cvx_exp_commit}
Given an unknown ARX system with convex quadratic cost, Algorithm \ref{algo_quad} with explore and commit approach attains $\reg(T) = \tilde{\OO}(T^{2/3})$, with high probability. 
\end{theorem}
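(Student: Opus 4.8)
The plan is the usual explore-and-commit split: bound the warm-up regret by $\tilde{\OO}(\Tw)$, bound the exploitation regret by $\tilde{\OO}(T/\sqrt{\Tw}+\sqrt{T})$, and pick $\Tw=\tilde{\OO}(T^{2/3})$ to balance. For the warm-up, since $u_t\sim\mathcal{N}(0,\sigma_u^2I)$ and $A$ is stable (so the open-loop system is mixing), a union bound with sub-Gaussian tails gives $\|x_t\|,\|y_t\|,\|u_t\|=\tilde{\OO}(1)$ for all $t\le\Tw$ with high probability, hence $c_t=\tilde{\OO}(1)$ and the warm-up contributes $\tilde{\OO}(\Tw)$. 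Simultaneously I would show these inputs are persistently exciting, $\sigma_{\min}(V_{\Tw})\ge\sigma_\star^2\Tw$ with $\sigma_\star^2=\Omega(\sigma_u^2)$ (anti-concentration of the regressor Gram matrix, standard in the identification literature), and that the $\phi_i$ are bounded; Theorem~\ref{theo:closedloopid} then gives $\|\wh{\mathcal{G}}_0-\mathcal{G}\|_F=\tilde{\OO}(1/\sqrt{\Tw})=:\eps$.

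Next I would push $\eps$ through $\Sys$: using the Hankel-singular-value lower bound of the true realization, the Ho--Kalman analysis of \citet{lale2020root} propagates the Markov-parameter error to $\|\hat A_0-A\|,\|\hat B_0-B\|,\|\hat C_0-C\|,\|\hat F_0-F\|=\tilde{\OO}(\eps)$ up to a common similarity transform, so the sets $\mathcal{C}_0$ have radius $\tilde{\OO}(\eps)$ and, on the same event, contain $\Theta$; hence $\Theta\in\mathcal{C}_0\cap\mathcal{S}$ and the optimistic choice satisfies $J(\ttt_0)\le\inf_{\Theta'\in\mathcal{C}_0\cap\mathcal{S}}J(\Theta')+T^{-1}\le J_\star(\Theta)+T^{-1}$ with $\|\ttt_0-\Theta\|=\tilde{\OO}(\eps)$. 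Because the systems in $\mathcal{S}$ are controllable, observable and contractible, the \textsc{dare} solution $\tp$ is Lipschitz on $\mathcal{S}$ (Riccati perturbation theory), so the optimal gain $\tk_0$ of $\ttt_0$ obeys $\|\tk_0-K^\star\|=\tilde{\OO}(\eps)$ for $K^\star=(K_x^*,K_y^*)$; taking $\Tw$ large enough (explicit value in Appendix~D) makes $\tk_0$ produce a contractive closed loop on the true system $\Theta$ as well, with factors only slightly worse than $\rho,\upsilon$.

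For the exploitation phase I would decompose, for each $t>\Tw$, $c_t-J_\star(\Theta)=\big(c_t-J_\Theta(\tk_0)\big)+\big(J_\Theta(\tk_0)-J(\ttt_0)\big)+\big(J(\ttt_0)-J_\star(\Theta)\big)$, where $J_\Theta(\tk_0)$ is the stationary average cost of running $\tk_0$ on $\Theta$ and $J(\ttt_0)=J_{\ttt_0}(\tk_0)$ since $\tk_0$ is optimal for $\ttt_0$. The last term is $\le T^{-1}$ by optimism, contributing $\OO(1)$ overall. The middle term is the cost gap of a single contractive controller evaluated on two $\tilde{\OO}(\eps)$-close models; perturbing the closed-loop Lyapunov/cost equations (licensed by the contraction) bounds it by $\tilde{\OO}(\eps)$ per step, i.e.\ $\tilde{\OO}(T/\sqrt{\Tw})$ in total --- crucially only $O(\eps)$, not $O(\eps^2)$, since \OFU commits to the optimistic rather than the certainty-equivalent model, which is exactly what caps the rate at $T^{2/3}$ (mirroring \citet{lale2020regret}). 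The first term is centered: the contraction gives high-probability $\tilde{\OO}(1)$ bounds on $\|y_t\|,\|u_t\|$ throughout exploitation, and $c_t$ is then a bounded quadratic form of the sub-Gaussian noise whose conditional mean is $J_\Theta(\tk_0)$ up to a geometrically decaying transient; a Hanson--Wright/Azuma-type argument bounds $\sum_{t>\Tw}\big(c_t-J_\Theta(\tk_0)\big)$ by $\tilde{\OO}(\sqrt{T})$. Summing, $\reg(T)=\tilde{\OO}\big(\Tw+T/\sqrt{\Tw}+\sqrt{T}\big)$, minimized at $\Tw=\tilde{\OO}(T^{2/3})$, which gives $\reg(T)=\tilde{\OO}(T^{2/3})$ with high probability.

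The hard part will be the second step: turning the Ho--Kalman realization error into a controller that is \emph{provably stabilizing} for the true (unknown) system and whose cost gap on it is controlled \emph{uniformly} over the confidence set. This requires Riccati and closed-loop Lyapunov perturbation bounds that remain valid even though $Q$ is only positive semidefinite --- the contractibility assumption on $\mathcal{S}$ is precisely what rescues these estimates --- together with care that the Ho--Kalman similarity transform (under which the quadratic cost is invariant) does not contaminate the bounds.
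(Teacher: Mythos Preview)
Your argument is correct at the level of a sketch, but the decomposition you choose is \emph{genuinely different} from the paper's. The paper does not split the exploitation regret as $(c_t-J_\Theta(\tk_0))+(J_\Theta(\tk_0)-J(\ttt_0))+(J(\ttt_0)-J_\star(\Theta))$ and then appeal to Lyapunov/Riccati perturbation for the middle term. Instead it derives the Bellman optimality (average-cost) equation for the ARX system explicitly, writes it for the \emph{optimistic} model $\ttt_0$, and telescopes the relative value function along the true trajectory. This produces five terms $R_1,\dots,R_5$: a value-function telescoping term and a cost telescoping term (each $\tilde{\OO}(\sqrt{T})$ via Azuma plus an $O(T\eps)$ model-mismatch piece), and three pure model-mismatch terms coming from $\tc$ vs.\ $C$, $(\ta+\tf\tc)$ vs.\ $(A+FC)$, and $\tf$ vs.\ $F$ inside the Bellman recursion, each $\tilde{\OO}((T-\Tw)/\sqrt{\Tw})$. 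Optimism enters only to replace $J_\star(\ttt_0)$ by $J_\star(\Theta)+T^{-1}$, exactly as you use it.

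What each buys: your route is more elementary in that it avoids deriving the ARX Bellman equation and the associated relative value function; it needs only that the stationary cost of a fixed stabilizing controller is Lipschitz in the model (Lyapunov perturbation), plus a separate concentration argument for $\sum_t(c_t-J_\Theta(\tk_0))$. The paper's route is more mechanical once the Bellman equation is in hand, keeps the model-mismatch accounting localized to explicit quadratic forms in $(\ta-A,\tb-B,\tc-C,\tf-F)$, and---importantly---extends verbatim to the closed-loop-update variant (Theorem~\ref{thm:reg_cvx_adapt}), where the controller changes across epochs and your stationary-cost comparison $J_\Theta(\tk_0)-J(\ttt_0)$ would have to be redone epoch by epoch. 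Both land at $\reg(T)=\tilde{\OO}(\Tw+(T-\Tw)/\sqrt{\Tw})$, minimized at $\Tw=T^{2/3}$.
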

The proof is in Appendix F%\ref{apx:quad_control}
. In the proof, we first show that the choice of $\Tw$ guarantees persistence of excitation in open-loop data and the stability of inputs and outputs. Then, we derive the Bellman optimality equation for ARX systems which we use for decomposing regret via \OFU principle. This roughly gives $\reg(T) \!=\! \Tilde{\OO} \left(\Tw \!+\! (T\!-\!\Tw)/\sqrt{\Tw} \right)$ which is minimized by $\Tw \!=\! \OO(T^{2/3})$.

\subsection{Adaptive Control with Closed-Loop Model Estimate Updates}
Before describing Algorithm \ref{algo_quad} with closed-loop model estimate updates, we need a further condition such that the optimal controller for the underlying ARX system persistently excited the system. This is again a mild condition and briefly implies that a significantly wide matrix which maps the past $e_t$ to inputs and outputs and formed via optimal controller is full row rank. The precise condition is given in Appendix B%\ref{apx:persistence}
. Note that if the system parameter estimates are accurate enough, the controller designed with system parameter estimates persistently excite the ARX system. Similar to strongly convex cost setting, in the adaptive control with closed-loop estimates approach, Algorithm \ref{algo_quad} has two phases: a fixed length warm-up phase and an adaptive control in epochs.

\noindent \textbf{Warm-up:} Algorithm \ref{algo_quad} uses $u_t \sim \mathcal{N}(0, \sigma_u^2 I)$ for a fixed warm-up duration $\tau$ which grants refined estimates of the system parameters, persistence of excitation and stability for adaptive control phase. 

\noindent \textbf{Adaptive control in epochs:} After warm-up, Algorithm \ref{algo_quad} starts adaptive control in doubling length epochs, \textit{i.e.}, $i$'th epoch has the duration of $2^{i-1}\tau$. At the beginning of $i$'th epoch, it estimates the system parameters via (\ref{new_lse}), constructs the confidence sets and deploys \OFU principle to recover an optimistic model, $\ttt_i$. Finally, it executes the optimal control law for $\ttt_i$ until the end of epoch $i$. Thus, the main difference from explore and commit approach is the use of closed-loop data to further refine the model estimates. This improves the regret performance and the proof is in Appendix F%\ref{apx:quad_control}
.  

\begin{theorem}\label{thm:reg_cvx_adapt}
Given an unknown ARX system with convex quadratic cost whose optimal controller persistently excites the system, Algorithm \ref{algo_quad} with closed-loop model estimate updates attains $\reg(T) = \tilde{\OO}(\sqrt{T})$, with high probability.
\end{theorem}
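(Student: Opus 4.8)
The plan is to adapt the optimism-based regret analysis of \citet{lale2020root} for PO-LDS to general ARX systems, organized around the doubling-epoch structure of Algorithm \ref{algo_quad} and a bootstrap (self-bounding) induction over epochs. The backbone of the induction is: persistence of excitation during epoch $i$ produces a sharp Markov-parameter estimate at the start of epoch $i+1$ via Theorem \ref{theo:closedloopid}, which keeps the optimistic model close to the true $\Theta$, which in turn keeps the optimistic controller both stabilizing and persistently exciting on the true system during epoch $i+1$, closing the loop.

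\textbf{Base case and inductive step.} First I would show that the fixed warm-up length $\tau$, chosen in terms of system constants only ($\Phi(A)$, $\sigma_e$, the controllability/observability margins of systems in $\S$, the contractibility margins $\rho,\upsilon$, and the persistence-of-excitation constant of $\pi^\star$), guarantees with high probability that the open-loop data is persistently exciting, so by Theorem \ref{theo:closedloopid} $\|\wh{\mathcal G}_0 - \mathcal G\|_F = \tilde{\OO}(1/\sqrt\tau)$; propagating this through the Ho-Kalman–type realization \Sys (the error analysis of \citet{lale2020root}, using the singular-value gaps of the associated Hankel matrix) yields $\|\hat A_0-A\|,\ldots,\|\hat F_0-F\| = \tilde{\OO}(1/\sqrt\tau)$ up to a common similarity transform, so $\Theta\in\C_0\cap\S$ and $\C_0$ has radius $\tilde{\OO}(1/\sqrt\tau)$; and for $\tau$ large enough this radius lies below the threshold at which continuity of the DARE solution (\ref{dare}) forces the optimal controller of any $\Theta'\in\C_0\cap\S$, in particular $\ttt_0$, to be stabilizing for the true system and, by the persistence-of-excitation neighborhood property, to persistently excite it. For the inductive step $i\to i+1$: assuming the closed loop in epoch $i$ is stable and persistently exciting, the state, output and input sequences are bounded by $\mathrm{polylog}(T)$ with high probability (stability plus the sub-Gaussian tails of Assumption \ref{general_noise}), so Theorem \ref{theo:closedloopid} applied at time $\Theta(2^i\tau)$ gives $\epsilon_{i+1}:=\|\wh{\mathcal G}_{i+1}-\mathcal G\|_F=\tilde{\OO}(1/\sqrt{2^i\tau})$; Ho-Kalman again transfers this to $\C_{i+1}$ (radius $\tilde{\OO}(\epsilon_{i+1})$, containing $\Theta$), which for $\tau$ large remains below the stability/persistence-of-excitation threshold, so $\ttt_{i+1}$ stabilizes and persistently excites the true system, closing the induction.

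\textbf{OFU regret decomposition via the ARX Bellman equation.} For each epoch $i$ I would invoke the average-cost Bellman optimality equation derived in Appendix G for the optimistic model $\ttt_i$ (with quadratic value matrix $\widetilde{\mathbf P}_i$) together with optimism, $J(\ttt_i)\le J_\star(\Theta)+T^{-1}$ since $\Theta\in\C_i\cap\S$ w.h.p. Expanding $c_t-J_\star(\Theta)$ with this equation yields, per step inside epoch $i$: a telescoping term in $x_t^\top\widetilde{\mathbf P}_i x_t$ (within-epoch sum $\tilde{\OO}(1)$ up to $\mathrm{polylog}(T)$, with an $\tilde{\OO}(1)$ restart cost at each epoch boundary), a model-mismatch bias bounded by $\|\ttt_i-\Theta\|\le\tilde{\OO}(\epsilon_i)$ times the $\mathrm{polylog}(T)$-bounded signal norms, hence $\tilde{\OO}(\epsilon_i)$ per step, and a martingale-difference term in $e_t$. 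Summing over the epoch of length $\Theta(2^i\tau)$ contributes $\tilde{\OO}(\epsilon_i\cdot 2^i\tau)=\tilde{\OO}(\sqrt{2^i\tau})$; the martingale terms summed over all $T$ steps contribute $\tilde{\OO}(\sqrt T)$ by Azuma–Hoeffding, and the $\OO(\log(T/\tau))$ restart terms contribute $\mathrm{polylog}(T)$. The geometric series $\sum_i\sqrt{2^i\tau}$ is dominated by its last term $\tilde{\OO}(\sqrt T)$, giving $\reg(T)=\tilde{\OO}(\sqrt T)$.

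\textbf{Main obstacle.} The delicate part is the bootstrap in the base case and inductive step: maintaining stability and persistence of excitation simultaneously across all $\OO(\log T)$ epochs. This requires chaining four perturbation bounds — regularized least squares (Theorem \ref{theo:closedloopid}), the Ho-Kalman realization error, the DARE solution (\ref{dare}) with its induced optimal controller, and the persistence-of-excitation neighborhood guarantee — and verifying that the single system-dependent warm-up horizon $\tau$ (independent of $T$) starts the induction below all the relevant thresholds uniformly over the confidence sets, with the sub-Gaussian (rather than Gaussian) noise handled only through moment and tail bounds. A secondary technical point is the uniform $\mathrm{polylog}(T)$ bound on the closed-loop signals, needed so that the model-mismatch bias in the regret decomposition is genuinely $\tilde{\OO}(\epsilon_i)$ per step rather than something that degrades the geometric sum.
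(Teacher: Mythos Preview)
Your proposal is correct and follows essentially the same route as the paper: an OFU regret decomposition via the ARX Bellman optimality equation (the paper's Lemma \ref{lem:bellman}), with model-mismatch terms controlled by the $\tilde{\OO}(1/\sqrt{2^i\tau})$ confidence radii from Theorem \ref{theo:closedloopid} and the Ho-Kalman analysis, summed over doubling epochs via the geometric series (the paper's doubling-trick lemma), and persistence of excitation in the adaptive phase secured by closeness of the optimistic controller to $\pi^\star$. The only organizational difference is that the paper handles persistence of excitation more directly---showing the fixed warm-up $\tau$ already places the first estimate below the PE/stability threshold, after which monotonically improving estimates automatically stay below it (Lemma \ref{closedloop_persistence_appendix_algo_quad} with a covering argument)---rather than as an explicit epoch-by-epoch bootstrap, but the substance is the same.
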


% \section{Controller and Regret }\label{Policyclass}
% \input{NewDraft/4-ControllerRegret}

% \section{\textsc{AdaptOn}} \label{sec:algorithm}
% \input{NewDraft/5-Algorithm}

% \section{Regret Analysis}\label{analysis}
% \input{NewDraft/6-RegretAnalysis.tex}

\section{Related Works}\label{relatedworks}
\noindent \textbf{System Identification:} The classical open or closed-loop system identification methods mostly consider the asymptotic performance of the proposed algorithms or demonstrate positive and negative empirical studies~\citep{verhaegen1994identification,forssell1999closed,van1997closed,ljung1999system}. These works mostly consider LQR or LQG systems in their state-space form. However, \citet{chiuso2005consistency, jansson2003subspace} provide asymptotic studies of closed-loop system identification of LQG systems in predictive form which corresponds to the exact ARX systems formulation of LQG. Moreover, the ARX systems, in particular, have been studied extensively in system identification perspective due to their input-output form~\citep{diversi2010identification,bercu2010usefulness, sanandaji2011compressive,stojanovic2016optimal}. In these works, the authors discuss the role of persistence excitation in consistent asymptotic recovery of ARX system parameters. On the other hand, the finite-time learning guarantees, which is the focus of this work, are not known.   

\noindent \paragraph{Adaptive Control:} The classical works in adaptive control also study the asymptotic performance of the designed controllers~\citep{lai1982least,lai1987asymptotically,fiechter1997pac}. In the ARX systems setting, \citet{prandini2000adaptive,prandini2000self,campi1998adaptive} study the asymptotic convergence to optimal controller of ARX systems using an early interpretation of \OFU principle. The current paper is the finite-time counterpart of these studies and completes an important part of the picture in adaptive control of ARX systems by providing optimal regret guarantees. It also extends the prior efforts in adaptive control of LQR and LQG systems in regret minimization perspective to the general ARX systems setting~\citep{abbasi2011regret,dean2018regret,abeille2018improved,agarwal2019online,agarwal2019logarithmic,cohen2019learning,faradonbeh2018input,faradonbeh2020adaptive,faradonbeh2020optimism, lale2020explore,lale2020logarithmic,lale2020regret,lale2020root, mania2019certainty,simchowitz2020naive,simchowitz2020improper}.

% \section{Conclusion}\label{conclusion}
% \input{Predictive/7-Conclusion}

% \acks{S. Lale is supported in part by DARPA PAI. K. Azizzadenesheli gratefully acknowledge the financial support of Raytheon and Amazon Web Services. B. Hassibi is supported in part by the National Science Foundation under grants CNS-0932428, CCF-1018927, CCF-1423663 and CCF-1409204, by a grant from Qualcomm Inc., by NASA’s Jet Propulsion Laboratory through the President and Director’s Fund, and by King Abdullah University of Science and Technology. A. Anandkumar is supported in part by Bren endowed chair, DARPA PAIHR00111890035 and LwLL grants, Raytheon, Microsoft, Google, and Adobe faculty fellowships.}

\bibliography{main}

\appendix
\newpage
\begin{center}
{\huge Appendix}
\end{center}

In Appendix \ref{apx:Policies}, after introducing some technical properties that would be used for proofs in regret guarantees of Algorithm \ref{algo_strong}, we show that the performance of \LDC policies can be well-approximated by \DFC policies. We provide the precise definition of persistence excitation for both warm-up and adaptive control periods in Appendix \ref{apx:persistence}. In Appendix \ref{apx:warm_strong} and \ref{apx:warm_quad}, we give precise warm-up durations for Algorithm \ref{algo_strong} and \ref{algo_quad} respectively. The technical details of Algorithm \ref{algo_strong} as well as the proofs of Theorems \ref{thm:reg_s_cvx_exp_commit} and \ref{thm:reg_s_cvx_adapt} are given in Appendix \ref{apx:strong_convex_control}. The details of Algorithm \ref{algo_quad} and the proofs of Theorems \ref{thm:reg_cvx_exp_commit} and \ref{thm:reg_cvx_adapt} are given in Appendix \ref{apx:quad_control} where the proofs built on the Bellman optimality equation for ARX systems provided in Appendix \ref{apx:bellman}.

\section{\LDC Policies and \DFC Policies} \label{apx:Policies}
Recall that LDC policies have the following construction:
\begin{equation}\label{eq:LDC}
    s^\pi_{t+1} = A_\pi s_t^\pi + B_\pi y_t^\pi, \qquad u^\pi_{t} = C_\pi s_t^\pi + D_\pi y_t^\pi.
\end{equation}
Therefore, using the ARX system (\ref{predictor}), we get 
\begin{align}\label{eq:PolicyDynamics}
\begin{bmatrix}
x^\pi_{t+\!1}\\
s^\pi_{t+\!1}
\end{bmatrix}
&\!\!=\!\!
\underbrace{\begin{aligned}
\begin{bmatrix}
A + FC + BD_\pi C & BC_{\pi}\\
B_{\pi}C & A_{\pi}
\end{bmatrix}\end{aligned}}_\text{$A'_{\pi}$}\!
\begin{bmatrix}
x^\pi_{t}\\
s^\pi_{t}
\end{bmatrix}
+\underbrace{\begin{aligned}\begin{bmatrix}
F+BD_{\pi}\\
B_{\pi}
\end{bmatrix}\end{aligned}}_\text{$B'_{\pi}$} \!
\begin{bmatrix}
e_{t}
\end{bmatrix},
\begin{bmatrix}
y^\pi_{t}\\
u^\pi_{t}
\end{bmatrix}=
\underbrace{\begin{aligned}\begin{bmatrix}
C & 0_{s\times d}\\
D_{\pi}C &C_{\pi}
\end{bmatrix}\end{aligned}}_\text{$C'_{\pi}$}\!
\begin{bmatrix}
x^\pi_{t}\\
s^\pi_{t}
\end{bmatrix}
+\underbrace{\begin{aligned}\begin{bmatrix}
I_{d}\\
D_{\pi}
\end{bmatrix}\end{aligned}}_\text{$D'_{\pi}$}\!
\begin{bmatrix}
e_{t}
\end{bmatrix}
\end{align}

where $(A_\pi',B_\pi',C_\pi',D_\pi')$ define the induced closed-loop system. The Markov operator for the system $(A_\pi',B_\pi',C_\pi',D_\pi')$ can be defined as $\mathbf{G_\pi'}=\lbrace {G'_\pi}^{i}\rbrace_{i=0}$, where ${G'_\pi}^{0}=D_\pi'$, and $\forall i>0$,  ${G'_\pi}^{i}=C_\pi'A_\pi'^{i-1}B_\pi'$. 

\begin{definition}[Proper Decay Function] $\psi : \mathbb{N} \rightarrow \mathbb{R}_{\geq 0}$ is a proper decay function if $\psi$ is non-increasing and $\lim_{h'\rightarrow \infty}\psi(h')=0$. For a Markov operator $\Markov$, $\psi_\Markov(h)$ defines the induced decay function on $\Markov$, \textit{i.e.}, $\psi_\Markov(h) \coloneqq \sum_{i\geq h}\|{G}^{i}\|$.
\end{definition}
This decay represents the effect of past system inputs on the system output. For stable (open or closed-loop) systems, the Markov operator can be bounded trivially. This brings the following policy class to consider for ARX systems. 

\begin{definition}[\LDC policies with proper decay function] $\Pi(\psi)$ denotes the class of \LDC policies associated with a proper decay function $\psi$, such that for all $\pi\in\Pi(\psi)$, and all $h\geq 0$, $\sum_{i\geq h}\|{G'_\pi}^{i}\|\leq \psi(h)$. 
\end{definition}
In order to provide clean analysis, for the policy class $\Pi(\psi)$, let $\kappa_\psi \coloneqq \psi(0)$ such that $\sum_{i\geq 0}\|{G'_\pi}^{i}\|\leq \kappa_\psi$. This class corresponds to stabilizing LDC policies. Moreover, for the open-loop system, let $\kappa_\Markov \geq \sum_{i\geq 0}\|{G}^{i}\|$ for $\kappa_\Markov \! \geq\! 1$ where $G^{i} = C (A+FC)^i B$. This follows from the assumption of $A+FC$ is stable.  

Thus, the output of the LDC policy $u_t^{\pi}$ has the following expanded  
\begin{align*}
u_t^{\pi}  &= 
\begin{bmatrix}
D_\pi C & C_\pi
\end{bmatrix}
\begin{bmatrix}
x_t^\pi \\ s_t^\pi
\end{bmatrix}
+
 D_\pi
e_{t}\\
%%%%%%%%%
&=\begin{bmatrix}
D_\pi C & C_\pi
\end{bmatrix}
\left(
A_\pi'
\begin{bmatrix}
x^\pi_{t-1}\\
s^\pi_{t-1}
\end{bmatrix}
+B_\pi'
e_{t-1}\right)
+
 D_\pi
e_{t}\\
%%%%%%%%%
&=
\sum_{k=0}^{t-1}\begin{bmatrix}
D_\pi C & C_\pi
\end{bmatrix}
{A_\pi'}^kB_\pi'
e_{t-k-1}
+
D_\pi
e_{t} 
\end{align*}

Therefore, for $u^\Mcontrol_t = \sum_{i=0}^{h'-1}M^{[k]}\nat_{t-i}(\mathcal{G})$, by setting $M^{[0]}=D_\pi$, and $M^{[k]}=\begin{bmatrix}
D_\pi C & C_\pi
\end{bmatrix}
{A_\pi'}^kB_\pi'$, we have 

\begin{align*}
    u_t^{\pi}-u^\Mcontrol_t = \sum_{k=h'}^{t-1}\begin{bmatrix}
D_\pi C & C_\pi
\end{bmatrix}
{A_\pi'}^kB_\pi'
e_{t-k-1}
\end{align*}

Let $\|\nat_{t}(\mathcal{G}) \| \leq \kappa_\nature $ for all $t$. Using Cauchy Schwarz inequality we have
\begin{align*}
    \|u^\pi_t - u^\Mcontrol_t\| \leq \left \|\sum_{k=h'}^{t}\begin{bmatrix}
D_\pi C & C_\pi
\end{bmatrix}
{A_\pi'}^kB_\pi'\nat_{t-i} \right \|\leq \psi(h') \kappa_\nature. 
\end{align*}

Using the definition of $y_t^\pi$ and $y^\Mcontrol_t$, we have
\begin{align*}
    y_t^\pi &= C \sum\nolimits_{k=0}^{t-1} (A+FC)^kBu^\pi_{t-k-1} + C \sum\nolimits_{k=0}^{t-1} (A+FC)^kFe_{t-k-1} + e_t \\
    y^\Mcontrol_t &= C \sum\nolimits_{k=0}^{t-1} (A+FC)^kBu^M_{t-k-1} + C \sum\nolimits_{k=0}^{t-1} (A+FC)^kFe_{t-k-1} + e_t.
\end{align*}
Subtracting these two equations, we get,
\begin{align*}
    y^\pi_t - y^\Mcontrol_t = C \sum\nolimits_{k=0}^{t-1} (A+FC)^kBu^\pi_{t-k-1}-C \sum\nolimits_{k=0}^{t-1} (A+FC)^kB u^M_{t-k-1}
\end{align*}
resulting in $\|y^\pi_t - y^\Mcontrol_t\| \leq \psi(h') \kappa_\Markov\kappa_\nature $. These show that for any LDC policy, the DFC approximation of it provides reasonable performance. Therefore, one can deduce that any stabilizing \LDC policy can be well approximated by a \DFC that belongs to the following set of \DFC{}s,
\begin{equation*}
    \Mcontrolset_r = \Big \{\Mcontrol(h'):=\lbrace M^{[i]} \rbrace_{i=0}^{h'-1} : \sum\nolimits_{i\geq 0}^{h'-1}\|M^{[i]}\|\leq \kappa_\psi(1+r) \Big\},
\end{equation*}
indicating that using the class of \DFC policies as an approximation to \LDC policies is justified. 

\section{Persistence of Excitation}
\label{apx:persistence}

In this section, the precise persistence of excitation conditions of the inputs are provided. First, open-loop persistence excitation is considered following similar analysis of \cite{lale2020regret}, Appendix \ref{subsec:warmuppersist}. Then, the persistence of excitation in adaptive control is analyzed. We assume that, throughout the interaction with the system, the agent has access to a convex compact set of \DFC{}s, $\Mcontrolset_r$ which is an $r$-expansion of $\Mcontrolset$, such that $\kappa_\Mcontrolset = \kappa_\psi (1 + r)$ and all controllers $\Mcontrol \in \Mcontrolset_r $ are persistently exciting the ARX system. The persistence of excitation condition for the given set $\Mcontrolset_r$ is formally defined in Appendix \ref{subsec:persistence} and in Appendix \ref{subsec:theo:persist_adaptive}, we show that persistence of excitation is achieved by the policies that Algorithm \ref{algo_strong} and Algorithm \ref{algo_quad} deploy. In the following, $\bar{\phi}_t = S \phi_t$ for a permutation matrix $S$ that gives 
\begin{equation*}
\bar{\phi}_t = \left[ y_{t-1}^\top \enskip u_{t-1}^\top \ldots y_{t-h}^\top \enskip  u_{t-h}^\top \right]^\top \in \mathbb{R}^{(m+p)h}.
\end{equation*}

\subsection{Persistence of Excitation in Warm-up for Algorithm \ref{algo_strong} \& \ref{algo_quad}} \label{subsec:warmuppersist}

The following guarantee holds for both Algorithm 1 and 2, since their warm-ups have the same sub-routine. Recall the state-space form of the ARX system in (\ref{predictor}),
\begin{align*}
    x_{t+1} &= A x_t + B u_t + Fy_t \nonumber \\
    y_t &= C x_t + e_t.
\end{align*}
During the warm-up period, $t \leq T_{\text{warm}}$, the input to the system is $u_t \sim \mathcal{N}(0,\sigma_u^2 I)$. Let $f_t = [y_t^\top u_t^\top]^\top$. From the evolution of the system with given input we have the following:
\begin{equation*}
    f_t = \mathbf{G^o} \begin{bmatrix}
    e_t^\top & u_t^\top & e_{t-1}^\top & u_{t-1}^\top & \ldots & e_{t-h+1}^\top & u_{t-h+1}^\top
\end{bmatrix}^\top + \mathbf{r_t^o}
\end{equation*}
where 
\begin{align}
\!\!\!\mathbf{G^o}\!\! := \!\!
\begin{bmatrix}
I_{m\!\times\! m}~0_{m\!\times\! p}  & CF~~CB & C(A\!+\!FC)F~~C(A\!+\!FC)B &\ldots & \quad C(A\!+\!FC)^{h-2}F~~C(A\!+\!FC)^{h-2}B \\
0_{p\!\times\! m}~I_{p\!\times\! p}  &0_{p\!\times\! m}~0_{p\!\times\! p}  & 0_{p\!\times\! m}~~~~~0_{p\!\times\! p} &\ldots& 0_{p\!\times\! m}~~~~~0_{p\!\times\! p} 
\end{bmatrix}
\end{align}
and $\mathbf{r_t^o}$ is the residual vector that represents the effect of $[ e_i \enskip u_i ]$ for $0 \leq i<t-h$, which are independent. Notice that $\mathbf{G^o}$ is full row rank even for $h=1$, due to first block identity matrix. Using this, we can represent $\bar{\phi}_t$ as follows
\begin{align}
    \bar{\phi}_t &= \underbrace{\begin{bmatrix}
    f_{t-1} \\
    \vdots \\
    f_{t-h}
\end{bmatrix}}_{\mathbb{R}^{(m+p)h}} + 
\begin{bmatrix}
    \mathbf{r_{t-1}^o} \\
    \vdots \\
    \mathbf{r_{t-h}^o}
\end{bmatrix}
= \Gol
\underbrace{\begin{bmatrix}
    e_{t-1} \\
    u_{t-1}\\
    \vdots \\
    z_{t-2h} \\
    u_{t-2h}
\end{bmatrix}}_{\mathbb{R}^{2(m+p)h}} + 
\begin{bmatrix}
    \mathbf{r_{t-1}^o} \\
    \vdots \\
    \mathbf{r_{t-h}^o}
\end{bmatrix} \quad \text{ where } \nonumber \\
\Gol &\coloneqq \!\!
\begin{bmatrix}
    [\qquad \qquad \mathbf{G^o} \qquad \qquad] \quad 0_{m+p} \enskip 0_{m+p} \enskip 0_{m+p} \enskip \ldots \\
    0_{m+p} \enskip [\qquad \qquad \mathbf{G^o} \qquad \qquad] \qquad  0_{m+p} \enskip 0_{m+p} \enskip \ldots \\
    \ddots \\
     0_{m+p}  \enskip 0_{m+p} \enskip \ldots \quad [\qquad \qquad \mathbf{G^o} \qquad \qquad] \enskip 0_{m+p} \\
   0_{m+p} \enskip 0_{m+p} \enskip 0_{m+p} \enskip \ldots \qquad [\qquad \qquad \mathbf{G^o} \qquad \qquad]
\end{bmatrix}.
\label{Gol}
\end{align}

During the warm-up period, for all $1\leq t \leq T_{\text{warm}}$, $\Sigma(x_t) \preccurlyeq \mathbf{\Gamma_\infty}$, where $\mathbf{\Gamma_\infty}$ is the steady state covariance matrix of $x_t$ such that,
\begin{equation*}
    \mathbf{\Gamma_\infty} = \sum_{i=0}^\infty \sigma_e^2 (A+FC)^iFF^\top((A+FC)^\top)^i + \sigma_u^2 (A+FC)^iBB^\top((A+FC)^\top)^i .
\end{equation*}
From the $(A+FC)$ stability assumption, we guarantee that the steady-state is bounded. For simplicity, assume that for a finite $\Phi(A+FC)$,  $\|(A+FC)^\tau \| \leq \Phi(A+FC) \rho(A+FC)^\tau$ for all $\tau \geq 0$. This assumption is mild and can be trivially replaced by strong stability condition introduced in \citet{cohen2018online}, which is just a quantification of stability for the analysis. Using this, we get $\|\mathbf{\Gamma_\infty}\| \leq (\sigma_e^2 \|F\|^2 + \sigma_u^2 \|B\|^2) \frac{\Phi(A+FC)^2 \rho(A+FC)^2}{1-\rho(A+FC)^2}$. Therefore, for all $ \leq t \leq T_{\text{warm}}$, with probability $1-\delta/2$, we have
\begin{align}
    \label{exploration norms first}
    \| x_t \| &\leq X_{w} \coloneqq \frac{(\sigma_e \|F\| + \sigma_u \|B\|)  \Phi(A+FC) \rho(A+FC)}{\sqrt{1-\rho(A+FC)^2}}\sqrt{2n\log(12n T_{\text{warm}}/\delta)} , \\
    \| e_t \| &\leq E \coloneqq  R \sqrt{2m\log(12m T_{\text{warm}} /\delta)} , \\ 
    \| u_t \| &\leq U_{w} \coloneqq \sigma_u \sqrt{2p\log(12p T_{\text{warm}} /\delta)}, \\
    \| y_t \| &\leq \|C\| X_w + E
    \label{exploration norms last}.
\end{align}

We can conclude that during the warm-up phase, we have $\max_{i\leq t \leq T_{\text{warm}}} \|\phi_i\| \leq \Upsilon_w \sqrt{h}$ where $\Upsilon_w = \| C\| X_w + E + U_w$. With this we are ready to set the persistence of excitation guarantee for the inputs during the warm-up period. To this end define 
\begin{equation}
    T_{wp} = \frac{32 \Upsilon_w^4 h^2 \log\left(\frac{2h(m+p)}{\delta}\right)}{\sigma_{\min}^4(\Gol) \min \{\sigma_e^4, \sigma_u^4 \}}
\end{equation}

\begin{lemma}\label{lem:openlooppersistence}
If the warm-up duration $T_{\text{warm}} \geq T_{wp}$, then for $T_{wp} \leq t \leq T_{\text{warm}}$, with probability at least $1-\delta$, we have 
\begin{equation}
    \sigma_{\min} \left(\sum_{i=1}^t \phi_t \phi_t^\top \right) \geq t \frac{\sigma_o^2 \min \{\sigma_e^2, \sigma_u^2 \}}{2}
\end{equation}
where $\sigma_o \coloneqq \sigma_{\min}(\Gol)$
\end{lemma}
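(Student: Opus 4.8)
\noindent\emph{Proof strategy.} The plan is to lower bound $\sigma_{\min}\big(\sum_{i=1}^t\bar\phi_i\bar\phi_i^\top\big)$ instead, which equals $\sigma_{\min}\big(\sum_{i=1}^t\phi_i\phi_i^\top\big)$ since $\bar\phi_i=S\phi_i$ for the orthogonal permutation matrix $S$. First I would use the representation $\bar\phi_i=\Gol\zeta_i+\mathbf{r}_i$ from (\ref{Gol}), where $\zeta_i$ stacks the most recent exogenous pairs $(e_{i-1},u_{i-1}),\dots,(e_{i-2h+1},u_{i-2h+1})$ and $\mathbf{r}_i=[\mathbf{r}^o_{i-1};\dots;\mathbf{r}^o_{i-h}]$ collects the truncation residuals, and then split $\mathbf{r}_i=\mathbf{r}_i^{\mathrm{old}}+\mathbf{r}_i^{\mathrm{new}}$ into the contribution of $\{(e_k,u_k):k\le i-2h\}$ (which is $\mathcal F_{i-2h}$-measurable) and the contribution of the fresh-after-$\mathcal F_{i-2h}$ noise. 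Since every term in $\mathbf{r}_i$ enters through a rollout of length at least $h$, the $(A+FC)$-stability bound $\|(A+FC)^\tau\|\le\Phi(A+FC)\rho(A+FC)^\tau$ together with the warm-up norm bounds (\ref{exploration norms first})--(\ref{exploration norms last}) gives $\|\mathbf{r}_i^{\mathrm{new}}\|=\widetilde{\OO}(\rho(A+FC)^h)$, which for $h=c_h\log T$ is negligible relative to the problem constants. Thus, conditionally on $\mathcal F_{i-2h}$, $\bar\phi_i$ is a fixed vector plus $\Gol\zeta_i$ plus a negligible perturbation.

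Next I would establish a conditional covariance lower bound. Conditioned on $\mathcal F_{i-2h}$, each block of $\zeta_i$ is either an input $u_k\sim\mathcal N(0,\sigma_u^2 I)$, independent of everything, or an innovation $e_k$ with $\mathbb E[e_k\mid\mathcal F_{k-1}]=0$ and $\mathbb E[e_ke_k^\top\mid\mathcal F_{k-1}]=\Sigma_E\succ\sigma_e^2 I$; the martingale-difference structure of $(e_k)$ and the independence of the inputs make all cross-blocks vanish in conditional expectation, so $\mathrm{Cov}(\zeta_i\mid\mathcal F_{i-2h})\succeq\min\{\sigma_e^2,\sigma_u^2\}I$. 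Because $\Gol$ has full row rank with $\sigma_{\min}(\Gol)=\sigma_o$, this yields $\mathrm{Cov}(\Gol\zeta_i\mid\mathcal F_{i-2h})=\Gol\,\mathrm{Cov}(\zeta_i\mid\mathcal F_{i-2h})\,\Gol^\top\succeq\sigma_o^2\min\{\sigma_e^2,\sigma_u^2\}I$. Absorbing the $\mathcal F_{i-2h}$-measurable mean shift into a positive semidefinite term and discarding the negligible $\mathbf{r}_i^{\mathrm{new}}$ via $(a+b)(a+b)^\top\succeq\tfrac12 aa^\top-bb^\top$, I obtain $\mathbb E[\bar\phi_i\bar\phi_i^\top\mid\mathcal F_{i-2h}]\succeq(1-o(1))\,\sigma_o^2\min\{\sigma_e^2,\sigma_u^2\}\,I$.

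Finally I would pass from the conditional expectations to the empirical Gram matrix by matrix concentration. Because $\bar\phi_i$ is $\mathcal F_{i-1}$-measurable, the conditioning must be on the earlier filtration $\mathcal F_{i-2h}$, so I would partition $\{1,\dots,t\}$ into the $2h$ residue classes modulo $2h$; within each class $\{\bar\phi_i\bar\phi_i^\top-\mathbb E[\bar\phi_i\bar\phi_i^\top\mid\mathcal F_{i-2h}]\}$ is a matrix martingale-difference sequence whose increments are bounded in spectral norm by $\max_i\|\phi_i\|^2\le\Upsilon_w^2 h$ (from (\ref{exploration norms first})--(\ref{exploration norms last})). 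A matrix Azuma/Freedman inequality as in \citet{simchowitz2019learning,lale2020regret}, combined with a union bound over the $2h$ classes, gives, with probability at least $1-\delta$,
\[
\sum\nolimits_{i=1}^t\bar\phi_i\bar\phi_i^\top\succeq(1-o(1))\,\sigma_o^2\min\{\sigma_e^2,\sigma_u^2\}\,t\,I-\OO\!\Big(\Upsilon_w^2 h\sqrt{t\log\tfrac{h(m+p)}{\delta}}+\Upsilon_w^2 h\log\tfrac{h(m+p)}{\delta}\Big)I.
\]
Forcing the deviation term to be at most half of the main term is solved precisely by $t\gtrsim\Upsilon_w^4 h^2\log(h(m+p)/\delta)\big/(\sigma_o^4\min\{\sigma_e^4,\sigma_u^4\})$, i.e.\ by $t\ge T_{wp}$, and tracking the numerical constants delivers the advertised bound $\sigma_{\min}\big(\sum_{i=1}^t\phi_i\phi_i^\top\big)\ge t\,\sigma_o^2\min\{\sigma_e^2,\sigma_u^2\}/2$.

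I expect the main obstacle to be the correlation between the ``signal'' $\Gol\zeta_i$ and the residual $\mathbf{r}_i$: the truncation residual always trails the fresh noise by only $\Theta(h)$ steps, so no single conditioning simultaneously makes $\zeta_i$ fresh and $\mathbf{r}_i$ measurable. The two-part split of $\mathbf{r}_i$ circumvents this --- $\mathbf{r}_i^{\mathrm{old}}$ becomes a harmless additive constant under conditioning on $\mathcal F_{i-2h}$, while $\mathbf{r}_i^{\mathrm{new}}$ is made exponentially small in $h$ by stability --- and the interleaving into $2h$ martingale subsequences is what makes the matrix concentration step legitimate. The remaining ingredients (the covariance computation, the norm bounds, and the matrix Azuma estimate) are routine given the preliminaries already in hand.
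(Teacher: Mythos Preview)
Your proposal is correct and follows the same skeleton as the paper --- lower bound the second moment of $\bar\phi_t$ via the full row rank of $\Gol$, control the deviation by matrix Azuma, then combine by Weyl --- but you are more careful about measurability than the paper itself. The paper works with the \emph{unconditional} expectation, asserting $\mathbb{E}[\bar\phi_t\bar\phi_t^\top]\succeq\Gol\Sigma_{e,u}\Gol^{\top}$ (this step is in fact fine: $\bar\phi_t$ is linear in the full noise history and the noise blocks are uncorrelated, so the extended Gram splits as a sum of PSD pieces and one may drop the tail), and then applies Matrix Azuma directly to $\sum_i(\phi_i\phi_i^\top-\mathbb{E}[\phi_i\phi_i^\top])$. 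As you anticipated, this last sequence is not a martingale difference under the natural filtration because $\phi_i$ is already $\mathcal{F}_{i-1}$-measurable; the paper glosses over this. Your conditioning on $\mathcal{F}_{i-2h}$, the split $\mathbf{r}_i=\mathbf{r}_i^{\mathrm{old}}+\mathbf{r}_i^{\mathrm{new}}$ (with the ``new'' piece killed by stability and the choice of $h$), and the interleaving into $2h$ residue classes are exactly the standard fix that makes the concentration step rigorous. The cost is an extra factor of order $\sqrt{h}$ in the Azuma deviation coming from summing the $2h$ class-wise bounds, so your burn-in threshold picks up an additional $h=\mathcal{O}(\log T)$ factor relative to the paper's stated $T_{wp}$; this is harmless for every downstream regret rate but means ``tracking the numerical constants'' will not reproduce the paper's constant exactly.
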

\begin{proof}
The proof follows similarly with \citet{lale2020logarithmic}. Using the fact that each block row of $\Gol$ is full row rank, via QR decomposition, we get 
\begin{align*}
    \Gol = \underbrace{\begin{bmatrix}
   Q^{o} & 0_{m+p} & 0_{m+p} & 0_{m+p} & \ldots \\
    0_{m+p} & Q^{o} &  0_{m+p} & 0_{m+p} & \ldots \\
    & & \ddots & & \\
     0_{m+p}  & 0_{m+p} & \ldots & Q^{o} & 0_{m+p} \\
    0_{m+p} & 0_{m+p} & 0_{m+p} & \ldots & Q^{o}
\end{bmatrix} }_{\mathbb{R}^{(m+p)h \times (m+p)h }}
\underbrace{\begin{bmatrix}
   R^{o} \!&\! 0_{m+p} \!&\! 0_{m+p} \!&\! 0_{m+p} \!&\! \ldots \\
    0_{m+p} \!&\! R^{o} \!&\!  0_{m+p} \!&\! 0_{m+p} \!&\! \ldots \\
    \!&\! \!&\! \ddots \!&\! \!&\! \\
     0_{m+p}  \!&\! 0_{m+p} \!&\! \ldots \!&\!R^{o} \!&\! 0_{m+p} \\
    0_{m+p} \!&\! 0_{m+p} \!&\! 0_{m+p} \!&\! \ldots \!&\! R^{o}
\end{bmatrix}}_{\mathbb{R}^{(m+p)h \times 2(m+p)h}}
\end{align*}
where $R^{o} = \begin{bmatrix}
       \x & \x & \x & \x & \x & \x &\ldots \\ 0 & \x & \x & \x & \x & \x & \ldots \\ & \ddots & \\ 0 & 0 & 0 & \x & \x & \x & \ldots \end{bmatrix} \in \mathbb{R}^{(m+p) \times h(m+p) }$ 
with positive number on the diagonal. Note that the first matrix in QR decomposition is full rank. Since all the rows of second matrix in QR decomposition are in row echelon form, the second matrix is also full row rank. Therefore, $\Gol$ is full row rank, which gives,
\begin{equation*}
    \mathbb{E}[\bar{\phi}_t \bar{\phi}_t^\top] \succeq \Gol \Sigma_{e,u} \mathcal{G}^{ol \top}
\end{equation*}
where $\Sigma_{e,u} \in \R^{2(m+p)h \times 2(m+p)h} = \text{diag}( \sigma_e^2, \sigma_u^2, \ldots, \sigma_e^2, \sigma_u^2)$. This gives us 
\[ 
\sigma_{\min}(\mathbb{E}[\bar{\phi}_t \bar{\phi}_t^\top]) \geq \sigma_{\min}^2(\Gol) \min \{\sigma_e^2, \sigma_u^2 \}
\]
for $t \geq T_{\text{warm}}$. Using Theorem \ref{azuma} and (\ref{exploration norms first})-(\ref{exploration norms last}), we get, 
\begin{align*}
    \lambda_{\max}\left (\sum_{i=1}^{t} \phi_i \phi_i^\top - \mathbb{E}[\phi_i \phi_i^\top]  \right) \leq 2\sqrt{2t} \Upsilon_w^2 h \sqrt{\log\left(\frac{2h(m+p)}{\delta}\right)}. 
\end{align*}
which holds with probability $1-\delta/2$. Using Weyl's inequality, during the warm-up period with probability $1-\delta$, we have 
\begin{align*}
    \sigma_{\min}\left(\sum_{i=1}^{t} \phi_i \phi_i^\top \right) \geq t \sigma_{\min}^2(\Gol) \min \{\sigma_e^2, \sigma_u^2 \} - 2\sqrt{2t} \Upsilon_w^2 h \sqrt{\log\left(\frac{2h(m+p)}{\delta}\right)}.
\end{align*}
For the given choice of $T_{wp}$, we obtain the advertised lower bound. 
\end{proof}

Recalling Theorem \ref{theo:closedloopid} and using  Lemma \ref{lem:openlooppersistence}, we get 
\begin{equation} 
    \| \wh \G_t  - \G \| \leq  \frac{\beta_t \sqrt{2}}{ \sigma_{o} \min \{\sigma_e, \sigma_u \} \sqrt{T_{\text{warm}}} }  ,\label{persistent_in_effect}
\end{equation}
at the end of warm-up, with probability at least $1-2\delta$.  

\subsection{Persistence of Excitation Condition for Algorithm \ref{algo_strong}, PE of $\Mcontrol \in \Mcontrolset_r$ } \label{subsec:persistence}
In order to derive the persistence of excitation condition, assume that the underlying system is known. Thus, we have the following inputs and outputs to the system
\begin{align*}
     u_t &=  \sum_{j=0}^{h'-1} M_t^{[j]}  \nat_{t-j}(\mathcal{G})\\
     y_t &= [G_{u\rightarrow y}^{1} \! ~\!\ldots\! ~ G_{u\rightarrow y}^{h}~ G_{y\rightarrow y}^{1} \! ~\!\ldots\! ~ G_{y\rightarrow y}^{h} ]  \left[ u_{t-1}^\top \enskip \ldots u_{t-h}^\top \enskip y_{t-1}^\top \enskip \ldots y_{t-h}^\top \right]^\top + \nat_t(\mathcal{G}) + \mathbf{r_{t}}
\end{align*}
where $\mathbf{r_{t}} = \sum_{k=h+1}^{t-1} G_{u\rightarrow y}^{k} u_{t-k} + G_{y\rightarrow y}^{k} y_{t-k} $. Using this we have the following decomposition for $\phi_t$
\begin{align*}
\phi_t
%     \begin{bmatrix}
%     y_{t-1}\\
%     \vdots \\
%     y_{t-h}\\
%     u_{t-1} \\
%     \vdots \\
%     u_{t-h}
% \end{bmatrix} 
&\!\!=\!\! 
% \underbrace{\begin{bmatrix}
%     f_{t-1} \\
%     \vdots \\
%     f_{t-H}
% \end{bmatrix}}_{\mathbb{R}^{(m+p)H}} + 
% \begin{bmatrix}
%     \mathbf{r_{t-1}^o} \\
%     \vdots \\
%     \mathbf{r_{t-H}^o}
% \end{bmatrix}
% =
\underbrace{\begin{bmatrix}
    I_p \! & \! 0_p \! & \! 0_p \! & \! 0_p \! & \! 0_p \! & \! 0_p \! & \! \ldots \! & \! \ldots \! & \! \ldots \! & \! 0_p \\
    0_p \! & \! I_p \! & \!  0_p \! & \! 0_p \! & \! 0_p \! & \! 0_p \! & \! \ldots \! & \! \ldots \! & \! \ldots \! & \! 0_p \\
    \! & \! \! & \! \ddots \! & \! \! & \! \\
    0_p \! & \! 0_p \! & \! \ldots \! & \! I_p \! & \! 0_p \! & \! \ldots \! & \! \ldots \! & \! \ldots \! & \!\ldots  \! & \! 0_p \\ 
    0_{m\times p}  \! & \! G_{u\rightarrow y}^{1} \! & \! \ldots \! & \! \ldots \! & \! G_{u\rightarrow y}^{h} \! & \! 0_{m\times p} \! & \! 0_{m\times p} \! & \! \ldots \! & \! \ldots \! & \! 0_{m\times p} \\
    0_{m \times p} \! &  0_{m \times p} \! & \! G_{u\rightarrow y}^{1} \! & \! \ldots \! & \! \ldots \! & \! G_{u\rightarrow y}^{h} \! & \! 0_{m \times p} \! & \! \ldots \! & \! \ldots \! & \! 0_{m\times p} \\
    \! & \! \! & \! \ddots \! & \! \! & \! \! & \! \! & \! \! & \! \ddots \! & \! \! & \! \\
    0_{m \times p} \! & \! \ldots \! & \! \ldots \! &
\! 0_{m \times p} \! & \! G_{u\rightarrow y}^{1} \! & \! \ldots \! & \! \ldots \! & \! \ldots \! & \! G_{u\rightarrow y}^{h-1} \! & \! G_{u\rightarrow y}^{h}
\end{bmatrix}}_{\T_\Markov \in \R^{h(m+p) \times 2hp}}\! 
\underbrace{\begin{bmatrix}
    u_{t-1}\\
    \vdots \\
    u_{t-h}\\
    \vdots \\
    u_{t-2h} 
\end{bmatrix}}_{\mathcal{U}_t}\!+\!\! \underbrace{\begin{bmatrix}
    0 \\
    \vdots \\
    0 \\
    \nat_{t-1}\\
    \vdots \\
    \nat_{t-h}
\end{bmatrix}}_{B_y(\mathcal{G})(t)} \!\!+\!\! 
\underbrace{\begin{bmatrix}
    0 \\
    \vdots \\
    0 \\
    \mathbf{r_{t-1}}\\
    \vdots \\
    \mathbf{r_{t-h}}
\end{bmatrix}}_{\mathbf{R}_t}
\end{align*}
\begin{align*}
&\mathcal{U}_t \!\!=\!\! \underbrace{\begin{bmatrix}
     M_{t-1}^{[0]}  \!&\! M_{t-1}^{[1]}  \!&\! \ldots \!&\! \ldots \!&\! M_{t-1}^{[h'-1]}  \!&\! 0_{p\times m} \!&\! 0_{p\times m} \!&\! \ldots \!&\! 0_{p\times m} \\
    0_{p \times m} \!&\! M_{t-2}^{[0]}  \!&\! \ldots \!&\! \ldots \!&\! M_{t-2}^{[h'-2]} \!&\! M_{t-2}^{[h'-1]} \!&\!\! 0_{p \times m} \!\!&\! \ldots \!&\! 0_{p\times m} \\
    \!&\! \!&\! \ddots \!&\! & & & & \ddots & & \\
    0_{p \times m} \!&\! \ldots \!&\! 0_{p \times m} \!&\! M_{t-2h}^{[0]} \!&\! \ldots \!&\! \ldots \!&\! \ldots \!\!&\!\! \ldots \!&\! M_{t-2h}^{[h'-1]}\!\!\!
\end{bmatrix}}_{\T_{\Mcontrol_{t}} \in \R^{2hp \times m(2h+h'-1) }}
\underbrace{\begin{bmatrix}
    \nat_{t-1} \\
    \nat_{t-2} \\
    \vdots \\
    \nat_{t-h'+1}\\
    \vdots \\
    \nat_{t-2h-h'+1}
\end{bmatrix}}_{B_t(\mathcal{G})}
\end{align*}
Note that $\nat_t = e_t$ whose covariance matrix is dominated by $\sigma_e^2 I_p$. Therefore,
\begin{align*}
 B_t(\mathcal{G}) &\!=\! 
\begin{bmatrix}
    e_{t-1} \\
    e_{t-2} \\
    \vdots \\
    e_{t-2h-h'+1}
\end{bmatrix} \quad B_y(\mathcal{G})(t) \!=\!  \underbrace{\begin{bmatrix}
   \begin{matrix}
      \bigzero_{ \left(hp\right) \times \left(m(2h+h'-1) \right)}
   \end{matrix} \\
   \begin{matrix}
       I_m \!&\! \ldots \!&\! 0_m \!&\!  \!&\! \!&\! \!&\! \!&\! \!&\! \!&\!  \\ \vdots
    \!&\! \ddots \!&\! \vdots  \!&\!  \!&\! \!&\! \!&\! \!&\! \bigzero_{ \left(hm\right) \times \left((h+h'-1)m \right)} \!&\! \!&\! \\
    0_m \!&\! \ldots \!&\! I_m \!&\!  \!&\! \!&\! \!&\! \!&\!\!\!  \!&\!  \!&\!   
   \end{matrix} 
\end{bmatrix}}_{\bar{\mathcal{O}}_t}  B_t(\mathcal{G}).
\end{align*}
Thus, 
\begin{equation*}
    \phi_t = \left(\T_\Markov \T_{\Mcontrol_{t}} + \bar{\mathcal{O}}_t \right)  B_t(\mathcal{G}) + \mathbf{R}_t.
\end{equation*}
The following gives the persistence of excitation condition:

	\begin{condition}
		For the given ARX system $\Theta$, for $t \geq 2h + h'$, $\T_\Markov \T_{\Mcontrol_{t}}   + \bar{\mathcal{O}}_t$ is full row rank for all $\Mcontrol \in \Mcontrolset_r$, \textit{i.e.},  
		\begin{equation} \label{precise_persistence}
		\sigma_{\min} (\T_\Markov \T_{\Mcontrol_{t}} + \bar{\mathcal{O}}_t) > \sigma_c > 0.
		\end{equation}
	\end{condition}

Note that for simplicity of the analysis, the length of the estimated Markov operator $\mathcal{G}$ in (\ref{new_lse}) and the Markov operator to recover output uncertainties $\nat(\mathcal{G})$ are chosen to be the same in the main text. However, in practice, the length of the estimator could be increased to satisfy the persistence of excitation condition. 

\subsection{Persistence of Excitation in Adaptive Control Period of Algorithm \ref{algo_strong}}
\label{subsec:theo:persist_adaptive}

In this section, we show that the Markov parameter estimates ($\hat{\mathcal{G}}_t$) throughout the adaptive control period of Algorithm \ref{algo_strong} are close enough to the underlying parameters such that the controllers designed via these estimates do not violate the persistence of excitation condition. Define $T_{\mathcal{G}}$ such that $\|\wh{\mathcal{G}}_{t} - \mathcal{G} \|_F \leq 1$. Let 

\begin{equation}
    T_{cl} = \frac{4T_{\mathcal{G}} \kappa_\Mcontrolset^2 \kappa_\Markov^2}{\left(\frac{3\sigma_c^2 \sigma_e^2 }{16h\kappa_y^3 \kappa_{\nat} } - \frac{1}{10T}\right)^2}, \quad T_c = \frac{2048 \Upsilon_c^4 h^2 \log\left(\frac{h(m+p)}{\delta}\right) + h'mp\log\left(\kappa_\Mcontrolset\sqrt{\min\{m,p\}} + \frac{2}{\epsilon} \right)}{\sigma_{c}^4 \sigma_e^4}
\end{equation}
for $\epsilon = \min \left\{1, \frac{\sigma_{c}^2 \sigma_e^2  \sqrt{\min\{m,p\}} }{68 \kappa_\nature^3 \kappa_\Markov h \left(2\kappa_\Mcontrolset^2 + 3\kappa_\Mcontrolset + 3 \right) } \right \}$.

\begin{lemma} \label{closedloop_persistence_appendix}
After $T_{c}$ time steps of adaptive control period of Algorithm \ref{algo_strong}, with probability $1-3\delta$, the following holds for the remainder of adaptive control period, 
\begin{equation}
\sigma_{\min}\left(\sum_{i=1}^{t} \phi_i \phi_i^\top \right) \geq t \frac{\sigma_{c}^2  \sigma_e^2}{16}. 
\end{equation}
\end{lemma}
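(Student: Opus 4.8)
The plan is to follow the template of the warm‑up analysis (Lemma~\ref{lem:openlooppersistence}) while controlling the two features that are new in the adaptive‑control phase: (i) the controllers $\Mcontrol_t\in\Mcontrolset_r$ are time‑varying and correlated with the innovations, and (ii) the output uncertainties are only known approximately, through $\nat_j(\wh{\mathcal G}_i)$ rather than $\nat_j(\mathcal G)$. Concretely I would: (a) pin down boundedness of all signals in the adaptive phase; (b) use the warm‑up guarantee to show $\wh{\mathcal G}_i$ is close enough to $\mathcal G$ that the realized $\phi_t$ is a small perturbation of the ``ideal'' decomposition $\phi_t=(\T_\Markov\T_{\Mcontrol_t}+\bar{\mathcal O}_t)B_t(\mathcal G)+\mathbf R_t$ of Appendix~\ref{subsec:persistence}; (c) lower bound the conditional covariance of the ideal signal via the Condition (PE of $\Mcontrol\in\Mcontrolset_r$); and (d) invoke matrix concentration plus a union bound over $\Mcontrolset_r$ to upgrade this to the stated lower bound on $\sum_{i\le t}\phi_i\phi_i^\top$.

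For (a): since every $\Mcontrol_t$ satisfies $\sum_i\|M_t^{[i]}\|\le\kappa_\psi(1+r)=\kappa_\Mcontrolset$ and $A$ (hence $A+FC$) is stable, a routine induction on the closed‑loop recursion, together with the sub‑Gaussian tail bounds on the $e_t$ used in (\ref{exploration norms first})--(\ref{exploration norms last}), gives $\|\nat_t(\wh{\mathcal G}_i)\|\le\kappa_\nature$, $\|u_t\|,\|y_t\|\le\kappa_y$ and $\|\phi_i\|\le\Upsilon_c\sqrt h$ throughout, with probability $1-\delta$. For (b): by Theorem~\ref{theo:closedloopid} and Lemma~\ref{lem:openlooppersistence} (specialized to the fixed warm‑up $\tau$), together with the fact that $\wh{\mathcal G}_i$ is refit on all past data at each epoch, once $t\ge T_{\mathcal G}$ one has $\|\wh{\mathcal G}_t-\mathcal G\|_F\le1$, and once $t\ge T_{cl}$ one has $\kappa_\Mcontrolset\kappa_\Markov\|\wh{\mathcal G}_t-\mathcal G\|_F\le\frac12\big(\tfrac{3\sigma_c^2\sigma_e^2}{16h\kappa_y^3\kappa_\nature}-\tfrac1{10T}\big)$ (this is exactly how $T_{cl}$ is built from $T_{\mathcal G}$ and the $\tilde{\OO}(1/\sqrt t)$ rate). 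Propagating the error from replacing $\nat(\mathcal G)$ by $\nat(\wh{\mathcal G}_i)$ through the inputs and, via the system, the outputs, and adding the $h$‑truncation bias $\le1/T^2$, one writes $\phi_t=(\T_\Markov\T_{\Mcontrol_t}+\bar{\mathcal O}_t)B_t(\mathcal G)+\mathbf R_t+\Delta_t$ with $\|\Delta_t\|$ bounded by the quantity in the last display; summing gives $\big\|\sum_{i\le t}\phi_i\phi_i^\top-\sum_{i\le t}\wt\phi_i\wt\phi_i^\top\big\|\le\frac{\sigma_c^2\sigma_e^2}{16}t$, where $\wt\phi_i$ denotes the ideal signal.

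For (c)+(d): $\mathbf R_t$ is $1/T^2$‑negligible because $A$ is stable and $h=c_h\log T$, so $\wt\phi_t\approx(\T_\Markov\T_{\Mcontrol_t}+\bar{\mathcal O}_t)B_t(\mathcal G)$ with $B_t(\mathcal G)=[e_{t-1}^\top,\ldots]^\top$. Conditioning on a filtration lagged by $\Theta(h+h')$ steps — which makes $B_t(\mathcal G)$ fresh while leaving $\T_\Markov\T_{\Mcontrol_t}+\bar{\mathcal O}_t$ essentially measurable, using that the \DFC{} iterates move by at most $\OO(\eta_t)$ per projected‑OGD step — the Condition $\sigma_{\min}(\T_\Markov\T_{\Mcontrol_t}+\bar{\mathcal O}_t)>\sigma_c$ yields $\E[\wt\phi_t\wt\phi_t^\top\mid\mathcal F]\succeq(\sigma_c^2\sigma_e^2-o(1))I$. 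Applying the matrix Azuma inequality (Theorem~\ref{azuma}) to $\wt\phi_t\wt\phi_t^\top-\E[\wt\phi_t\wt\phi_t^\top\mid\mathcal F]$, whose increments are bounded by $\Upsilon_c^2h$, and union‑bounding over an $\epsilon$‑net of $\Mcontrolset_r$ of log‑cardinality $h'mp\log(\kappa_\Mcontrolset\sqrt{\min\{m,p\}}+2/\epsilon)$ — so the bound holds uniformly over the adaptively chosen $\Mcontrol_t$, with $\epsilon$ as in the statement so that the net resolution costs only a constant factor through the Lipschitz dependence of $\phi_t\phi_t^\top$ on $\Mcontrol$ — one gets $\sigma_{\min}\big(\sum_{i\le t}\wt\phi_i\wt\phi_i^\top\big)\ge\frac{\sigma_c^2\sigma_e^2}{8}t$ for $t\ge T_c$. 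Combining with (b) by Weyl's inequality gives $\sigma_{\min}\big(\sum_{i\le t}\phi_i\phi_i^\top\big)\ge\frac{\sigma_c^2\sigma_e^2}{16}t$; a union bound over the warm‑up estimation/boundedness event (probability $2\delta$, cf.\ (\ref{persistent_in_effect})) and the adaptive‑phase concentration event (probability $\delta$) yields the claimed $1-3\delta$.

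The main obstacle is the conditional covariance lower bound in step (c). Unlike the warm‑up, the only exogenous randomness feeding $\phi_t$ is the innovation sequence, and both the controller $\Mcontrol_t$ and the tail term $\mathbf R_t$ are functions of recent innovations, so one cannot naively freeze the coefficient matrix and simultaneously expose a full‑rank block of fresh noise. Handling this — exploiting that $\mathbf R_t=\OO(1/T^2)$ by stability and that $\Mcontrol_t$ changes by $\OO(\eta_t)$ per step, so conditioning $\Theta(h+h')$ steps in the past perturbs the coefficient matrix negligibly, and then making the estimate uniform over all controllers the algorithm could select via the $\epsilon$‑net — is where essentially all the work sits; the remaining steps are bookkeeping with stability and sub‑Gaussian concentration.
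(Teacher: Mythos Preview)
Your proposal is essentially the paper's proof: the same decomposition of $\phi_t$ into $(\T_\Markov\T_{\Mcontrol_t}+\bar{\mathcal O}_t)B_t(\mathcal G)$ plus the residual $\mathbf R_t$ plus a perturbation coming from $\wh{\mathcal G}\neq\mathcal G$ (the paper writes this perturbation as $\T_\Markov\mathcal U_{\Delta\nat}(t)$), the same use of $T_{cl}$ to make the perturbation small enough, the same matrix Azuma (Theorem~\ref{azuma}) plus $\epsilon$-net covering over $\Mcontrolset_r$ with the same $\epsilon$, and the same cascade of constants $\sigma_c^2\sigma_e^2/4\to\sigma_c^2\sigma_e^2/8\to\sigma_c^2\sigma_e^2/16$.

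The one substantive difference is what you flag as the ``main obstacle''. You propose a lagged-filtration conditioning together with the $\OO(\eta_t)$ drift of the projected-OGD iterates to decouple the adaptively chosen $\Mcontrol_t$ from the innovations in $B_t(\mathcal G)$. The paper does not do this and does not use the step size at all in this lemma. Instead it relies on the fact that the PE condition $\sigma_{\min}(\T_\Markov\T_{\Mcontrol_t}+\bar{\mathcal O}_t)>\sigma_c$ holds \emph{uniformly} over all $\Mcontrol\in\Mcontrolset_r$: since every controller the algorithm plays lies in $\Mcontrolset_r$, the coefficient matrix has the required singular-value lower bound deterministically, whatever the realized $\Mcontrol_{t-1},\ldots,\Mcontrol_{t-2h}$ are. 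With that, the paper writes the expectation lower bound on $\phi_t\phi_t^\top$ directly (absorbing the cross terms with the bounded perturbation and residual into the $4h\kappa_\nature\kappa_y^3(\cdots)$ correction), then applies Azuma for a \emph{fixed} $\Mcontrol$ and covers over single controllers in $\Mcontrolset_r$, not over controller sequences. Your lagged-filtration/step-size argument is therefore unnecessary for the paper's route; the uniformity of the PE condition over $\Mcontrolset_r$ is doing the work you are trying to do by hand.
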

\begin{proof}
During the adaptive control period, at time $t$, the input of Algorithm \ref{algo_strong} is given by
\begin{align*}
    u_t &= \sum_{j=0}^{h'-1}M_t^{[j]}  \nat_{t-j}(\G) + M_t^{[j]} \left( \nat_{t-j}(\wh \G_i) - \nat_{t-j}(\G)  \right) 
\end{align*}
where 
\begin{align}
    \nat_{t-j}(\G) &= y_{t-j} - \left(  \sum_{k=1}^{t-j-1} G_{u\rightarrow y}^{k} u_{t-j-k} + G_{y\rightarrow y}^{k} y_{t-j-k} \right)= CA^{t-j} x_0 + e_{t-j} \\ 
    \nat_{t-j}(\wh \G_i) &= y_{t-j} - \left(\sum_{k=1}^{h} \wh{G}_{u\rightarrow y}^{k} u_{t-j-k} + \wh{G}_{y\rightarrow y}^{k} y_{t-j-k} \right).
\end{align}
This gives the following input and output at time $t$:
\begin{align}
     u_t &=  \sum_{j=0}^{h'-1} M_t^{[j]}  \nat_{t-j}(\G) + \underbrace{\sum_{j=0}^{h'-1} M_t^{[j]} \left( \sum_{k=1}^{t-j-1} [G_{u\rightarrow y}^{k} - \wh G_{u\rightarrow y}^{k}] u_{t-j-k} + [G_{y\rightarrow y}^{k} - \wh G_{y\rightarrow y}^{k}] y_{t-j-k} \right) }_{u_{\Delta \nat}(t)} \label{lookatthis}\\
     y_t &= [G_{u\rightarrow y}^{1} \! ~\!\ldots\! ~ G_{u\rightarrow y}^{h}~ G_{y\rightarrow y}^{1} \! ~\!\ldots\! ~ G_{y\rightarrow y}^{h} ]  \left[ u_{t-1}^\top \enskip \ldots u_{t-h}^\top \enskip y_{t-1}^\top \enskip \ldots y_{t-h}^\top \right]^\top + \nat_t(\mathcal{G}) + \mathbf{r_{t}} \nonumber
\end{align}
where $\mathbf{r_{t}} = \sum_{k=h+1}^{t-1} G_{u\rightarrow y}^{k} u_{t-k} + G_{y\rightarrow y}^{k} y_{t-k} $. Moreover, we have Assumption \ref{asm:openloopdecay} on the choice of $h$ which is satisfied under the minimal assumption of stability. From Lemma \ref{lem:boundednature} and Lemma \ref{lem:concentrationgeneral}, we get $\|u_{\Delta \nat}(t) \| \leq \kappa_\Mcontrolset \kappa_y \epsilon_{\Markov}(1,\delta)$ for all $t$ in adaptive control period, where $\kappa_\Mcontrolset = \kappa_\psi(1+r)$. Using the definitions from Appendix \ref{subsec:persistence}, $\phi_t$ can be written as,
\begin{equation*}
    \phi_t = \left(\T_\Markov \T_{\Mcontrol_{t}} + \bar{\mathcal{O}}_t \right)  B_t(\mathcal{G}) + \mathbf{R}_t + \T_\Markov \mathcal{U}_{\Delta \nat}(t)
\end{equation*}
where $\mathcal{U}_{\Delta \nat}(t) = [ u^\top_{\Delta \nat}(t-1),~ u^\top_{\Delta \nat}(t-2),~ \ldots,~ u^\top_{\Delta \nat}(t-2h) ]^\top $. We have
\begin{align*}
    \mathbb{E}\left[\phi_t \phi_t^\top \right] &= \mathbb{E}\bigg[ \left(\T_\Markov \T_{\Mcontrol_{t}} + \bar{\mathcal{O}}_t \right)  B_t(\mathcal{G})  B_t(\mathcal{G})^\top  \left(\T_\Markov \T_{\Mcontrol_{t}}   + \bar{\mathcal{O}}_t \right)^\top + B_t(\mathcal{G})^\top \left(\T_\Markov \T_{\Mcontrol_{t}}   + \bar{\mathcal{O}}_t \right)^\top \left( \T_\Markov \mathcal{U}_{\Delta \nat}(t) + \mathbf{R}_t \right) \\
    &+ \left( \T_\Markov \mathcal{U}_{\Delta \nat}(t) + \mathbf{R}_t \right)^\top \left(\T_\Markov \T_{\Mcontrol_{t}}   + \bar{\mathcal{O}}_t \right) B_t(\mathcal{G}) + \left( \T_\Markov \mathcal{U}_{\Delta \nat}(t) + \mathbf{R}_t \right)^\top \left( \T_\Markov \mathcal{U}_{\Delta \nat}(t) + \mathbf{R}_t \right)  \bigg] 
\end{align*}
Using Lemma \ref{lem:boundednature},
\begin{align*}
    \sigma_{\min}\left(\mathbb{E}\left[\phi_t \phi_t^\top \right]\right) &\geq \sigma_c^2 \sigma_e^2 - 4h\kappa_\nature \kappa_y^3 (2\kappa_\Markov\kappa_\Mcontrolset \epsilon_\Markov(1, \delta) + 1/10T)
\end{align*}
By setting $T_{\text{warm}} \geq T_{cl}$, we get 
$\epsilon_\Markov(1, \delta) \leq \frac{1}{2\kappa_\Mcontrolset \kappa_\Markov} \left(\frac{3\sigma_c^2 \sigma_e^2 }{16h\kappa_y^3 \kappa_{\nat} } - \frac{1}{10T}\right)$ with probability at least $1-2\delta$, which gives $\sigma_{\min}\left(\mathbb{E}\left[\phi_t \phi_t^\top \right]\right) \geq \frac{\sigma_{c}^2\sigma_e^2}{4} $ for all $t\geq T_{\text{warm}}$. Let $\Upsilon_c \coloneqq (\kappa_y + \kappa_u)$. Lemma \ref{lem:boundednature} gives us that $\|\phi_t\| \leq \Upsilon_c \sqrt{h}$ with probability at least $1-2\delta$. Therefore, for a chosen $\Mcontrol \in \Mcontrolset_r$, using Theorem \ref{azuma}, we have the following with probability $1-3\delta$:
\begin{align}
    \lambda_{\max}\left (\sum_{i=1}^{t} \phi_i \phi_i^\top - \mathbb{E}[\phi_i \phi_i^\top]  \right) \leq 2\sqrt{2t} \Upsilon_c^2 h \sqrt{\log\left(\frac{h(m+p)}{\delta}\right)}.
\end{align}

Finally, a standard covering argument will be utilized to show that this holds for any chosen $\Mcontrol \in \Mcontrolset_r$. We know that from Lemma 5.4 of \citet{simchowitz2020improper}, the Euclidean diameter of $\Mcontrolset_r$ is at most $2\kappa_\Mcontrolset\sqrt{\min\{m,p\}}$, \textit{i.e.} $\| \Mcontrol_t\|_F \leq \kappa_\Mcontrolset\sqrt{\min\{m,p\}}$ for all $\Mcontrol_t \in \Mcontrolset_r$. Thus, we can upper bound the covering number as follows, 
\begin{equation*}
    \mathcal{N}(B(\kappa_\Mcontrolset\sqrt{\min\{m,p\}}), \|\cdot \|_F, \epsilon) \leq \left(\kappa_\Mcontrolset\sqrt{\min\{m,p\}} + \frac{2}{\epsilon} \right)^{h'mp}.
\end{equation*}
This gives us the following result for all centers of $\epsilon$-balls in $\| \Mcontrol_t\|_F$, for all $t\geq T_{\text{warm}}$, with probability $1-3\delta$:
\begin{align}
    \lambda_{\max}\left (\sum_{i=1}^{t} \phi_i \phi_i^\top - \mathbb{E}[\phi_i \phi_i^\top]  \right) \leq 2\sqrt{2t} \Upsilon_c^2 h \sqrt{\log\left(\frac{h(m+p)}{\delta}\right) + h'mp\log\left(\kappa_\Mcontrolset\sqrt{\min\{m,p\}} + \frac{2}{\epsilon} \right)}.
\end{align}
Consider all $\Mcontrol$ in the $\epsilon$-balls, \textit{i.e.} effect of $\epsilon$-perturbation in $\|\Mcontrol \|_F$ sets, using Weyl's inequality we have with probability at least $1-3\delta$,
\begin{align*}
    \sigma_{\min}\left(\sum_{i=1}^{t} \phi_i \phi_i^\top \right) &\geq t \left( \frac{\sigma_{c}^2\sigma_e^2 }{4}  - \frac{8 \kappa_\nature^3 \kappa_\Markov h \epsilon \left(2\kappa_\Mcontrolset^2 + 3\kappa_\Mcontrolset + 3 \right)}{\sqrt{\min\{m,p \}}} \left(1+\frac{1}{10T}\right) \right) \\
    &\quad\qquad- 2\sqrt{2t} \Upsilon_c^2 h \sqrt{\log\left(\frac{h(m+p)}{\delta}\right) + h'mp\log\left(\kappa_\Mcontrolset\sqrt{\min\{m,p\}} + \frac{2}{\epsilon} \right)}.
\end{align*}
for $\epsilon \leq 1$. Let $\epsilon = \min \left\{1, \frac{\sigma_{c}^2 \sigma_e^2  \sqrt{\min\{m,p\}} }{68 \kappa_\nature^3 \kappa_\Markov h \left(2\kappa_\Mcontrolset^2 + 3\kappa_\Mcontrolset + 3 \right) } \right \}$. For this choice of $\epsilon$, we get 

\begin{align*}
    \sigma_{\min}\left(\sum_{i=1}^{t} \phi_i \phi_i^\top \right) &\geq t \left( \frac{\sigma_{c}^2\sigma_e^2 }{8} \right)\\
    &\quad\qquad- 2\sqrt{2t} \Upsilon_c^2 h \sqrt{\log\left(\frac{h(m+p)}{\delta}\right) + h'mp\log\left(\kappa_\Mcontrolset\sqrt{\min\{m,p\}} + \frac{2}{\epsilon} \right)}.
\end{align*}

For picking $T_{\text{warm}} \geq T_c$, we can guarantee that after $T_c$ time steps in the first epoch of adaptive control, we obtain the lower bound. 
\end{proof}

Recalling Theorem \ref{theo:closedloopid} and using  Lemma \ref{closedloop_persistence_appendix}, we get 
\begin{equation} 
    \| \wh \G_t  - \G \| \leq  \frac{4\beta_t}{ \sigma_{c} \sigma_{e} \sqrt{2^{i-1}T_{\text{warm}}}}  ,\label{persistent_in_effect}
\end{equation}
for all adaptive control epoch $i$, with probability at least $1-4\delta$.

\subsection{Persistence of Excitation Condition for Algorithm \ref{algo_quad}, PE of optimal controller of ARX} \label{subsec:persistence_arx_opt}
After the warm-up phase, for $t\geq T_{warm}$, Algorithm \ref{algo_quad} executes the input of $u_t = \tk_{t}^x x_t + \tk_{t}^y y_t $. Let $f_t = [y_t^\top u_t^\top]^\top$. Using the state-space representation of ARX model, we get 
\begin{equation*}
    f_t = 
\underbrace{\begin{bmatrix}
    C \\
    \tk_{t}^x + \tk_{t}^y C
\end{bmatrix}}_{\mathbf{\tilde{\Gamma}_{t}}}
x_t + 
\underbrace{\begin{bmatrix}
    I  \\
    \tk_{t}^y
\end{bmatrix}}_{\mathbf{\tilde{\Omega}_{t}}}
e_t
\end{equation*}
Moreover, $x_t = \underbrace{[A + B\tk_{t-1}^x + FC + B\tk_{t-1}^yC]}_{\mathbf{\tilde{\Lambda}_{t-1}}} x_{t-1} + \underbrace{[F+B\tk_{t-1}^y]}_{\mathbf{\tilde{\Xi}_{t-1}}} e_{t-1} $. Thus for $f_t$, we get:
\begin{equation*}
    f_t = \mathbf{\tilde{\Gamma}_{t}} \mathbf{\tilde{\Lambda}_{t-1}} x_{t-1} +  \mathbf{\tilde{\Gamma}_{t}} \mathbf{\tilde{\Xi}_{t-1}} e_{t-1} + \mathbf{\tilde{\Omega}_{t}} e_t.
\end{equation*}
Rolling back for $h$ time steps, we get the following,
\begin{equation*}
    f_t = \mathbf{\tilde{\Gamma}_{t}} \left( \sum_{i=t-h+1}^t \left( \prod_{j=i}^{t-1} \mathbf{\tilde{\Lambda}_{j}} \right) \mathbf{\tilde{\Xi}_{i-1}} e_{i-1} \right) + \mathbf{\tilde{\Omega}_{t}} e_t + \mathbf{r_t^c}
\end{equation*}
where $\mathbf{r_t^c}$ is the residual vector that represents the effect of $e_i$ for $0 \leq i<t-h$, which are independent. Using this, one can write the full characterization of $\bar{\phi}_t$ as follows
\begin{equation*}
    \bar{\phi}_t = \begin{bmatrix}
    f_{t-1} \\
    \vdots \\
    f_{t-h}
\end{bmatrix} + \begin{bmatrix}
    \mathbf{r_{t-1}^c} \\
    \vdots \\
    \mathbf{r_{t-h}^c}
\end{bmatrix} = \Gcl_t
\underbrace{\begin{bmatrix}
    e_{t-1} \\
    e_{t-2} \\
    \vdots \\
    e_{t-2h-1}\\
    e_{t-2h}
\end{bmatrix}}_{\mathbb{R}^{2hm}} + \begin{bmatrix}
    \mathbf{r_{t-1}^c} \\
    \vdots \\
    \mathbf{r_{t-H}^c}
\end{bmatrix}
\end{equation*}
where
\begin{equation}
    \Gcl_t = \begin{bmatrix}[\qquad \enskip \mathbf{\bar{G}_{t-1}} \enskip \qquad] \qquad 0_{(m+p)\times m} \enskip 0_{(m+p)\times m} \enskip 0_{(m+p)\times m} \enskip \ldots \\
    0_{(m+p)\times m} \enskip [\qquad \enskip \mathbf{\bar{G}_{t-2}} \enskip \qquad] \qquad  0_{(m+p)\times m}  \enskip 0_{(m+p)\times m} \enskip \ldots \\
    \ddots \\
     0_{(m+p)\times m}  \enskip 0_{(m+p)\times m} \enskip \ldots \quad [\qquad \enskip \mathbf{\bar{G}_{t-h+1}} \enskip \qquad] \enskip 0_{(m+p)\times m} \\
    0_{(m+p)\times m} \enskip 0_{(m+p)\times m} \enskip 0_{(m+p)\times m} \enskip \ldots \qquad [\qquad \enskip \mathbf{\bar{G}_{t-h}} \enskip \qquad]
    \end{bmatrix}
\end{equation}
for 
\begin{equation*}
    \mathbf{\bar{G}_{t}} \!=\! \begin{bmatrix}
    \mathbf{\tilde{\Omega}_{t}}, \!&\! \mathbf{\tilde{\Gamma}_t} \mathbf{\tilde{\Xi}_{t-1}}, \!&\! \mathbf{\tilde{\Gamma}_t} \mathbf{\tilde{\Lambda}_{t-1}} \mathbf{\tilde{\Xi}_{t-2}},\ldots, \!&\! \mathbf{\tilde{\Gamma}_t} \mathbf{\tilde{\Lambda}_{t-1}} \mathbf{\tilde{\Lambda}_{t-2}} \ldots \mathbf{\tilde{\Lambda}_{t+h-1}} \mathbf{\tilde{\Xi}_{t-h}}
\end{bmatrix} \in \mathbb{R}^{(m+p) \times hm}
\end{equation*}
If the underlying system is known, then the optimal control law for the ARX system could be applied to control the system. In the following, $\Gcl$ is the closed-loop mapping of noise process to the covariates $\bar{\phi}$ via optimal policy
\begin{equation}
    \Gcl = \begin{bmatrix}[\qquad \enskip \mathbf{\bar{G}} \enskip \qquad] \qquad 0_{(m+p)\times m} \enskip 0_{(m+p)\times m} \enskip 0_{(m+p)\times m} \enskip \ldots \\
    0_{(m+p)\times m} \enskip [\qquad \enskip \mathbf{\bar{G}} \enskip \qquad] \qquad  0_{(m+p)\times m}  \enskip 0_{(m+p)\times m} \enskip \ldots \\
    \ddots \\
     0_{(m+p)\times m}  \enskip 0_{(m+p)\times m} \enskip \ldots \quad [\qquad \enskip \mathbf{\bar{G}} \enskip \qquad] \enskip 0_{(m+p)\times m} \\
    0_{(m+p)\times m} \enskip 0_{(m+p)\times m} \enskip 0_{(m+p)\times m} \enskip \ldots \qquad [\qquad \enskip \mathbf{\bar{G}} \enskip \qquad]
    \end{bmatrix}
\end{equation}
for 
\begin{equation*}
    \mathbf{\bar{G}} \!=\! \begin{bmatrix}
    \mathbf{\Omega}, \!&\! \mathbf{\Gamma} \mathbf{\Xi}, \!&\! \mathbf{\Gamma} \mathbf{\Lambda} \mathbf{\Xi},\ldots, \!&\! \mathbf{\Gamma} \mathbf{\Lambda}^{h-1} \mathbf{\Xi}
\end{bmatrix} 
\end{equation*}
where 
\begin{equation*}
    \mathbf{\Omega} = \begin{bmatrix}
    I  \\
    K_{y}^*
\end{bmatrix}, \enskip \mathbf{\Gamma} = \begin{bmatrix}
    C \\
    K_{x}^* + K_{y}^* C
\end{bmatrix}, \enskip \mathbf{\Lambda} = [A + BK_{x}^* + FC + BK_{y}^*C], \enskip \mathbf{\Xi} = [F+BK_{y}^*].
\end{equation*}
Note that $\mathbf{\bar{G}}$ corresponds to truncated closed-loop noise to covariate Markov operator. Notice that if $\mathbf{\bar{G}}$ is full row rank, following similar approach with the proof of Lemma \ref{lem:openlooppersistence}, $\Gcl$ is also full row rank. Thus, we have the following persistence of excitation condition on the optimal control law for the ARX system:  

\begin{condition1}
The length of Markov operator to estimate is chosen such that $\mathbf{\bar{G}}$ formed via optimal control policy of the given ARX system is full row rank. Thus, $\sigma_{\min}(\Gcl) > \bar{\sigma}_{c} > 0$.
\end{condition1}

\subsection{Persistence of Excitation in Adaptive Control Period of Algorithm \ref{algo_quad}}
\label{subsec:theo:persist_adaptive_optimism}

Finally, in this section we show that the Markov parameter estimates ($\hat{\mathcal{G}}_t$) throughout the adaptive control period of Algorithm \ref{algo_quad} are close enough to the underlying parameters such that the optimistic controllers designed via these estimates still persistently excite the ARX system. To this end, define 

\begin{align*}
    T_{\bar{c}} &= \frac{2048\Upsilon_c^4h^2 \left( \log\left(\frac{h(m+p)}{\delta}\right) + 2(m+p)mh^2\log\left(G_r + \frac{2}{\epsilon} \right) \right)}{\bar{\sigma}_c^4\sigma_e^4}, \\ T_{\Gcl} &= T_{param} \left(\frac{2h + 2h \kappa_{K_x}\kappa_{K_y} + 2h(h-1)\kappa_{K_x}\kappa_{K_y}}{\bar{\sigma}_c}\right)^2
\end{align*}
for $\epsilon = \min \left\{1, \frac{\bar{\sigma}_c^2\sigma_e^2 }{16\left( h\Upsilon_c\kappa_{\nat}\sqrt{2}  + h \kappa_{\nat}^2 +  \sigma_e^2/2 \right)} \right \}$, where $T_{param}$ is the number of samples required to get less than 1 estimation error on the system parameters, defined in Section \ref{apx:sysidarx}. Moreover, $\kappa_{K_x}$ and $\kappa_{K_y}$ are bounds on the optimistic controllers within $\mathcal{S}$ due to boundedness of the set. Finally, let $G$ be the upper bound on $\|\tilde{\Gcl}\|$ constructed via any ARX model parameter in the set $\mathcal{S}$ and let $G_r = G + \frac{\bar{\sigma}_c \sqrt{h(m+p)}}{2}$.

\begin{lemma} \label{closedloop_persistence_appendix_algo_quad}
After $T_{\bar{c}}$ time steps of adaptive control period of Algorithm \ref{algo_quad}, with probability $1-3\delta$, the following holds for the remainder of adaptive control period, 
\begin{equation}
\sigma_{\min}\left(\sum_{i=1}^{t} \phi_i \phi_i^\top \right) \geq t \frac{\bar{\sigma}_{c}^2  \sigma_e^2}{16}. 
\end{equation}
\end{lemma}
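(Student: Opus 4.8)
The argument parallels the proof of Lemma~\ref{closedloop_persistence_appendix}, with the DFC approximation error there replaced here by a perturbation analysis of the optimistic model's optimal controller. First I would record the input Algorithm~\ref{algo_quad} deploys during the adaptive control period: at a step $t$ lying in epoch $i$ it applies $u_t = \tk^x_i x_t + \tk^y_i y_t$, the optimal control law~(\ref{arx_optimal})--(\ref{dare}) of the optimistic system $\ttt_i \in \mathcal{C}_i\cap\mathcal{S}$. By the expansion of Appendix~\ref{subsec:persistence_arx_opt} this gives $\bar\phi_t = \Gcl_t\,B_t + \mathbf{r_t^c}$, where $B_t = [\,e_{t-1}^\top\,\cdots\,e_{t-2h}^\top\,]^\top\in\mathbb{R}^{2hm}$, $\mathbf{r_t^c}$ collects the mutually independent contributions of noise terms older than $t-h$, and $\Gcl_t$ is the block-diagonal closed-loop noise-to-covariate operator built from the true dynamics $(A,B,C,F)$ and the estimated-model-optimal gains $(\tk^x_i,\tk^y_i)$. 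The crux is to show $\|\Gcl_t - \Gcl\|\le\bar{\sigma}_c/2$ uniformly over epochs. Since every system in $\mathcal{S}$ is $(A,B)$, $(A,F)$ controllable and $(A,C)$ observable, the DARE solution $\mathbf{P}$ of~(\ref{dare}), hence the optimal law~(\ref{arx_optimal}) regarded as an input--output map, is locally Lipschitz in the system parameters (standard DARE perturbation bounds, as in \citet{lale2020root}); combined with the $\Sys$ recovery guarantee (Section~\ref{apx:sysidarx}) and Theorem~\ref{theo:closedloopid} this converts the estimation error $\|\ttt_i-\Theta\|$ into a bound on $\|\Gcl_t-\Gcl\|$. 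Propagating through the blocks $\mathbf{\tilde{\Omega}},\mathbf{\tilde{\Gamma}},\mathbf{\tilde{\Xi}},\mathbf{\tilde{\Lambda}}$ of $\mathbf{\bar{G}_t}$ — keeping the products $\prod_j\mathbf{\tilde{\Lambda}}_j$ uniformly bounded via the contractibility assumption, valid once $\ttt_i$ is close enough to $\Theta$ — yields $\|\Gcl_t-\Gcl\|\le C\,\|\ttt_i-\Theta\|$ for a problem-dependent $C$. Picking $T_{\text{warm}}\ge T_{\Gcl}$ forces $\|\ttt_i-\Theta\|\le\bar{\sigma}_c/(2C)$ for every epoch $i$, so by Weyl's inequality together with the condition $\sigma_{\min}(\Gcl)>\bar{\sigma}_c$ (Persistence of Excitation of the Optimal Control Policy) we get $\sigma_{\min}(\Gcl_t)\ge\bar{\sigma}_c/2$ throughout.

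Next I would lower-bound $\sigma_{\min}(\mathbb{E}[\phi_t\phi_t^\top])$ and then concentrate. Because $\nat_t=e_t$ has conditional covariance $\succeq\sigma_e^2 I$ and $\Gcl_t$ is now full row rank, the recent-noise term dominates and $\mathbb{E}[\bar\phi_t\bar\phi_t^\top]\succeq\Gcl_t\Sigma(\Gcl_t)^\top - (\text{cross terms coupling }\mathbf{r_t^c}\text{ with }B_t)$ for a block-diagonal $\Sigma\succeq\sigma_e^2 I$; the cross and $\mathbf{r_t^c}(\mathbf{r_t^c})^\top$ terms are controlled by $\|\mathbf{r_t^c}\|$, negligible thanks to the stability-driven choice of $h$ (Assumption~\ref{asm:openloopdecay}), and by the a.s.\ bounds $\|e_t\|\le\kappa_{\nat}$, $\|\phi_t\|\le\Upsilon_c\sqrt{h}$ (Lemma~\ref{lem:boundednature}-type arguments, which hold since the contractive closed loop keeps the state, output, and input bounded during adaptive control, with probability $\ge1-2\delta$). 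This gives $\sigma_{\min}(\mathbb{E}[\phi_t\phi_t^\top])\ge\bar{\sigma}_c^2\sigma_e^2/4$ under $T_{\text{warm}}\ge T_{\Gcl}$. For a fixed optimistic controller, Theorem~\ref{azuma} with $\|\phi_t\|\le\Upsilon_c\sqrt{h}$ bounds $\lambda_{\max}(\sum_{i\le t}\phi_i\phi_i^\top-\mathbb{E}[\phi_i\phi_i^\top])$ by $2\sqrt{2t}\,\Upsilon_c^2 h\sqrt{\log(h(m+p)/\delta)}$. To make this uniform over all optimistic controllers Algorithm~\ref{algo_quad} can select, I would take an $\epsilon$-net of $\{\tilde{\Gcl}:\ttt\in\mathcal{S}\}$ — of diameter at most $G$, hence, since $\Gcl_t$ is a $2(m+p)mh^2$-dimensional operator, covering number $\le(G_r+2/\epsilon)^{2(m+p)mh^2}$ — union-bound over it, and absorb the $\epsilon$-perturbation, whose effect on $\mathbb{E}[\phi_t\phi_t^\top]$ is $O(h\Upsilon_c\kappa_{\nat}\sqrt{2}+h\kappa_{\nat}^2+\sigma_e^2/2)\cdot\epsilon$, into the expectation term; with $\epsilon=\min\{1,\ \bar{\sigma}_c^2\sigma_e^2/(16(h\Upsilon_c\kappa_{\nat}\sqrt{2}+h\kappa_{\nat}^2+\sigma_e^2/2))\}$ this keeps $\sigma_{\min}(\mathbb{E}[\phi_t\phi_t^\top])\ge\bar{\sigma}_c^2\sigma_e^2/8$.

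Combining via Weyl's inequality then gives, with probability $\ge1-3\delta$ after collecting the boundedness and concentration failure events, $\sigma_{\min}(\sum_{i\le t}\phi_i\phi_i^\top)\ge t\,\bar{\sigma}_c^2\sigma_e^2/8 - 2\sqrt{2t}\,\Upsilon_c^2 h\sqrt{\log(h(m+p)/\delta)+2(m+p)mh^2\log(G_r+2/\epsilon)}$, and for $t\ge T_{\bar{c}}$ the subtracted term is at most half the first, leaving $\sigma_{\min}(\sum_{i\le t}\phi_i\phi_i^\top)\ge t\,\bar{\sigma}_c^2\sigma_e^2/16$, the claimed bound. The main obstacle is the first step: establishing $\|\Gcl_t-\Gcl\|\le\bar{\sigma}_c/2$, i.e.\ that the optimal controller of the optimistic model induces a closed-loop noise-to-covariate operator close to the true one. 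This requires coupling Lipschitz continuity of the DARE map with uniform-over-epochs boundedness of the closed-loop transition products $\prod_j\mathbf{\tilde{\Lambda}}_j$; once in hand, the remainder is an essentially mechanical transcription of the Algorithm~\ref{algo_strong} argument, with $\Mcontrolset_r$ replaced by the set of optimistic models and the DFC error replaced by the controller-synthesis error.
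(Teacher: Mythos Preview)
Your proposal is correct and follows essentially the same route as the paper: use $T_{\text{warm}}\ge T_{\Gcl}$ to force $\|\Gcl_t-\Gcl\|\le\bar\sigma_c/2$ and hence $\sigma_{\min}(\mathbb{E}[\phi_t\phi_t^\top])\ge\bar\sigma_c^2\sigma_e^2/4$, apply Matrix Azuma for a fixed optimistic controller, make it uniform via an $\epsilon$-net over the $\Gcl$ operators of covering dimension $2(m+p)mh^2$, choose $\epsilon$ so the perturbation costs at most $\bar\sigma_c^2\sigma_e^2/8$, and finish with Weyl once $t\ge T_{\bar c}$. You are actually more explicit than the paper in two places---the DARE perturbation/Lipschitz argument behind the closeness of $\Gcl_t$ to $\Gcl$, and the handling of the residual/cross terms in the expectation lower bound---whereas the paper simply asserts $\mathbb{E}[\bar\phi_t\bar\phi_t^\top]\succeq\Gcl_t\Sigma_e\Gcl_t^\top$ and that the warm-up length delivers $\|\Gcl_t-\Gcl\|\le\bar\sigma_c/2$ without spelling these out.
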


\begin{proof}
Let $\tilde{\Gcl}$ be the closed-loop mapping of noise process to the covariates via optimal policy for the optimistically chosen ARX parameters. Recall that via persistence of excitation condition on the optimal controller, picking $T_{warm} \geq T_{\Gcl}$ guarantees that in adaptive control period of Algorithm \ref{algo_quad}, we have $\|\Gcl_t - \Gcl \| \leq \bar{\sigma}_c /2 $ which in turn gives $\sigma_{\min}(\Gcl_t) \geq \bar{\sigma}_c /2$ via Weyl's inequality. Thus, for all $t\geq T_{warm}$, we have that 
\begin{equation*}
    \mathbb{E}[\bar{\phi}_t \bar{\phi}_t^\top] \succeq \Gcl_t \Sigma_{e} \mathcal{G}_t^{cl \top}
\end{equation*}
where $\Sigma_{e} \in \R^{2mh \times 2mh} = \text{diag}(\sigma_e^2, \ldots, \sigma_e^2)$. This gives us $\sigma_{\min}(\mathbb{E}[\bar{\phi}_t \bar{\phi}_t^\top]) \geq \frac{\bar{\sigma}_c^2\sigma_e^2}{4}$ for $t \geq T_{warm}$. Let $\|\phi_t\| \leq \Upsilon_c \sqrt{h}$ (which holds with probability at least $1-2\delta$, see Section \ref{apx:quad_control} for the bound on inputs and outputs during the execution of Algorithm \ref{algo_quad}). Therefore, for a chosen optimistic model, using Theorem \ref{azuma}, we have the following with probability $1-3\delta$:
\begin{align}
    \lambda_{\max}\left (\sum_{i=1}^{t} \phi_i \phi_i^\top - \mathbb{E}[\phi_i \phi_i^\top]  \right) \leq 2\sqrt{2t} \Upsilon_c^2 h \sqrt{\log\left(\frac{h(m+p)}{\delta}\right)}.
\end{align}
Finally, a standard covering argument will be utilized to show that this holds for any chosen $\Mcontrol \in \Mcontrolset_r$. We know that $\| \Gcl_t\|_F \leq G_r$ for all $\Gcl_t$. Thus, we can upper bound the covering number as follows, 
\begin{equation*}
    \mathcal{N}(B(G_r), \|\cdot \|_F, \epsilon) \leq \left(G_r + \frac{2}{\epsilon} \right)^{2(m+p)mh^2}.
\end{equation*}
This gives us the following result for all centers of $\epsilon$-balls in $\| \Gcl_t\|_F$, for all $t\geq T_{\text{warm}}$, with probability $1-3\delta$:
\begin{align}
    \lambda_{\max}\left (\sum_{i=1}^{t} \phi_i \phi_i^\top - \mathbb{E}[\phi_i \phi_i^\top]  \right) \leq 2\sqrt{2t} \Upsilon_c^2 h \sqrt{\log\left(\frac{h(m+p)}{\delta}\right) + 2(m+p)mh^2\log\left(G_r + \frac{2}{\epsilon} \right)}.
\end{align}
Consider all $\Gcl$ in the $\epsilon$-balls, \textit{i.e.} effect of $\epsilon$-perturbation in $\|\Gcl \|_F$ sets, using Weyl's inequality we have with probability at least $1-3\delta$,
\begin{align*}
    \sigma_{\min}\left(\sum_{i=1}^{t} \phi_i \phi_i^\top \right) &\geq t \left( \frac{\bar{\sigma}_c^2\sigma_e^2}{4} - 2\epsilon \left( h\Upsilon_c\kappa_{\nat}\sqrt{2}  + h \kappa_{\nat}^2 +  \sigma_e^2/2 \right) \right) \\
    &\quad\qquad- 2\sqrt{2t} \Upsilon_c^2 h \sqrt{\log\left(\frac{h(m+p)}{\delta}\right) + 2(m+p)mh^2\log\left(G_r + \frac{2}{\epsilon} \right)}
\end{align*}
for $\epsilon \leq 1$. Let $\epsilon = \min \left\{1, \frac{\bar{\sigma}_c^2\sigma_e^2 }{16\left( h\Upsilon_c\kappa_{\nat}\sqrt{2}  + h \kappa_{\nat}^2 +  \sigma_e^2/2 \right)} \right \}$. For this choice of $\epsilon$, we get 
\begin{align*}
    &\sigma_{\min}\left(\sum_{i=1}^{t} \phi_i \phi_i^\top \right) \geq t \left( \frac{\bar{\sigma}_c^2\sigma_e^2}{8} \right) - 2\sqrt{2t} \Upsilon_c^2 h \sqrt{\log\left(\frac{h(m+p)}{\delta}\right) + 2(m+p)mh^2\log\left(G_r + \frac{2}{\epsilon} \right)}.
\end{align*}
For picking $T_{\text{warm}} \geq T_{\bar{c}}$, we can guarantee that after $T_{\bar{c}}$ time steps in the first epoch of adaptive control, we obtain the lower bound. 
\end{proof}

\section{Warm-up Durations for Algorithm \ref{algo_strong}} \label{apx:warm_strong}

\subsection{Explore and Commit Approach}
The warm-up duration has to be chosen to guarantee:
\begin{itemize}
    \item Persistence of excitation during the warm-up period to have reliable estimates for the exploitation phase, $T_{wp} = \frac{32 \Upsilon_w^4 h^2 \log\left(\frac{2h(m+p)}{\delta}\right)}{\sigma_{\min}^4(\Gol) \min \{\sigma_e^4, \sigma_u^4 \}}$ in Section \ref{subsec:warmuppersist},
    \item Sublinear regret upper bound, $T_{reg} = \frac{ \Bar{\beta} \kappa_{\Mcontrolset} \kappa_{\nat}h \sqrt{h' T}}{\kappa_y \sigma_c \sigma_e} \sqrt{ \frac{ \kappa_\Markov^2  \kappa_{\nat}^2 \left(\smooth + L\right)^2}{\alpha} + \kappa_y^2 (h+h') \max \left\{L, \frac{48L^2}{\alpha} \right\} }$ in Section \ref{apx:strong_convex_control},
    \item Conditional strong-convexity of the expected counterfactual losses, $T_{cx} = \frac{1024h^2\Bar{\beta}^2 \kappa_{\nat}^2 \kappa_\Mcontrolset^2 \kappa_\Markov^2 h'  \strong}{\alpha\sigma_c^2 \sigma_e^2}$, in Section \ref{apx:strong_convex_control},
    \item Stability of inputs and outputs, $T_\Markov = \frac{256h^2 \Bar{\beta}^2 \kappa_\Mcontrolset^2 \kappa_\Markov^2   }{\sigma_c^2 \sigma_e^2}$ throughout Algorithm \ref{algo_strong}, in Section \ref{apx:strong_convex_control},
    \item Existence of a good comparator policy in $\Mcontrolset_r$, $T_r = \frac{64h^2 \Bar{\beta}^2 \kappa_\psi^2   }{r^2\sigma_c^2 \sigma_e^2} $, Section \ref{apx:strong_convex_control}.
\end{itemize}

Therefore, for the explore and commit approach of Algorithm \ref{algo_strong}, the warm-up duration

\begin{equation} \label{warm_up_algo_strong_exp}
    \Tw \geq \max \{h, h', T_{wp}, T_{reg}, T_{cx}, T_{\Markov}, T_r \}.
\end{equation}

\subsection{Closed-Loop Model Estimate Updates}

In the closed-loop model estimate variation of Algorithm \ref{algo_strong}, the warm-up duration does not depend on the time horizon. Instead, the warm-up phase should guarantee that: 
\begin{itemize}
    \item Persistence of excitation during the adaptive control period, $T_{cl} = \frac{4T_{\mathcal{G}} \kappa_\Mcontrolset^2 \kappa_\Markov^2}{\left(\frac{3\sigma_c^2 \sigma_e^2 }{16h\kappa_y^3 \kappa_{\nat} } - \frac{1}{10T}\right)^2}$, where $T_\G$ is the warm-up duration to get at least unit norm estimation error at the end of the warm-up phase, in Section \ref{subsec:theo:persist_adaptive},
    \item Well-refined estimate of the Markov operator at the first epoch of adaptive control, $ T_c = \frac{2048 \Upsilon_c^4 h^2 \log\left(\frac{h(m+p)}{\delta}\right) + h'mp\log\left(\kappa_\Mcontrolset\sqrt{\min\{m,p\}} + \frac{2}{\epsilon} \right)}{\sigma_{c}^4 \sigma_e^4}$, where $\epsilon= \min \left\{1, \frac{\sigma_{c}^2 \sigma_e^2  \sqrt{\min\{m,p\}} }{68 \kappa_\nature^3 \kappa_\Markov h \left(2\kappa_\Mcontrolset^2 + 3\kappa_\Mcontrolset + 3 \right) } \right \}$, in Section \ref{subsec:theo:persist_adaptive}.
\end{itemize}
Therefore, for the closed-loop model estimate variate of Algorithm \ref{algo_strong}, the warm-up duration $\tau$
\begin{equation}\label{warm_up_algo_strong_adapt}
    \tau \geq \max \{h, h', T_{wp}, T_{cx}, T_{\Markov}, T_r, T_{cl}, T_c \}.
\end{equation}

\section{Warm-up Durations for Algorithm \ref{algo_quad}} \label{apx:warm_quad}

\subsection{Explore and Commit Approach}
The warm-up duration has to be chosen to guarantee:
\begin{itemize}
   \item Persistence of excitation during the warm-up period to have reliable estimates for the exploitation phase, $T_{wp} = \frac{32 \Upsilon_w^4 h^2 \log\left(\frac{2h(m+p)}{\delta}\right)}{\sigma_{\min}^4(\Gol) \min \{\sigma_e^4, \sigma_u^4 \}}$ in Section \ref{subsec:warmuppersist},
   \item Reliable ARX system parameter estimation, $T_N = T_{\G} \frac{8h}{\sigma_n^2(\mathcal{H})}$, where $T_\G$ is the warm-up duration to get at least unit norm estimation error at the end of the warm-up phase and $\sigma_n(\mathcal{H})$ is the $n$-th singular value of Hankel matrix constructed by the Markov parameters in Section \ref{apx:sysidarx}
   \item Stability of inputs and outputs, throughout Algorithm \ref{algo_quad}, $T_u$ and $T_y$ in Section \ref{apx:quad_control}.
\end{itemize}

Therefore, for the explore and commit approach of Algorithm \ref{algo_quad}, the warm-up duration

\begin{equation} \label{warm_up_algo_exp}
\Tw \geq \max \{h, T_{wp}, T_{N}, T_{u}, T_y, T^{2/3} \} .
\end{equation}

\subsection{Closed-Loop Model Estimate Updates}
In the closed-loop model estimate variation of Algorithm \ref{algo_strong}, the warm-up duration does not depend on the time horizon. Instead, the warm-up phase should guarantee that: 
\begin{itemize}
    \item Persistence of excitation during the adaptive control period, $T_{\Gcl} = T_{param} \left(\frac{2h + 2h \kappa_{K_x}\kappa_{K_y} + 2h(h-1)\kappa_{K_x}\kappa_{K_y}}{\bar{\sigma}_c}\right)^2$, where $T_{param} = T_{\G}\frac{20nh}{\sigma_n(\mathcal{H})}$ is the warm-up duration to get at least unit norm estimation error of ARX parameters at the end of the warm-up phase, in Section \ref{subsec:theo:persist_adaptive_optimism},
    \item Well-refined estimate of the Markov operator at the first epoch of adaptive control, $ T_{\bar{c}} = \frac{2048\Upsilon_c^4h^2 \left( \log\left(\frac{h(m+p)}{\delta}\right) + 2(m+p)mh^2\log\left(G_r + \frac{2}{\epsilon} \right) \right)}{\bar{\sigma}_c^4\sigma_e^4} $, where $\epsilon = \min \left\{1, \frac{\bar{\sigma}_c^2\sigma_e^2 }{16\left( h\Upsilon_c\kappa_{\nat}\sqrt{2}  + h \kappa_{\nat}^2 +  \sigma_e^2/2 \right)} \right \}$, in Section \ref{subsec:theo:persist_adaptive_optimism}.
\end{itemize}

Therefore, for the closed-loop model estimate variate of Algorithm \ref{algo_quad}, the warm-up duration $\tau$
\begin{equation}\label{warm_up_algo_adapt}
    \tau \geq \max \{h, T_{wp}, T_{N}, T_{u}, T_y, T_{\Gcl}, T_{\bar{c}} \}.
\end{equation}

\section{Details of Strongly Convex Cost Setting}
\label{apx:strong_convex_control}

\subsection{Counterfactual Input, Output, Loss}
Algorithm \ref{algo_strong} uses counterfactual reasoning first introduced in \citet{simchowitz2020improper} to update its DFC policy. Once the loss function $\ell_t$ is received, it constructs counterfactual inputs:
\begin{equation}
    \tilde{u}_{t-j}(\Mcontrol_t, \wh \G_i) = \sum\nolimits_{l=0}^{h'-1}M_t^{[l]}\nat_{t-j-l}(\wh \G),
\end{equation}
for $1 \leq j \leq h$. This gives what would be the inputs to the system with the current policy and current markov parameter estimates. Then, counterfactual output is computed,
\begin{equation}
    \tilde{y}_t(\Mcontrol_t, \wh \G_i) = \nature_t(\wh \G_i) + \sum\nolimits_{j=1}^h \hat{G}_i^{j} \tilde{u}_{t-j}(\Mcontrol_t, \wh \G_i)
\end{equation}
where $\hat{G}_i^j = \wh C (\wh A+\wh F \wh C)^j \wh B$ obtained my the Makrov parameter estimates. This is an estimation of output of the system if the counterfactual inputs have been applied by the agent. Finally, Algorithm \ref{algo_strong} computes the counterfactual loss using $\ell_t$:
\begin{equation}
    f_t(\Mcontrol_t, \wh \G_i) = \ell_t(\tilde{y}_t(\Mcontrol_t, \wh \G_i), \tilde{u}_{t}(\Mcontrol_t, \wh \G_i) ).
\end{equation}

Algorithm deploys online projected gradient descent on $f_t$ to improve the controller at each time step:
\[\Mcontrol_{t+1}=\proj_{\Mcontrolset_r}\left(\Mcontrol_t-\eta_t\nabla f_t\left(\Mcontrol_t,\wh{\mathcal{G}}_i\right)   \right)
\]

\subsection{Estimation and Boundedness Lemmas and Main Regret Results}

In this section, we will present the exact statement for Theorem \ref{thm:reg_s_cvx_exp_commit} and Theorem \ref{thm:reg_s_cvx_adapt}. Note that the proofs follow similar nature. Before stating these results, we state the following assumption on the choice of h, which simplifies the presentation and could be easily satisfied due to open-loop stability of the ARX system.

\begin{assumption} \label{asm:openloopdecay}
The length of the estimation of Markov parameters, $h$, is chosen such that for the horizon of T, we have
\begin{equation}
    \psi_{\Markov}(h) \coloneqq \max \{ \psi_{\Markov_{u\rightarrow y}}(h), \psi_{\Markov_{y\rightarrow y}}(h)  \} \leq 1/10T,
\end{equation}
where $\psi_\Markov $ is induced decay function on Markov operator $\Markov$, \textit{i.e.}, $\psi_\Markov(h) \coloneqq \sum_{i\geq h}\|{G}^{i}\|$.
\end{assumption}

Note that combining the choice of warm-up durations given in Appendix \ref{apx:warm_strong} and the results in Appendix \ref{subsec:warmuppersist}, we guarantee that open-loop data is persistently excited and the estimation error rate at the end of the warm-up phase is $\OO(1/\sqrt{T_{warm}})$. In the following, we show that with the choice of $T_{warm}$, the Markov operator estimates are well refined. First define $\alpha$, such that 
\begin{equation}
    \alpha \!\leq\! \strong \sigma_e^2 \left(1+\left(\frac{\sigma_{\min}\left(C\right)}{1+\|A+FC\|^2}\right)^2\right).
\end{equation}
Combining Theorem \ref{theo:closedloopid}, Lemma \ref{lem:logdet} and Lemma \ref{lem:boundednature}, we also have $\beta_t \leq \Bar{\beta}$ for all $t\geq T_{warm}$ where 
\[
\Bar{\beta} = \sqrt{mR\left( \log(1/\delta) + \frac{h(m+p)}{2} \log  \left(\frac{\lambda(m+p) + \tau (\kappa_u^2 + \kappa_y^2)}{\lambda(m+p) }\right)\right)} + S\sqrt{\lambda} +\frac{\sqrt{h}}{T}.
\]
Moreover, let 
\begin{equation*}
    T_{cx} = \frac{1024h^2\Bar{\beta}^2 \kappa_{\nat}^2 \kappa_\Mcontrolset^2 \kappa_\Markov^2 h'  \strong}{\alpha\sigma_c^2 \sigma_e^2}, \quad T_\Markov = \frac{256h^2 \Bar{\beta}^2 \kappa_\Mcontrolset^2 \kappa_\Markov^2   }{\sigma_c^2 \sigma_e^2},\quad T_r =  \frac{64h^2 \Bar{\beta}^2 \kappa_\psi^2   }{r^2\sigma_c^2 \sigma_e^2}.
\end{equation*}

In the following, we show that sum of Markov parameter estimation errors are well-bounded with the particular choice of warm-up time. The following lemma will be the key in proving the regret results.

\begin{lemma}[Sum of Markov Parameter Estimation Errors] \label{lem:concentrationgeneral} Let $\Delta \G \coloneqq \max \{ \sum_{j\geq 1} \| \wh G_{i, u\rightarrow y}^{j} - G_{u\rightarrow y}^{j} \|, \sum_{j\geq 1} \| \wh G_{i, y\rightarrow y}^{j} - G_{y\rightarrow y}^{j} \|\}$. For the chosen $T_{warm}$ duration, \textit{i.e.} (\ref{warm_up_algo_strong_exp}) or (\ref{warm_up_algo_strong_adapt}), we have
\begin{equation*}
     \Delta \G \leq  \epsilon_\Markov(i,\delta) \leq \min \left\{\frac{1}{4 \kappa_{b} \kappa_{\mathcal{M}} \kappa_{\mathbf{G}}} \sqrt{\frac{\alpha}{H^{\prime} \underline{\alpha}_{l o s s}}}, \frac{1}{2 \kappa_{\mathcal{M}} \kappa_ \mathbf{G}}, \frac{r}{\kappa_\psi}\right\}
\end{equation*}
with probability at least $1-4\delta$, where $\epsilon_\Markov(i,\delta) = \frac{8h\Bar{\beta}}{ \sigma_{c} \sigma_{e} \sqrt{2^{i-1}T_{\text{warm}}}} $.
\end{lemma}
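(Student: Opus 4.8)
The plan is to split each of the two sums defining $\Delta\G$ at the truncation length $h$. For $j>h$ the estimator contributes nothing, $\wh G_{i,u\rightarrow y}^{j}=\wh G_{i,y\rightarrow y}^{j}=0$, so that part of the sum equals $\sum_{j>h}\|G_{u\rightarrow y}^{j}\|$ (resp. $\sum_{j>h}\|G_{y\rightarrow y}^{j}\|$), which Assumption \ref{asm:openloopdecay} bounds by $\psi_\Markov(h)\le 1/(10T)$. For the head $j\le h$, since the first $h$ input-to-output and output-to-output blocks of $\wh{\mathcal{G}}_i-\mathcal{G}$ are disjoint submatrices, Cauchy--Schwarz gives
\[
\sum\nolimits_{j=1}^{h}\big\|\wh G_{i,u\rightarrow y}^{j}-G_{u\rightarrow y}^{j}\big\|\;\le\;\sqrt h\Big(\sum\nolimits_{j=1}^{h}\big\|\wh G_{i,u\rightarrow y}^{j}-G_{u\rightarrow y}^{j}\big\|_F^2\Big)^{1/2}\;\le\;\sqrt h\,\|\wh{\mathcal{G}}_i-\mathcal{G}\|_F,
\]
and likewise for the output block, so that $\Delta\G\le \sqrt h\,\|\wh{\mathcal{G}}_i-\mathcal{G}\|_F + 1/(10T)$.

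Next I would control $\|\wh{\mathcal{G}}_i-\mathcal{G}\|_F$ via persistence of excitation. The trace confidence bound of Theorem \ref{theo:closedloopid}, $\Tr\big((\wh{\mathcal{G}}_i-\mathcal{G})V_t(\wh{\mathcal{G}}_i-\mathcal{G})^\top\big)\le\beta_t$, together with a lower bound on $\sigma_{\min}(V_t)$ and the uniform bound $\beta_t\le\bar\beta$ (which itself follows from Theorem \ref{theo:closedloopid}, the log-determinant bound of Lemma \ref{lem:logdet}, and the covariate boundedness of Lemma \ref{lem:boundednature}), yields $\|\wh{\mathcal{G}}_i-\mathcal{G}\|_F\lesssim \bar\beta/(\sigma_c\sigma_e\sqrt{2^{i-1}T_{\text{warm}}})$: in the closed-loop variant the lower bound $\sigma_{\min}(V_t)\ge(\sigma_c^2\sigma_e^2/16)\,2^{i-1}T_{\text{warm}}$ comes from Lemma \ref{closedloop_persistence_appendix} (valid once $T_{\text{warm}}\ge\max\{T_{cl},T_c\}$), while in the explore-and-commit variant Lemma \ref{lem:openlooppersistence} (valid once $T_{\text{warm}}\ge T_{wp}$) gives an analogous — in fact at least as strong — bound. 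Plugging this into $\Delta\G\le\sqrt h\,\|\wh{\mathcal{G}}_i-\mathcal{G}\|_F+1/(10T)$ and folding the $1/(10T)$ tail into the leading $\Theta(1/\sqrt{2^{i-1}T_{\text{warm}}})$ term (possible since $h\ge1$ and $2^{i-1}T_{\text{warm}}\le T$, so the factor $8h$ in $\epsilon_\Markov$ leaves room beyond the $\approx 4\sqrt h$ one gets) gives $\Delta\G\le\epsilon_\Markov(i,\delta)$. A union bound over the failure events of Theorem \ref{theo:closedloopid}, the relevant persistence lemma, and the covariate-boundedness events yields the claimed confidence $1-4\delta$.

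For the second inequality $\epsilon_\Markov(i,\delta)\le\min\{\cdot,\cdot,\cdot\}$, I would use $2^{i-1}T_{\text{warm}}\ge T_{\text{warm}}$ to reduce to $i=1$, and then match each of the three requirements against the warm-up thresholds term by term: $\epsilon_\Markov\le\tfrac{1}{4\kappa_\nature\kappa_\Mcontrolset\kappa_\Markov}\sqrt{\alpha/(h'\strong)}$ is exactly $T_{\text{warm}}\ge T_{cx}$, the bound $\epsilon_\Markov\le\tfrac1{2\kappa_\Mcontrolset\kappa_\Markov}$ is exactly $T_{\text{warm}}\ge T_\Markov$, and $\epsilon_\Markov\le r/\kappa_\psi$ is exactly $T_{\text{warm}}\ge T_r$; all three are included in the warm-up choices (\ref{warm_up_algo_strong_exp}) and (\ref{warm_up_algo_strong_adapt}).

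The step requiring the most care is the persistence input in the closed-loop variant: Lemma \ref{closedloop_persistence_appendix} is itself stated in terms of bounded inputs, outputs, and output-uncertainty estimates, and those bounds require the $\DFC$ controllers deployed in each epoch to be stabilizing — which is exactly what a small $\Delta\G$ certifies. This apparent circularity is unwound by induction over epochs: the first estimate $\wh{\mathcal{G}}_0$ is formed from open-loop warm-up data, so Lemma \ref{lem:openlooppersistence} applies unconditionally and already delivers $\Delta\G\le\epsilon_\Markov(1,\delta)\le\min\{\cdot\}$; this makes the first-epoch controllers stabilizing, keeps the covariates bounded, activates Lemma \ref{closedloop_persistence_appendix} for the next model update, and the argument propagates with an even smaller $\epsilon_\Markov(i,\delta)$ at each subsequent epoch. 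In the explore-and-commit variant there is no such circularity, since only $\wh{\mathcal{G}}_0$, built from open-loop data, is ever used.
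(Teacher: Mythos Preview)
Your proposal is correct and follows essentially the same route as the paper: split the sum at $h$, bound the tail by $\psi_\Markov(h)\le 1/(10T)$ via Assumption~\ref{asm:openloopdecay}, bound the head by the Markov-parameter estimation error coming from Theorem~\ref{theo:closedloopid} plus the persistence-of-excitation lower bound on $\sigma_{\min}(V_t)$, and then read off the second inequality from $T_{\text{warm}}\ge\max\{T_{cx},T_\Markov,T_r\}$. The only cosmetic difference is that the paper bounds the head by $h\cdot\max\{\|\wh\Markov_{\mathbf{u\rightarrow y}}(h)-\Markov_{\mathbf{u\rightarrow y}}(h)\|,\|\wh\Markov_{\mathbf{y\rightarrow y}}(h)-\Markov_{\mathbf{y\rightarrow y}}(h)\|\}$ (each block is a submatrix, so its spectral norm is dominated by the full block's), whereas you use Cauchy--Schwarz to get $\sqrt h\,\|\wh{\mathcal G}_i-\mathcal G\|_F$; your bound is slightly sharper but both land comfortably inside the $8h$ slack of $\epsilon_\Markov(i,\delta)$. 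Your explicit treatment of the probability accounting and of the epoch-by-epoch induction that resolves the apparent circularity with Lemma~\ref{closedloop_persistence_appendix} is more detailed than the paper, which simply asserts the numerical second inequality.
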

\begin{proof}

\begin{align*}
    \Delta \G &\leq \max \left \{ \sum_{j\geq h+1} \| \wh G_{i, u\rightarrow y}^{j} - G_{u\rightarrow y}^{j} \|, \sum_{j\geq h+1} \| \wh G_{i, y\rightarrow y}^{j} - G_{y\rightarrow y}^{j} \| \right \} \\
    &\quad + h \max\{ \| \Markov_{\mathbf{u\rightarrow y}}(h) - \wh\Markov_{\mathbf{u\rightarrow y}}(h) \|, \| \Markov_{\mathbf{y\rightarrow y}}(h) - \wh\Markov_{\mathbf{y\rightarrow y}}(h) \| \} \\
    &\leq 1/10T + \frac{4h\beta}{ \sigma_{c} \sigma_{e} \sqrt{2^{i-1}T_{\text{warm}}}} \leq \epsilon_\Markov(i,\delta),
\end{align*}
proving the first inequality. The second inequality is numerical and follows from the choice of $T_{warm} \geq \max \{ T_{cx}, T_\Markov, T_r \}$.
\end{proof}

Finally, we show that the estimates of the output uncertainties ($\nat_t(\wh \G_i)$), the \DFC policy inputs ($u_t^{\Mcontrol_t}$) and the outputs of the ARX system ($y_t$) are bounded. To this end, for some $\delta \in (0,1)$, define $\kappa_{\nat} = R \sqrt{2m \log \frac{2mT}{\delta}}$ and $\kappa_{u_{b}} = \sigma_u \sqrt{2p \log \frac{2pT}{\delta}}$.

\begin{lemma}[Boundedness Lemma]\label{lem:boundednature} Let $\delta \in (0,1)$. For the chosen warm-up duration of Algorithm \ref{algo_strong} with explore and commit approach, \textit{i.e.} $T_{warm} = \Tw$, and for the chosen warm-up duration of Algorithm \ref{algo_strong} with closed-loop model estimate updates, we have the following bounds $\forall t$ with probability at least $1-2\delta$,
\begin{align*}
    \|\nat_t(\G)\| &\leq \kappa_{\nat}, \qquad &&\|u_t\| \leq \kappa_u \coloneqq 2\max \{ \kappa_{u_{b}},\kappa_\Mcontrolset \kappa_{\nat} \} \\
    \|y_t\| &\leq \kappa_y \coloneqq \kappa_\nature + \kappa_\Markov\kappa_u
    && \|\nat_t(\wh{\mathcal{G}})\| \leq 2\kappa_{\nat}.
\end{align*}
\end{lemma}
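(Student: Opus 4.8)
The plan is to prove all four bounds simultaneously by induction on the time index $t$, with the warm-up phase as the base case and Lemma~\ref{lem:concentrationgeneral} supplying the smallness of the Markov-parameter estimation error needed to close the induction. Two of the bounds hold once we condition on elementary concentration events. By the coordinate-wise sub-Gaussian tail bound of Assumption~\ref{general_noise} and a union bound over the $m$ coordinates and $t\le T$, with probability at least $1-\delta$ we have $\|e_t\|\le R\sqrt{2m\log(2mT/\delta)}=\kappa_{\nat}$ for all $t$; since $A$ is stable the transient $CA^tx_0$ is $\OO(1/T^2)$ and is dominated, so $\|\nat_t(\G)\|=\|CA^tx_0+e_t\|\le\kappa_{\nat}$. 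A matching union bound over the Gaussian exploration inputs gives $\|u_t\|\le\sigma_u\sqrt{2p\log(2pT/\delta)}=\kappa_{u_{b}}$ on the warm-up, with probability at least $1-\delta$. I would work on the intersection of these two events (probability at least $1-2\delta$) and merely quote from Lemma~\ref{lem:concentrationgeneral} the numerical bounds on $\Delta\G$ that hold for the prescribed warm-up lengths (\ref{warm_up_algo_strong_exp})/(\ref{warm_up_algo_strong_adapt}).

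For the base case $t\le T_{\text{warm}}$ the input is $u_t\sim\mathcal N(0,\sigma_u^2I)$, so $\|u_t\|\le\kappa_{u_{b}}\le\kappa_u$; stability of $A+FC$ then bounds the warm-up state and hence $\|y_t\|$ exactly as in Appendix~\ref{subsec:warmuppersist}, and the constants are chosen so that that bound is at most $\kappa_y$. The bound $\|\nat_t(\wh\G_i)\|\le2\kappa_{\nat}$ on the warm-up then follows from the estimate in the inductive step, applied with the already-established warm-up bounds on the covariates entering the definition of $\nat_t(\wh\G_i)$.

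For the inductive step I would fix $t$ in the adaptive-control phase, assume $\|u_s\|\le\kappa_u$, $\|y_s\|\le\kappa_y$, $\|\nat_s(\wh\G_i)\|\le2\kappa_{\nat}$ for all $s<t$, and subtract the definition of $\nat_t(\wh\G_i)$ (Algorithm~\ref{algo_strong}) from that of $\nat_t(\G)$, using (\ref{output}), Definition~\ref{def:markov}, and $\wh G_i^{k+1}=0$ for $k\ge h$, to get
\begin{align*}
\|\nat_t(\wh\G_i)-\nat_t(\G)\|
&\le\Big(\sum_{k\ge0}\|G_{u\rightarrow y}^{k+1}-\wh G_{i,u\rightarrow y}^{k+1}\|\Big)\max_{s<t}\|u_s\|+\Big(\sum_{k\ge0}\|G_{y\rightarrow y}^{k+1}-\wh G_{i,y\rightarrow y}^{k+1}\|\Big)\max_{s<t}\|y_s\|\\
&\le\Delta\G\,(\kappa_u+\kappa_y).
\end{align*}
The prescribed $T_{\text{warm}}$ and Lemma~\ref{lem:concentrationgeneral} make $\Delta\G$ small enough that, together with $\kappa_u\ge2\kappa_{\Mcontrolset}\kappa_{\nat}$ and $\kappa_y=\kappa_{\nat}+\kappa_{\Markov}\kappa_u$, the right side is at most $\kappa_{\nat}$, which yields $\|\nat_t(\wh\G_i)\|\le2\kappa_{\nat}$. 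Since $\Mcontrol_t\in\Mcontrolset_r$ obeys $\sum_j\|M_t^{[j]}\|\le\kappa_{\Mcontrolset}$, and $\nat_{t-j}(\wh\G_i)$ for $j\ge1$ is covered by the inductive hypothesis, $\|u_t\|=\big\|\sum_{j=0}^{h'-1}M_t^{[j]}\nat_{t-j}(\wh\G_i)\big\|\le2\kappa_{\Mcontrolset}\kappa_{\nat}\le\kappa_u$. Finally I would write $y_t$ through the closed-loop $(A+FC)$-rollout of Appendix~\ref{apx:Policies}, $y_t=\sum_{k\ge0}C(A+FC)^kB\,u_{t-1-k}+v_t$ with $v_t$ a geometrically weighted combination of past noises, so that $\|y_t\|\le\big(\sum_{k\ge0}\|C(A+FC)^kB\|\big)\max_{s\le t}\|u_s\|+\|v_t\|\le\kappa_{\Markov}\kappa_u+\kappa_{\nat}=\kappa_y$, using $\sum_{k\ge0}\|C(A+FC)^kB\|\le\kappa_{\Markov}$ and that $\|v_t\|$ is controlled by the noise concentration (the summable closed-loop gain absorbed into the constant). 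This closes the induction, and the closed-loop-update variant needs no change since there $\Delta\G$ only decreases across epochs.

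I expect the main obstacle to be the circular coupling among $u_t$, $y_t$ and $\nat_t(\wh\G_i)$: a crude bound fails to close because $y_t$ feeds back into itself through the $G_{y\rightarrow y}$ terms and because the computed output uncertainties are contaminated by the Markov-parameter estimation error. Both issues are handled exactly as above — the $y$-recursion is broken by replacing the output-feedback representation with the geometrically summable $(A+FC)$-rollout, legitimate precisely because $A+FC$ is stable, and the contamination term $\Delta\G(\kappa_u+\kappa_y)$ is forced below $\kappa_{\nat}$ by taking the warm-up long enough, as quantified in Lemma~\ref{lem:concentrationgeneral}. The remaining work is routine bookkeeping of the constants $\kappa_{\nat},\kappa_u,\kappa_y,\kappa_{\Mcontrolset},\kappa_{\Markov}$, which are defined so that these inequalities hold with room to spare.
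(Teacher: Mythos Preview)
Your proposal is correct and matches the paper's approach: the paper's own proof is the single line ``This lemma follows trivially via standard sub-Gaussian vector norm inequality and the second inequality in Lemma~\ref{lem:concentrationgeneral}'', and your induction on $t$ is exactly the natural way to unpack that sentence. The only place to be a touch more careful is the claim $\|v_t\|\le\kappa_{\nat}$ in the $y_t$ step, since $v_t=\sum_{k\ge0}C(A+FC)^kF\,e_{t-1-k}+e_t$ carries the full $(A+FC)$-weighted noise history rather than a single $e_t$; as you already note, the summable gain is absorbed into the constants, so this is pure bookkeeping.
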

This lemma follows trivially via standard sub-Gaussian vector norm inequality and the second inequality in Lemma \ref{lem:concentrationgeneral}.

In the following precise statements of Theorem \ref{thm:reg_s_cvx_exp_commit} and \ref{thm:reg_s_cvx_adapt} are given and since their proofs differ only at one place, only the proof of Theorem \ref{thm:reg_s_cvx_adapt} is given with the explanation about the difference.

\begin{theorem}[Precise Statement of Theorem \ref{thm:reg_s_cvx_exp_commit}]\label{thm:apx_reg_s_explore}
Let $h'$ satisfy $h'\geq 3h\geq 1$, $\psi(\lfloor h'/2\rfloor-h)\leq \kappa_\Mcontrolset/T$ and $\psi_\Markov(h+1)\leq 1/10T$. If the loss function follows  (\ref{asm:lipschitzloss}) for the given ARX system, then the Algorithm \ref{algo_strong} with step size $\eta_t = \frac{12}{\alpha t}$ using explore and commit approach after a warm-up period ($T_{warm}$) of $\Tw$, given in (\ref{warm_up_algo_strong_exp}), has its regret bounded as 
\begin{align*}
    &\reg(T) \lesssim ~ T_{warm} L \kappa_y^2 ~+~ \frac{L^2{h'}^3\min\lbrace{m,p\rbrace}\kappa_{\nat}^4\kappa_{\Markov}^4\kappa_{\Mcontrolset}^2}{\min\lbrace \alpha, L\kappa_{\nat}^2\kappa_{\Markov}^2\rbrace}\bigg(\!1\!+\!\frac{\smooth}{\min\lbrace{m,p\rbrace} L\kappa_\Mcontrolset}\!\bigg)\log\bigg(\frac{T}{\delta}\bigg) \\
    &+ \frac{64(T-T_{warm})~h^2\Bar{\beta}^2h' \kappa_{\nat}^2 \kappa_{\Mcontrolset}^2 }{ T_{\text{warm}} \sigma_{c}^2 \sigma_{e}^2 }  \bigg( \frac{ \kappa_\Markov^2  \kappa_{\nat}^2 \left(\smooth + L\right)^2}{\alpha} + \kappa_y^2 (h+h')  \max \Big\{L, \frac{48L^2}{\alpha} \Big\} \bigg).
\end{align*}
with probability at least $1-5\delta$. The choice of $T_{warm}$ guarantees that the regret is $\OO(\sqrt{T})$ with probability at least $1-5\delta$, \textit{i.e.}, 
\begin{align*}
    \reg(T) &\lesssim  \frac{ (1+L) \Bar{\beta} \kappa_{\Mcontrolset} \kappa_{\nat} \kappa_y h \sqrt{h'}\sqrt{T}}{ \sigma_c \sigma_e} \sqrt{ \frac{ \kappa_\Markov^2  \kappa_{\nat}^2 \left(\smooth + L\right)^2}{\alpha} + \kappa_y^2 (h+h') \max \left\{L, \frac{48L^2}{\alpha} \right\} } \\
    &+ \frac{L^2{h'}^3\min\lbrace{m,p\rbrace}\kappa_{\nat}^4\kappa_{\Markov}^4\kappa_{\Mcontrolset}^2}{\min\lbrace \alpha, L\kappa_{\nat}^2\kappa_{\Markov}^2\rbrace}\bigg(\!1\!+\!\frac{\smooth}{\min\lbrace{m,p\rbrace} L\kappa_\Mcontrolset}\!\bigg)\log\bigg(\frac{T}{\delta}\bigg) 
\end{align*}

\end{theorem}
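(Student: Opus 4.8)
\emph{Proof sketch.} The plan is to split $\reg(T)$ into a warm-up contribution and an exploitation contribution, and to bound the latter by combining the online gradient descent regret on the counterfactual losses with the error of replacing the true Markov operator $\G$ by the fixed estimate $\wh\G_0$ obtained at the end of warm-up.

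\textbf{Warm-up and boundedness.} During the first $\Tw$ steps the algorithm injects $u_t\sim\N(0,\sigma_u^2 I)$. By Lemma~\ref{lem:openlooppersistence} the condition $\Tw\geq T_{wp}$ forces $\sigma_{\min}(V_{\Tw})\gtrsim\Tw$, so Theorem~\ref{theo:closedloopid} gives $\|\wh\G_0-\G\|_F=\tilde{\OO}(1/\sqrt{\Tw})$; Lemma~\ref{lem:concentrationgeneral} sharpens this to $\Delta\G\leq\epsilon_\Markov(0,\delta)=\OO\!\big(h\bar\beta/(\sigma_c\sigma_e\sqrt{\Tw})\big)$ and, through the thresholds $T_{cx},T_\Markov,T_r$, guarantees $\Delta\G$ is also small enough for the later steps. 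Lemma~\ref{lem:boundednature} then yields uniform bounds $\|\nat_t(\G)\|\leq\kappa_{\nat}$, $\|\nat_t(\wh\G_0)\|\leq 2\kappa_{\nat}$, $\|u_t^{\Mcontrol_t}\|\leq\kappa_u$, $\|y_t\|\leq\kappa_y$ over the whole horizon with high probability, so $|\ell_t|\leq L\kappa_y^2$ throughout and the warm-up costs at most $\OO(\Tw L\kappa_y^2)$.

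\textbf{Exploitation regret decomposition.} Following Theorem~5 of \citet{simchowitz2020improper}, I split $\sum_{t>\Tw}\big(c_t-\ell_t(y_t^{\Mcontrol_\star},u_t^{\Mcontrol_\star})\big)$ into: (i) $\sum_t\big(c_t-f_t(\Mcontrol_t,\wh\G_0)\big)$, the gap between the realized loss and the counterfactual loss of the played DFC under the estimated dynamics; (ii) the online-learning regret $\sum_t f_t(\Mcontrol_t,\wh\G_0)-\min_{\Mcontrol\in\Mcontrolset_r}\sum_t f_t(\Mcontrol,\wh\G_0)$; (iii) the estimation gap $\min_{\Mcontrol\in\Mcontrolset_r}\sum_t f_t(\Mcontrol,\wh\G_0)-\min_{\Mcontrol\in\Mcontrolset}\sum_t f_t(\Mcontrol,\G)$; and (iv) the DFC-vs-LDC comparator gap, which is $\OO(1)$ by Appendix~A and the hypotheses $\psi(\lfloor h'/2\rfloor-h)\leq\kappa_\Mcontrolset/T$, $\psi_\Markov(h+1)\leq 1/10T$. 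Terms (i) and (iii) are controlled through $\Delta\G$: using~(\ref{asm:lipschitzloss}), Lemma~\ref{lem:boundednature}, and the decay of the induced closed-loop Markov operators, each is at most $\OO\!\big((T-\Tw)\,\mathrm{poly}(\kappa)\,h\,\Delta\G\big)$, and after exploiting strong convexity to show that the aggregate sensitivity is quadratic in $\Delta\G$ once summed over the OGD iterates, these terms scale as $(T-\Tw)\Delta\G^2=\OO\!\big((T-\Tw)h^2\bar\beta^2 h'\kappa_{\nat}^2\kappa_\Mcontrolset^2/(\Tw\sigma_c^2\sigma_e^2)\big)$ times the displayed $\kappa,L,\alpha$ polynomial.

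\textbf{Online learning term.} Term (ii) is the projected-OGD regret on $f_t(\cdot,\wh\G_0)$. The key step is to verify that, conditioned on the history, the expected counterfactual losses are $\alpha$-strongly convex in $\Mcontrol$ — this uses the covariance lower bound on the $\nat_t$'s (via $\sigma_e^2$ and $\sigma_{\min}(C)$, i.e.\ $\alpha\leq\strong\sigma_e^2(1+(\sigma_{\min}(C)/(1+\|A+FC\|^2))^2)$) and requires $\Tw\geq T_{cx}$ so that the estimated loss surface inherits strong convexity from the true one. Given this, projected OGD with $\eta_t=12/(\alpha t)$ on $\alpha$-strongly convex, $G_f$-Lipschitz losses has regret $\OO(G_f^2\log T/\alpha)$; bounding $G_f$ via Lemma~\ref{lem:boundednature} and the smoothness $\smooth$ produces the $\frac{L^2{h'}^3\min\{m,p\}\kappa_{\nat}^4\kappa_\Markov^4\kappa_\Mcontrolset^2}{\min\{\alpha,L\kappa_{\nat}^2\kappa_\Markov^2\}}\big(1+\frac{\smooth}{\min\{m,p\}L\kappa_\Mcontrolset}\big)\log(T/\delta)$ term. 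Adding (i)--(iv) and the warm-up cost gives the first displayed bound, and taking $\Tw$ as in~(\ref{warm_up_algo_strong_exp}) — which in particular forces $\Tw\gtrsim\sqrt T$ to balance $\Tw L\kappa_y^2$ against the $(T-\Tw)/\Tw$ term — yields $\reg(T)=\OO(\sqrt T)$. I expect the main difficulty to be the conditional strong-convexity verification in (ii) together with upgrading the $\Delta\G$-dependence of (i) and (iii) from linear to quadratic: both rest on propagating the persistence-of-excitation lower bound through the counterfactual construction and on stability of the closed-loop Markov operators, which is precisely what the thresholds $T_{cx},T_\Markov,T_r$ in~(\ref{warm_up_algo_strong_exp}) are built to ensure.
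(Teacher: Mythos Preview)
Your high–level outline is close, but the decomposition you write down is not the one the paper (and Theorem 5 of \citet{simchowitz2020improper}) actually uses, and this matters for the step you flag as the main difficulty. You pivot around the \emph{estimated} counterfactual losses $f_t(\cdot,\wh\G_0)$, which puts all of the $\Delta\G$-dependence into your terms (i) and (iii). Each of these is only Lipschitz in $\G$ at a fixed $\Mcontrol$, so naively they contribute $(T-\Tw)\cdot\Delta\G$, and your sentence ``after exploiting strong convexity to show that the aggregate sensitivity is quadratic in $\Delta\G$'' does not name a mechanism that would actually upgrade this: strong convexity is in $\Mcontrol$, not in $\G$, so it does nothing for $|f_t(\Mcontrol,\G)-f_t(\Mcontrol,\wh\G_0)|$ at the played $\Mcontrol_t$.

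The paper instead decomposes around the \emph{true} counterfactuals $f_t^{\mathrm{pred}}$ and $F_t^{\mathrm{pred}}[\Mcontrol_{t:t-h}]$. With that choice, the ``algorithm truncation'' and ``comparator truncation'' terms are pure $h$-truncation and are $O(1)$ by $\psi_\Markov(h+1)\le 1/10T$; they do not see $\Delta\G$ at all. The estimation error then enters in exactly two places, both of which are automatically quadratic. First, in the $f^{\mathrm{pred}}$ policy regret one applies Theorem~\ref{theo:fpred} with noisy gradients $g_t=\nabla f_t(\Mcontrol_t,\wh\G_0)$ and $\epsilon_t=g_t-\nabla f_t^{\mathrm{pred}}(\Mcontrol_t)$; Lemma~\ref{Lemma8.1} gives $\|\epsilon_t\|\le C_{\mathrm{approx}}\,\epsilon_\Markov$, and the theorem contributes $\tfrac{6}{\alpha}\sum_t\|\epsilon_t\|^2$, i.e.\ $(T-\Tw)\epsilon_\Markov^2$. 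Second, Theorem~\ref{theo:fpred} also outputs the negative term $-\tfrac{\alpha}{48}\sum_t\|\Mcontrol_t-\Mcontrol_{\mathrm{comp}}\|_F^2$, which is then spent against the comparator approximation error: that term is bounded (Lemma~G.2 of \citet{lale2020logarithmic}) by $\sum_t\gamma\|\Mcontrol_t-\Mcontrol_{\mathrm{comp}}\|_F^2+C\epsilon_\Markov^2/\gamma$, and choosing $\gamma=\alpha/48$ cancels the first sum and leaves a $(T-\Tw)\epsilon_\Markov^2$ contribution. These two mechanisms are what produce the $\tfrac{(T-\Tw)h^2\bar\beta^2}{\Tw\sigma_c^2\sigma_e^2}$ factor in the stated bound; your write-up does not invoke either one, and without them your terms (i) and (iii) would force $\Tw\sim T^{2/3}$ rather than $\Tw\sim\sqrt{T}$.
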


\begin{theorem}[Precise Statement of Theorem \ref{thm:reg_s_cvx_adapt}] \label{thm:apx_reg_s_adapt}
Let $h'$ satisfy $h'\geq 3h\geq 1$, $\psi(\lfloor h'/2\rfloor-h)\leq \kappa_\Mcontrolset/T$ and $\psi_\Markov(h+1)\leq 1/10T$. If the loss function follows  (\ref{asm:lipschitzloss}) for the given ARX system, then the Algorithm \ref{algo_strong} with access to persistently exciting $\Mcontrolset_r$, with step size $\eta_t = \frac{12}{\alpha t}$ using closed-loop model estimate updates via doubling epochs after a warm-up period ($T_{warm}$) of $\tau$, given in (\ref{warm_up_algo_strong_adapt}), has its regret bounded as 

\begin{align*}
    &\reg(T) \lesssim ~ T_{warm} L \kappa_y^2 ~+~ \frac{L^2{h'}^3\min\lbrace{m,p\rbrace}\kappa_{\nat}^4\kappa_{\Markov}^4\kappa_{\Mcontrolset}^2}{\min\lbrace \alpha, L\kappa_{\nat}^2\kappa_{\Markov}^2\rbrace}\bigg(\!1\!+\!\frac{\smooth}{\min\lbrace{m,p\rbrace} L\kappa_\Mcontrolset}\!\bigg)\log\bigg(\frac{T}{\delta}\bigg) \\
    &+ \sum_{t=T_{warm}+1}^{T}  \epsilon^2_\Markov \!\!\left( \Big \lceil \log_2 \big( \frac{t}{T_{warm}}\big) \Big \rceil, \delta \right)  h' \kappa_{\nat}^2 \kappa_{\Mcontrolset}^2 \bigg( \frac{ \kappa_\Markov^2  \kappa_{\nat}^2 \left(\smooth + L\right)^2}{\alpha} + \kappa_y^2 (h+h')  \max \Big\{L, \frac{48L^2}{\alpha} \Big\} \bigg).
\end{align*}
with probability at least $1-5\delta$. The choice of $T_{warm}$ guarantees that the regret is $O(polylog(T))$ with probability at least $1-5\delta$.
\end{theorem}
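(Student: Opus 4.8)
The strategy is to run the regret decomposition of Theorem~5 of \citet{simchowitz2020improper} within each doubling epoch, and then exploit the fact that the Markov-parameter estimate is refreshed at the start of every epoch so that the estimation-error part of the regret collapses to a constant per epoch. First I would check that the warm-up length $\tau$ from~(\ref{warm_up_algo_strong_adapt}) simultaneously switches on every ingredient needed below: (i) open-loop persistence of excitation during warm-up, so that $\wh{\mathcal{G}}_0$ is already refined (Lemma~\ref{lem:openlooppersistence}, needs $\tau\geq T_{wp}$); (ii) closed-loop persistence of excitation for the whole adaptive-control phase, $\sigma_{\min}(V_t)\gtrsim t\,\sigma_c^2\sigma_e^2$ inside each epoch (Lemma~\ref{closedloop_persistence_appendix}, needs $\tau\geq\max\{T_{cl},T_c\}$); (iii) uniform boundedness of $\nat_t(\mathcal{G})$, $\nat_t(\wh{\mathcal{G}}_i)$, $u_t$, $y_t$ (Lemma~\ref{lem:boundednature}); and (iv) the numerical conditions $\tau\geq\max\{T_{cx},T_\Markov,T_r\}$ giving conditional strong convexity of the expected counterfactual loss, stability of counterfactual inputs/outputs, and that the comparator $\Mcontrol_\star\in\Mcontrolset$ lies in the interior of $\Mcontrolset_r$. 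Feeding (ii) into Theorem~\ref{theo:closedloopid} and Lemma~\ref{lem:concentrationgeneral}, on epoch $i$ the running estimate obeys $\Delta\mathcal{G}\leq\epsilon_\Markov(i,\delta)=\tilde{O}(1/\sqrt{2^{i-1}\tau})$, uniformly over the deployed controllers, with high probability.

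Next I would decompose $\reg(T)$ into the warm-up block, bounded by $\tau\,L\kappa_y^2$ via~(\ref{asm:lipschitzloss}) and Lemma~\ref{lem:boundednature}, plus the adaptive block. In the adaptive block I insert the counterfactual loss $f_t(\Mcontrol_t,\wh{\mathcal{G}}_i)$ and split into three pieces. The ``memory'' error $\sum_t|c_t-f_t(\Mcontrol_t,\wh{\mathcal{G}}_i)|$ is controlled through $L\Gamma$-Lipschitzness by $\|y_t-\tilde y_t\|+\|u_t-\tilde u_t\|$, which in turn is bounded by the truncation terms $\psi(\lfloor h'/2\rfloor-h)$ and $\psi_\Markov(h+1)$ (both $\leq\kappa_\Mcontrolset/T$ by hypothesis, hence negligible) plus a term proportional to $\epsilon_\Markov(i,\delta)$. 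The online-gradient-descent part $\sum_t f_t(\Mcontrol_t,\wh{\mathcal{G}}_i)-\min_{\Mcontrol\in\Mcontrolset_r}\sum_t f_t(\Mcontrol,\wh{\mathcal{G}}_i)$ is $O(\mathrm{polylog}(T))$ by the standard strongly convex OGD guarantee with $\eta_t=12/(\alpha t)$, using conditional strong convexity (valid since $\tau\geq T_{cx}$) and the gradient bound from Lemma~\ref{lem:boundednature}; this produces the $L^2{h'}^3\min\{m,p\}\kappa_{\nat}^4\kappa_\Markov^4\kappa_\Mcontrolset^2\log(T/\delta)/\min\{\alpha,L\kappa_\nat^2\kappa_\Markov^2\}$ term. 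The comparator-mismatch $\min_{\Mcontrol\in\Mcontrolset_r}\sum_t f_t(\Mcontrol,\wh{\mathcal{G}}_i)-\sum_t\ell_t(y^{\Mcontrol_\star},u^{\Mcontrol_\star})$ I would bound by a second-order sensitivity argument: because the expected counterfactual loss is $\strong$-strongly convex and $\smooth$-smooth and $\Mcontrol_\star$ is interior, swapping the surrogate objective (built from $\wh{\mathcal{G}}_i$) for the true one and re-optimizing costs only $O(\epsilon_\Markov^2(i,\delta))$ per step --- the first-order term vanishes, which is exactly why the strongly convex case gains over the merely convex case. Collecting the pieces gives $\reg(T)\lesssim\tau L\kappa_y^2+\mathrm{polylog}(T)+\sum_{t>\tau}\epsilon_\Markov^2(\lceil\log_2(t/\tau)\rceil,\delta)\,h'\kappa_\nat^2\kappa_\Mcontrolset^2\big(\kappa_\Markov^2\kappa_\nat^2(\smooth+L)^2/\alpha+\kappa_y^2(h+h')\max\{L,48L^2/\alpha\}\big)$, i.e.\ the stated bound.

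Finally I would evaluate the epoch sum. Epoch $i$ spans $2^{i-1}\tau$ steps, each contributing $\epsilon_\Markov^2(i,\delta)=64h^2\bar\beta^2/(\sigma_c^2\sigma_e^2\,2^{i-1}\tau)$, so the per-epoch contribution is $64h^2\bar\beta^2/(\sigma_c^2\sigma_e^2)$, independent of the epoch length, and there are at most $\lceil\log_2(T/\tau)\rceil$ epochs; since $\bar\beta^2=O(\mathrm{polylog}(T))$, the whole sum --- and hence $\reg(T)$ --- is $O(\mathrm{polylog}(T))$. This is the only place the argument differs from the explore-and-commit proof of Theorem~\ref{thm:apx_reg_s_explore}, where $\wh{\mathcal{G}}_0$ is frozen and the same term becomes $(T-\tau)\,\epsilon_\Markov^2(1,\delta)=\tilde{O}((T-\tau)/\tau)$, forcing $\tau=\Theta(\sqrt T)$. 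I expect the principal obstacle to be step~(ii), the closed-loop persistence of excitation: the covariates $\phi_t$ are generated by the slowly drifting, data-dependent \DFC policies $\Mcontrol_t$ acting on the \emph{estimated} uncertainties $\nat_t(\wh{\mathcal{G}}_i)$, so they are correlated through feedback, and one must (a) keep the idealized noise-to-covariate map $\T_\Markov\T_{\Mcontrol_t}+\bar{\mathcal{O}}_t$ uniformly well-conditioned over $\Mcontrolset_r$ via the persistence-of-excitation condition together with a covering argument, (b) absorb the perturbation $u_{\Delta\nat}(t)$ coming from $\wh{\mathcal{G}}_i\neq\mathcal{G}$ using that $\tau\geq T_{cl}$ makes the first estimate accurate, and (c) break the circularity that ``accurate estimates'' and ``persistent excitation'' each presuppose the other, which is precisely why $\tau$ must dominate $T_{cl}$ so the warm-up alone seeds the bootstrap.
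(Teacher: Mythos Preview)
Your high–level outline (warm-up bound, boundedness lemmas, closed-loop persistence of excitation, and the final epoch-sum evaluation) matches the paper, and your last paragraph about the circularity between accurate estimates and persistent excitation is exactly right. The gap is in the middle: the three-piece decomposition you propose, routed through the \emph{estimated} surrogate $f_t(\cdot,\wh{\mathcal{G}}_i)$, does not deliver a purely second-order-in-$\epsilon_\Markov$ bound.

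Concretely, your ``memory'' term $|c_t-f_t(\Mcontrol_t,\wh{\mathcal{G}}_i)|$ compares the true output $y_t$ to $\tilde y_t(\Mcontrol_t,\wh{\mathcal{G}}_i)$, which is built from $\nat_t(\wh{\mathcal{G}}_i)$ and $\hat G^j$. That difference carries a term of order $\|\wh{\mathcal{G}}_i-\mathcal{G}\|\sim\epsilon_\Markov(i,\delta)$, exactly as you write; summed over epoch $i$ this is $2^{i-1}\tau\cdot\epsilon_\Markov(i,\delta)\sim\sqrt{2^{i-1}\tau}$, and over all epochs $\tilde O(\sqrt{T})$, not $\mathrm{polylog}(T)$. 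Your ``second-order sensitivity / first-order vanishes'' argument on the comparator side does not rescue this: by the envelope theorem the minimum value of a strongly convex objective changes to first order in a perturbation of the objective, so $\min_{\Mcontrol}\sum f_t(\Mcontrol,\wh{\mathcal{G}}_i)-\sum f_t(\Mcontrol_\star,\mathcal{G})$ is itself $O(\epsilon_\Markov)$ per step, and there is no automatic cancellation between this and the memory-side first-order term (they live at different $\Mcontrol$'s).

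The paper avoids this by routing the decomposition through the \emph{true} counterfactual losses $F_t^{\mathrm{pred}}$ and $f_t^{\mathrm{pred}}$, so the ``algorithm truncation'' term is pure $\psi_\Markov(h{+}1)$ with no $\epsilon_\Markov$ dependence. The estimation error then enters only through (i) the gradient mismatch $\|\nabla f_t(\cdot,\wh{\mathcal{G}}_i)-\nabla f_t^{\mathrm{pred}}\|\leq C_{\mathrm{approx}}\,\epsilon_\Markov(i,\delta)$ (Lemma~\ref{Lemma8.1}), which the inexact-gradient OGD bound (Theorem~\ref{theo:fpred}) converts into $\frac{1}{\alpha}\sum C_{\mathrm{approx}}^2\epsilon_\Markov^2$ together with a leftover negative term $-\frac{\alpha}{48}\sum\|\Mcontrol_t-\Mcontrol_{\mathrm{comp}}\|_F^2$; and (ii) the comparator-approximation bound, which produces $+\gamma\sum\|\Mcontrol_t-\Mcontrol_{\mathrm{comp}}\|_F^2+O(\epsilon_\Markov^2/\gamma)$. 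Choosing $\gamma=\alpha/48$ cancels the iterate-distance terms and leaves only $\epsilon_\Markov^2$ contributions, after which your epoch-sum computation goes through verbatim. This explicit cancellation is the mechanism that makes ``the first-order term vanish''; it is not a consequence of $\Mcontrol_\star$ being interior.
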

Notice that both theorems have same regret decomposition but the one with closed-loop estimates improve the estimation error during adaptive control which leads to significantly improved regret rate. The following gives the proof for both. 
\begin{proof}

The proof follows the regret decomposition of \citet{simchowitz2020improper}. We will first study the error in gradients on the counterfactual losses. Let $y_t^{pred}$ denote the prediction of output if the true system uncertainty and Markov parameters are known, \textit{i.e.} true counterfactual output of the system. Moreover, let $f_t^{pred}(M)$ denote the true counterfactual loss calculated by true counterfactual outputs and inputs as defined in Definition 8.1 of \citet{simchowitz2020improper}. They only have truncation error due to representation up to $h$. Using Lemma \ref{Lemma8.1}, we have that for any epoch $i$ and at any time step $t\in[t_i,~ \ldots,~t_{i+1}-1]$, the gradient $f_t^{pred}(M)$ is close to the gradient of the loss function of Algorithm \ref{algo_strong}:

\begin{equation} \label{gradientdif}
\left\|\nabla f_t\left(\Mcontrol,\wh \Markov_i,\nature_1(\wh \Markov_i),\ldots,\nature_t(\wh \Markov_i)\right)-\nabla f_{t}^{\text {pred }}(\Mcontrol)\right\|_{\mathrm{F}} \leq C_{\text {approx}} \epsilon_\Markov(i, \delta),
\end{equation}
where $C_{\text {approx }} \coloneqq \sqrt{h'} \kappa_\Markov \kappa_{\Mcontrolset} \kappa_{\nat}^2 \left(16\smooth +24 L\right)$. 

Pick a comparison controller $\Mcontrol_{comp} \in \Mcontrolset_r(h', \kappa_\Mcontrolset )$. For the competing set $\Mcontrolset(h_0', \kappa_\psi)$, we have the following regret decomposition: 
\begin{align} 
    \reg(T)& \leq \underbrace{ \left(\sum_{t=1}^{T_{warm}} \ell_{t}\left(y_{t}, u_{t}\right)\right) }_{\text{warm-up regret}} + \underbrace{\left(\sum_{t=T_{warm}+1}^{T} \ell_{t}\left(y_{t}, u_{t}\right)-\sum_{t=T_{warm}+1}^{T} F_{t}^{\text{pred}}\left[\Mcontrol_{t: t-h}\right]\right)}_{\text{algorithm truncation error}} \nonumber \\
    &+ \underbrace{\left(\sum_{t=T_{warm}+1}^{T} F_{t}^{\text{pred}}\left[\Mcontrol_{t: t-h}\right]-\sum_{t=T_{warm}+1}^{T} f_{t}^{\mathrm{pred}}\left(\Mcontrol_{comp}\right)\right)}_{f^{\mathrm{pred}} \text{ policy regret}} \nonumber \\
    &+ \underbrace{\left( \sum_{t=T_{warm}+1}^{T} f_{t}^{\mathrm{pred}}\left(\Mcontrol_{comp}\right)-\inf_{\Mcontrol \in \mathcal{M}} \sum_{t=T_{warm}+1}^{T} f_{t}\left(\Mcontrol, \G, \nat_1( \Markov),\ldots,\nat_t(\Markov) \right) \right)}_{\text{comparator approximation error}} \nonumber \\
    &+ \underbrace{\left( \inf_{\Mcontrol \in \mathcal{M}} \sum_{t=T_{warm}+1}^{T} f_{t}\left(\Mcontrol, \G, \nat_1( \Markov),\ldots,\nat_t(\Markov) \right) - \inf_{\Mcontrol \in \mathcal{M}} \sum_{t=T_{warm}+1}^{T} \ell_{t}\left(y^\Mcontrol_t, u^\Mcontrol_t\right) \right) }_{\text{comparator truncation error}}
\end{align}
Each term will be analyzed separately in the following.\\

\noindent \textbf{Warm-up Regret:} From (\ref{asm:lipschitzloss}) and Lemma \ref{lem:boundednature}, we get $\sum_{t=1}^{\Tburn} \ell_{t}\left(y_{t}, u_{t}\right) \leq \Tburn L \kappa_y^2 $. \\

\noindent \textbf{Algorithm Truncation Error:} From (\ref{asm:lipschitzloss}), we get 
\begin{align*}
    &\sum_{t=T_{warm}+1}^{T} \ell_{t}\left(y_{t}, u_{t}\right)-\sum_{t=T_{warm}+1}^{T} F_{t}^{\text{pred}}\left[\Mcontrol_{t: t-h}\right] \\
    &\leq \sum_{t=T_{warm}+1}^{T}\left|\ell_{t}\left(y_{t}, u_{t}\right)-\ell_{t}\left(\nat_t(\Markov) + \sum_{i=1}^h G_{u\rightarrow y}^{i} u_{t-i} + G_{y\rightarrow y}^{i} y_{t-i}, u_t\right)\right| \\
    &\leq\sum_{t=T_{warm}+1}^{T} L\kappa_y \left\|y_t - \nat_t(\Markov) + \sum_{i=1}^h G_{u\rightarrow y}^{i} u_{t-i} + G_{y\rightarrow y}^{i} y_{t-i}  \right \| \\
    &\leq \sum_{t=T_{warm}+1}^{T} L\kappa_y \left\| \sum_{i=h+1} G_{u\rightarrow y}^{i} u_{t-i} + G_{y\rightarrow y}^{i} y_{t-i} \right \| \\
    &\leq T L \kappa_y^2 \psi_\Markov(h+1)
\end{align*}
Since $\psi_\Markov(h+1)\leq 1/10T$, we get $\sum_{t=T_{warm}+1}^{T} \ell_{t}\left(y_{t}, u_{t}\right)-\sum_{t=T_{warm}+1}^{T} F_{t}^{\text{pred}}\left[\Mcontrol_{t: t-h}\right] \leq L \kappa_y^2 /10$.\\

\noindent \textbf{Comparator Truncation Error:} Similarly, we have the following bound
\begin{align*}
    \inf_{\Mcontrol \in \mathcal{M}} \sum_{t=T_{warm}+1}^{T} f_{t}\left(\Mcontrol, \G, \nat_1( \Markov),\ldots,\nat_t(\Markov) \right) - \inf_{\Mcontrol \in \mathcal{M}} \sum_{t=T_{warm}+1}^{T} \ell_{t}\left(y^\Mcontrol_t, u^\Mcontrol_t\right) 
    &\leq T L\kappa_\Markov \kappa_\Mcontrolset^2 \kappa_{\nat}^2 \psi_\Markov(h+1) \\
    &\leq L\kappa_\Markov \kappa_\Mcontrolset^2 \kappa_{\nat}^2 / 10
\end{align*}

\noindent $\mathbf{f^{\mathrm{pred}} \textbf{ Policy Regret}:}$ In order to bound this term, we will use Theorem \ref{theo:fpred}. However, Theorem \ref{theo:fpred} requires several strong convexity, Lipschitzness and smoothness properties as stated in the theorem. First recall Lemma \ref{lem:concentrationgeneral}, which guarantees that warm-up period is long enough to get well-refined Markov parameter estimates. Utilizing this closeness and trivially tweaking Lemmas \ref{lem:smooth}-\ref{lem:lipschitz} to ARX setting provide the required conditions. 

Finally, using (\ref{gradientdif}), we get the following adaptation of Theorem \ref{theo:fpred}:
\begin{lemma}
For step size $\eta = \frac{12}{\alpha t}$, the following bound holds with probability $1-\delta$:
\begin{align*}
   &\sum_{t=T_{warm}+1}^{T} F_{t}^{\text{pred}}\left[\Mcontrol_{t: t-h}\right]-\sum_{t=T_{warm}+1}^{T} f_{t}^{\mathrm{pred}}\left(\Mcontrol_{comp}\right) + \frac{\alpha}{48} \sum_{t = T_{warm} + 1}^T \!\!\!\!\!\!\!\|\Mcontrol_t - \Mcontrol_{comp}\|_F^2 \\
   &\lesssim \frac{L^2{h'}^3\min\lbrace{m,p\rbrace}\kappa_{\nat}^4\kappa_{\Markov}^4\kappa_{\Mcontrolset}^2}{\min\lbrace \alpha, L\kappa_{\nat}^2\kappa_{\Markov}^2\rbrace}\left(\!1\!+\!\frac{\smooth}{\min\lbrace{m,p\rbrace} L\kappa_\Mcontrolset}\!\right)\log\left(\frac{T}{\delta}\right) + \frac{1}{\alpha} \! \sum_{t=T_{warm}+1}^T \!\!\!\!\!\!\! C_{\text{approx}}^2 \epsilon^2_\Markov \!\!\left( \left \lceil \log_2 \left( \frac{t}{T_{warm}}\right) \right \rceil, \delta \right)
\end{align*}
\end{lemma}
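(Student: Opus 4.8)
This lemma is the adaptation to ARX of the online-learning-with-memory regret bound of \citet{simchowitz2020improper} (Theorem~\ref{theo:fpred}): the per-round cost $F_t^{\mathrm{pred}}[\Mcontrol_{t:t-h}]$ depends on the last $h+1$ iterates produced by projected OGD, and $f_t^{\mathrm{pred}}(\Mcontrol)$ is its ``memoryless'' diagonal obtained by freezing all of them to one $\Mcontrol$. The plan is to (i) verify that the hypotheses of Theorem~\ref{theo:fpred} hold in the ARX setting, namely that on the compact convex set $\Mcontrolset_r$ the functions $f_t^{\mathrm{pred}}$ are $G$-Lipschitz with $G=\OO(L\sqrt{h'}\kappa_\Markov^2\kappa_{\nat}^2)$, $\OO(\smooth)$-smooth, sub-quadratic, and that their conditional expectations $\E[f_t^{\mathrm{pred}}(\Mcontrol)\mid\mathcal F_{t-1}]$ are strongly convex in $\Mcontrol$ with curvature $\gtrsim\alpha$; (ii) invoke Theorem~\ref{theo:fpred} with step size $\eta_t=12/(\alpha t)=1/(\mu t)$, $\mu=\alpha/12$; and (iii) carry the gradient mismatch of (\ref{gradientdif}) through the OGD analysis.

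For step (i), the boundedness and regularity of $f_t^{\mathrm{pred}}$ on $\Mcontrolset_r$ follow by combining Lemma~\ref{lem:boundednature} (bounds on $\nat_t(\Markov)$, $u_t$, $y_t$, hence on the counterfactual input and output) with the ARX-tweaked versions of Lemmas~\ref{lem:smooth}--\ref{lem:lipschitz}, while the conditional strong convexity comes from persistent excitation of the output uncertainties $\nat_t=e_t$ (whose conditional covariance is $\succ\sigma_e^2 I$) together with the persistence-of-excitation constant $\sigma_c$ of $\Mcontrolset_r$. Theorem~\ref{theo:fpred} then yields, for any $\Mcontrol_{comp}\in\Mcontrolset_r$ of diameter $D=\OO(\kappa_\Mcontrolset\sqrt{\min\{m,p\}})$, the schematic bound (up to the precise $m,p$-dependence)
\[
\sum_t F_{t}^{\mathrm{pred}}[\Mcontrol_{t:t-h}]-\sum_t f_{t}^{\mathrm{pred}}(\Mcontrol_{comp})+\frac{\alpha}{48}\sum_t\|\Mcontrol_t-\Mcontrol_{comp}\|_F^2 \;\lesssim\; \frac{G^2 h'}{\mu}\Big(1+\frac{\smooth}{\mu D}\Big)\log\frac{T}{\delta}\;+\;\frac1\alpha\sum_t\delta_t^2 ,
\]
where the negative curvature term is exactly what OGD on $\mu$-strongly-convex losses delivers ``for free'' in the descent lemma, which we deliberately retain because it is consumed later in the comparator-approximation error, and $\delta_t$ is the gap between the gradient Algorithm~\ref{algo_strong} actually uses (built from $\wh{\mathcal G}_i$) and $\nabla f_t^{\mathrm{pred}}(\Mcontrol_t)$. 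Substituting the explicit $G$, $\smooth$, $D$ and simplifying reproduces the first right-hand-side term of the lemma. Since at time $t$ the active epoch index is $i=\lceil\log_2(t/T_{warm})\rceil$, (\ref{gradientdif}) gives $\delta_t\le C_{\mathrm{approx}}\,\epsilon_\Markov(i,\delta)$, so $\tfrac1\alpha\sum_t\delta_t^2\le\tfrac1\alpha\sum_t C_{\mathrm{approx}}^2\,\epsilon_\Markov^2(\lceil\log_2(t/T_{warm})\rceil,\delta)$ — the second term. The explore-and-commit and closed-loop versions differ only here, because in the former $i\equiv 1$ (constant $\epsilon_\Markov$) while in the latter $\epsilon_\Markov$ decays geometrically over epochs.

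The main obstacle is the strong-convexity hypothesis in step (i): $f_t^{\mathrm{pred}}$ is strongly convex only in conditional expectation, not pathwise, so Theorem~\ref{theo:fpred} cannot be applied verbatim. I would fix this by decomposing the aggregate curvature $\sum_{t}\nabla^2 f_t^{\mathrm{pred}}(\Mcontrol)$ into $\sum_t\E[\nabla^2 f_t^{\mathrm{pred}}(\Mcontrol)\mid\mathcal F_{t-1}]$ plus a matrix martingale, bounding the latter with Theorem~\ref{azuma} after an $\epsilon$-net over $\Mcontrolset_r$ (covering number controlled exactly as in Lemma~\ref{closedloop_persistence_appendix}), and using $T_{warm}\ge T_{cx}$ to drive the fluctuation below a fixed fraction of the expected curvature $\gtrsim\alpha$, so that on every prefix a constant fraction — the source of the $\alpha/48$ — survives; Lemma~\ref{lem:concentrationgeneral} certifies the warm-up is long enough both for this and for the estimates feeding $C_{\mathrm{approx}}$. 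A secondary, routine point is the memory-to-memoryless reduction: $\|\Mcontrol_{t+1}-\Mcontrol_t\|_F\le\eta_t G=\OO(G/(\alpha t))$, so $\sum_t\big(F_t^{\mathrm{pred}}[\Mcontrol_{t:t-h}]-f_t^{\mathrm{pred}}(\Mcontrol_t)\big)$ is at most $G$ times the total iterate movement over windows of length $h$, which telescopes to $\OO(h'G^2\log T/\alpha)$ and is absorbed into the first term. A union bound over the finitely many high-probability events completes the argument.
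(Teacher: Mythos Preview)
Your overall plan is correct and matches the paper's: verify the regularity hypotheses of Theorem~\ref{theo:fpred} via the ARX-tweaked Lemmas~\ref{lem:smooth}--\ref{lem:lipschitz}, invoke the theorem, plug in the gradient-error bound (\ref{gradientdif}) to convert the $\sum_t\|\boldsymbol{\epsilon}_t\|_2^2$ term into $\tfrac1\alpha\sum_t C_{\mathrm{approx}}^2\,\epsilon_\Markov^2(\lceil\log_2(t/T_{warm})\rceil,\delta)$, and bound the remaining right-hand side by following the proof of Theorem~4 in \citet{simchowitz2020improper}. That is exactly what the paper does, in essentially one line.

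Where you deviate is in your ``main obstacle.'' You misread the hypothesis of Theorem~\ref{theo:fpred}: it does \emph{not} require pathwise strong convexity of $f_t^{\mathrm{pred}}$. The theorem already assumes only that the conditional expectations $f_{t;k}(x):=\mathbb{E}[f_t(x)\mid\mathcal F_{t-k}]$ are $\alpha$-strongly convex, and its proof (in \citet{simchowitz2020improper}) handles the random fluctuations internally. Lemma~\ref{lem:strongcvx}, once $T_{warm}\ge T_{cx}$ via Lemma~\ref{lem:concentrationgeneral}, delivers precisely this conditional $\alpha/4$-strong convexity (which is also why the step size is $12/(\alpha t)=3/((\alpha/4)t)$ and the retained curvature is $\alpha/48=(\alpha/4)/12$). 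Your proposed matrix-Azuma plus $\epsilon$-net argument over $\Mcontrolset_r$ is therefore unnecessary: the theorem applies verbatim. Likewise, the memory-to-memoryless reduction you describe is already absorbed in Theorem~\ref{theo:fpred} (the $h^2 L_c$ term in its conclusion), so you need not redo it by hand.
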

\begin{proof}
The right hand side of Theorem \ref{theo:fpred} can be bounded via following proof steps of Theorem 4 of \citet{simchowitz2020improper}: 
\begin{align}
    &\sum_{t=T_{warm}+1}^{T} F_{t}^{\text{pred}}\left[\Mcontrol_{t: t-h}\right]-\sum_{t=T_{warm}+1}^{T} f_{t}^{\mathrm{pred}}\left(\Mcontrol_{comp}\right) \!\!-\!\! \left(\frac{6}{\alpha}\!\! \sum_{t=k+1}^{T}\!\!\left\|\boldsymbol{\epsilon}_{t}\right\|_{2}^{2}-\frac{\alpha}{48} \sum_{t=1}^{T}\left\|\Mcontrol_t - \Mcontrol_{comp} \right\|_{F}^{2}\right) \\
    &\qquad \qquad \qquad \qquad \qquad \lesssim \frac{L^2{h'}^3\min\lbrace{m,p\rbrace}\kappa_{\nat}^4\kappa_{\Markov}^4\kappa_{\Mcontrolset}^2}{\min\lbrace \alpha, L\kappa_{\nat}^2\kappa_{\Markov}^2\rbrace}\left(\!1\!+\!\frac{\smooth}{\min\lbrace{m,p\rbrace} L\kappa_\Mcontrolset}\!\right)\log\left(\frac{T}{\delta}\right) \nonumber 
\end{align}
\begin{align}
    &\mathbf{f^{\mathrm{pred}} \textbf{p.r.} }  + \frac{\alpha}{48} \sum_{t=1}^{T}\left\|\Mcontrol_t - \Mcontrol_{comp} \right\|_{F}^{2} \lesssim \frac{L^2{h'}^3\min\lbrace{m,p\rbrace}\kappa_{\nat}^4\kappa_{\Markov}^4\kappa_{\Mcontrolset}^2}{\min\lbrace \alpha, L\kappa_{\nat}^2\kappa_{\Markov}^2\rbrace}\left(\!1\!+\!\frac{\smooth}{\min\lbrace{m,p\rbrace} L\kappa_\Mcontrolset}\!\right)\log\left(\frac{T}{\delta}\right)  \nonumber \\
    &\qquad\qquad\qquad\qquad\qquad\qquad\qquad\qquad +\frac{1}{\alpha} \! \sum_{t=T_{warm}+1}^T \!\!\!\!\!\!\! C_{\text{approx}}^2 \epsilon^2_\Markov \!\!\left( \left \lceil \log_2 \left( \frac{t}{T_{warm}}\right) \right \rceil, \delta \right), \label{usegradientdiff} 
\end{align}
where the last line follows from (\ref{gradientdif}). 
\end{proof}
\newpage
\noindent \textbf{Comparator Approximation Error:} We can bound this using Lemma G.2 of \citet{lale2020logarithmic}:
\begin{lemma} 
Suppose that $h' \geq 2h_{0}'-1+h$. Then for all $\gamma>0$,
\begin{align*}
&\sum_{t=T_{warm}+1}^{T} f_{t}^{\mathrm{pred}}\left(\Mcontrol_{comp}\right)-\inf_{\Mcontrol \in \mathcal{M}} \sum_{t=T_{warm}+1}^{T} f_{t}\left(\Mcontrol, \G, \nat_1( \Markov),\ldots,\nat_t(\Markov) \right) \leq 4L \kappa_y \kappa_u \kappa_\Mcontrolset \\
&\qquad +\!\!\!\!\! \sum_{t=T_{warm}+1}^{T} \left[ \gamma \left\|\Mcontrol_{t} \!-\! \Mcontrol_{\mathrm{comp}}\right\|_{F}^{2} + 8\kappa_y^2 \kappa_{\nat}^2 \kappa_{\Mcontrolset}^2 (h+h') \max \left\{L, \frac{L^2}{\gamma} \right\}\epsilon^2_\Markov\left( \left \lceil \log_2 \left( \frac{t}{\Tburn}\right) \right \rceil, \delta \right) \right]
\end{align*}
\end{lemma}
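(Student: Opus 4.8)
The plan is to adapt the proof of Lemma G.2 of \citet{lale2020logarithmic} (itself built on \citet{simchowitz2020improper}); the only genuinely new features are the output-to-output Markov parameters $G_{y\rightarrow y}^{i}$ and the weakening from Gaussian to sub-Gaussian noise, and both are already absorbed by Theorem \ref{theo:closedloopid}, Lemma \ref{lem:concentrationgeneral} and Lemma \ref{lem:boundednature}. First I would fix the comparator: take $\Mcontrol_{comp}$ to be the length-$h'$ \DFC obtained by convolving the optimal length-$h_0'$ comparator $\Mcontrol_\star\in\argmin_{\Mcontrol\in\Mcontrolset}\sum_t f_t(\Mcontrol,\G,\nat_1(\Markov),\dots,\nat_t(\Markov))$ with the first $h$ entries of the uncertainty-to-output Markov map, chosen so that $\Mcontrol_{comp}$ acting on the algorithm's estimated uncertainties $\{\nat_s(\wh\G)\}$ reproduces, up to a Markov-operator tail, the control sequence that $\Mcontrol_\star$ produces from the true uncertainties $\{\nat_s(\G)\}$. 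The hypothesis $h'\ge 2h_0'-1+h$ is exactly what makes this double convolution fit into length $h'$, and the $r$-expansion in the definition of $\Mcontrolset_r$ guarantees $\Mcontrol_{comp}\in\Mcontrolset_r$. Writing both $f_t^{\mathrm{pred}}(\Mcontrol_{comp})$ and $f_t(\Mcontrol_\star,\G,\nat_1(\Markov),\dots)$ as $\ell_t$ evaluated at explicit counterfactual input-output pairs and using that $\ell_t$ is $L$-Lipschitz and sub-quadratic on the ball of radius $\max\{\kappa_y,\kappa_u\}$ (by Lemma \ref{lem:boundednature} all relevant signals lie there with probability at least $1-2\delta$), the per-round gap is reduced to $L\max\{\kappa_y,\kappa_u\}$ times the Euclidean deviation of the two counterfactual pairs.

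That deviation has two sources. The first is a pure truncation error from representing $\Markov$ by its first $h$ coordinates, both inside the definition of $f_t$ and in the construction of $\Mcontrol_{comp}$; by Assumption \ref{asm:openloopdecay} each contribution is at most $\psi_\Markov(h+1)\le 1/(10T)$ times a $\mathrm{poly}(\kappa_y,\kappa_u,\kappa_\Markov,\kappa_\Mcontrolset)$ factor, so after summing over the $\le T$ rounds it is bounded by the constant $4L\kappa_y\kappa_u\kappa_\Mcontrolset$. The second comes from the fact that $\Mcontrol_{comp}$ was designed relative to the estimated uncertainties actually used along the trajectory, which satisfy $\|\nat_s(\wh\G)-\nat_s(\G)\|\lesssim \kappa_y\,\Delta\G$; invoking Lemma \ref{lem:concentrationgeneral} with $\Delta\G\le\epsilon_\Markov(\lceil\log_2(t/\Tburn)\rceil,\delta)$ and relating $f_t^{\mathrm{pred}}(\Mcontrol_{comp})$ to $f_t(\Mcontrol_\star,\G,\nat_1(\Markov),\dots)$ through the convexity (gradient inequality) of $f_t$ produces, at round $t$, a bilinear slack of order $L\,\kappa_y\kappa_{\nat}\kappa_\Mcontrolset\sqrt{h+h'}\;\epsilon_\Markov(\lceil\log_2(t/\Tburn)\rceil,\delta)\;\|\Mcontrol_t-\Mcontrol_{comp}\|_F$ together with a residual term quadratic in $\epsilon_\Markov$.

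Finally I would apply Young's inequality $ab\le \gamma a^2+\tfrac{1}{4\gamma}b^2$ to the bilinear slack with $a=\|\Mcontrol_t-\Mcontrol_{comp}\|_F$, which converts it into $\gamma\|\Mcontrol_t-\Mcontrol_{comp}\|_F^2+8\kappa_y^2\kappa_{\nat}^2\kappa_\Mcontrolset^2(h+h')\max\{L,L^2/\gamma\}\,\epsilon^2_\Markov(\lceil\log_2(t/\Tburn)\rceil,\delta)$, the $\max\{L,L^2/\gamma\}$ absorbing both the Young term and the residual quadratic piece (bounded directly by Lipschitzness); summing over $t=\Tburn+1,\dots,T$ yields the claim. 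The parameter $\gamma$ is kept free so that, in the main proofs of Theorems \ref{thm:reg_s_cvx_exp_commit} and \ref{thm:reg_s_cvx_adapt}, it can be chosen $\le\alpha/48$, whereupon the $\gamma\|\Mcontrol_t-\Mcontrol_{comp}\|_F^2$ terms are cancelled by the negative $\tfrac{\alpha}{48}\sum_t\|\Mcontrol_t-\Mcontrol_{comp}\|_F^2$ appearing in the preceding $f^{\mathrm{pred}}$ policy-regret bound, while in the closed-loop variant the per-epoch refinement of $\wh\G$ makes $\sum_t\epsilon^2_\Markov(\lceil\log_2(t/\Tburn)\rceil,\delta)=\mathrm{polylog}(T)$. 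I expect the main obstacle to be the bookkeeping of the lift $\Mcontrol_\star\mapsto\Mcontrol_{comp}$ for the two-sided ($u$- and $y$-) ARX Markov structure, and in particular checking that the model-error contribution collapses to the single factor $\epsilon_\Markov(\lceil\log_2(t/\Tburn)\rceil,\delta)$ rather than accumulating across the length-$h'$ convolution.
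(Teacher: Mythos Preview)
Your proposal is correct and follows the paper's own route exactly: the paper does not give an independent argument for this lemma but simply invokes Lemma~G.2 of \citet{lale2020logarithmic}, and you correctly identify that reference and outline how its proof carries over to the ARX setting. Your sketch---the convolution lift of $\Mcontrol_\star\in\Mcontrolset$ to $\Mcontrol_{comp}\in\Mcontrolset_r$ using $h'\ge 2h_0'-1+h$, the Lipschitz/sub-quadratic bound on $\ell_t$ via Lemma~\ref{lem:boundednature}, the truncation piece handled by Assumption~\ref{asm:openloopdecay}, the estimation piece controlled by Lemma~\ref{lem:concentrationgeneral}, and Young's inequality to produce the $\gamma\|\Mcontrol_t-\Mcontrol_{comp}\|_F^2+\max\{L,L^2/\gamma\}\epsilon_\Markov^2$ split so that $\gamma=\alpha/48$ cancels against the negative quadratic in the preceding policy-regret bound---is faithful to the original argument and to how the paper uses it downstream.
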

\vspace{3em}
Combining all the terms bounded above, with $\tau = \frac{\alpha}{48}$ gives 
\begin{align*}
    &\reg(T) \\
    &\lesssim \Tburn L \kappa_y^2 + L \kappa_y^2/10 + L\kappa_\Markov \kappa_\Mcontrolset^2 \kappa_{\nat}^2 / 10 + 2 L \kappa_\Mcontrolset \kappa_y \kappa_\Markov \kappa_\nature + 4L \kappa_y \kappa_u \kappa_\Mcontrolset \\
    & \frac{L^2{h'}^3\min\lbrace{m,p\rbrace}\kappa_{\nat}^4\kappa_{\Markov}^4\kappa_{\Mcontrolset}^2}{\min\lbrace \alpha, L\kappa_{\nat}^2\kappa_{\Markov}^2\rbrace}\bigg(\!1\!+\!\frac{\smooth}{\min\lbrace{m,p\rbrace} L\kappa_\Mcontrolset}\!\bigg)\log\bigg(\frac{T}{\delta}\bigg) + \frac{1}{\alpha} \! \sum_{t=T_{warm}+1}^T \!\!\!\!\!\!\! C_{\text{approx}}^2 \epsilon^2_\Markov \!\!\left( \left \lceil \log_2 \left( \frac{t}{T_{warm}}\right) \right \rceil, \delta \right) \\
    &+ \sum_{t=T_{warm}+1}^{T} 8\kappa_y^2 \kappa_{\nat}^2 \kappa_{\Mcontrolset}^2 (h+h') \max \left\{L, \frac{48L^2}{\alpha} \right\}\epsilon^2_\Markov\left( \left \lceil \log_2 \left( \frac{t}{T_{warm}}\right) \right \rceil, \delta \right)  \\
    &\lesssim ~ T_{warm} L \kappa_y^2 ~+~ \frac{L^2{h'}^3\min\lbrace{m,p\rbrace}\kappa_{\nat}^4\kappa_{\Markov}^4\kappa_{\Mcontrolset}^2}{\min\lbrace \alpha, L\kappa_{\nat}^2\kappa_{\Markov}^2\rbrace}\bigg(\!1\!+\!\frac{\smooth}{\min\lbrace{m,p\rbrace} L\kappa_\Mcontrolset}\!\bigg)\log\bigg(\frac{T}{\delta}\bigg) \\
    &+ \sum_{t=T_{warm}+1}^{T}  \epsilon^2_\Markov \!\!\left( \Big \lceil \log_2 \big( \frac{t}{T_{warm}}\big) \Big \rceil, \delta \right)  h' \kappa_{\nat}^2 \kappa_{\Mcontrolset}^2 \bigg( \frac{ \kappa_\Markov^2  \kappa_{\nat}^2 \left(\smooth + L\right)^2}{\alpha} + \kappa_y^2 (h+h')  \max \Big\{L, \frac{48L^2}{\alpha} \Big\} \bigg)
\end{align*}

Note that for the explore and commit variant of Algorithm \ref{algo_strong}, $\epsilon^2_\Markov \!\!\left( \Big \lceil \log_2 \big( \frac{t}{T_{warm}}\big) \Big \rceil, \delta \right)$ should be replaced with $\epsilon^2_\Markov \!\!\left( 1 , \delta \right)$ since the Markov parameter estimates are not updated during the adaptive control. With the choice of $T_{warm} = \Tw$, in particular due to $T_{reg}$, we get the advertised bound on Theorem \ref{thm:apx_reg_s_explore}. However, for Algorithm \ref{algo_strong} with closed-loop model estimate updates we have doubling epoch updates. Using Lemma \ref{lem:concentrationgeneral}, we have that at any time step $t$ during the $i$'th epoch, \textit{i.e.}, $t\in[ t_i,\ldots, t_{i}-1]$, $\epsilon^2_\Markov(i, \delta) = \OO(polylog(T)/2^{i-1} T_{warm})$. Thus we get
\begin{align*}
    \sum_{t=T_{warm}+1}^{T}  \epsilon^2_\Markov \!\!\left( \Big \lceil \log_2 \big( \frac{t}{T_{warm}}\big) \Big \rceil, \delta \right) =\sum_{i=1}^{ \log (T)} 2^{i-1} T_{warm} \epsilon^2_\Markov \left( i, \delta \right) \leq \OO \left(polylog(T)\right)
\end{align*}
where the first equality is due to at most $\log (T)$ updates during adaptive control phase. Since all terms in the regret decomposition is $\OO \left(polylog(T)\right)$, we obtain the advertised regret upper bound in Theorem \ref{thm:apx_reg_s_adapt}.
\end{proof}

\section{Details of Convex Quadratic Cost Setting}
\label{apx:quad_control}

\subsection{Details of Parameter estimation, \Sys($h,\wh{\mathcal{G}}_i,n$)}
\label{apx:sysidarx}

Once the Markov parameters ($\wh \G_t$) are estimated, Algorithm \ref{algo_quad} constructs confidence sets for the unknown ARX model parameters and chooses an optimistic controller among these confidence sets. Algorithm \ref{algo_quad} uses \Sys to recover model parameters. \Sys is similar to SYS-ID of \citet{lale2020logarithmic} and internally follows a method similar to Ho-Kalman method~\citep{ho1966effective}, \Sys is given in Algorithm \ref{SYSID}.

\begin{algorithm}[tbh] 
 \caption{\Sys}
  \begin{algorithmic}[1] 
  \STATE {\bfseries Input:} $\mathbf{\G_{t}}$, $h$, $n$, $d_1, d_2$ such that $d_1 + d_2 + 1 = h$ \\
  \STATE Form two $d_1 \times (d_2+1)$ Hankel matrices $\mathbf{\mathcal{H}_{\wh \Markov_{\mathbf{y\rightarrow y}}}}$ and $\mathbf{\mathcal{H}_{\wh \Markov_{\mathbf{u\rightarrow y}}}}$  from $\mathbf{\G_{t}}$ and construct $\hat{\mathcal{H}}_t = \left[ \mathbf{\mathcal{H}_{\wh \Markov_{\mathbf{y\rightarrow y}}}}, \enskip \mathbf{\mathcal{H}_{\wh \Markov_{\mathbf{u\rightarrow y}}}} \right] \in \R^{md_1 \times (m+p)(d_2+1)}$
  \STATE Obtain $\hat{\mathcal{H}}_t^-$ by discarding $(d_2 + 1)$th and $(2d_2 + 2)$th block columns of $\hat{\mathcal{H}}_t$
  \STATE Using SVD obtain the best rank-$n$ approximation of $\hat{\mathcal{H}}_t^-$ denoted as $\hat{\mathcal{N}}_t \in \R^{m d_1 \times (m+p) d_2}$, 
  \STATE Obtain  $\mathbf{U_t},\mathbf{\Sigma_t},\mathbf{V_t} = \text{SVD}(\hat{\mathcal{N}}_t)$\STATE Construct $\mathbf{\hat{O}_t} = \mathbf{U_t}\mathbf{\Sigma_t}^{1/2} \in \R^{md_1 \times n}$
  \STATE Construct $[\mathbf{\hat{C}_{F_t}}, \enskip \mathbf{\hat{C}_{B_t}}] = \mathbf{\Sigma_t}^{1/2}\mathbf{V_t} \in \R^{n \times (m+p)d_2}$
  \STATE Obtain $\hat{C}_t\in \R^{m\times n}$, the first $m$ rows of $\mathbf{\hat{O}_t}$
  \STATE Obtain $\hat{B}_t\in \R^{n\times p}$, the first $p$ columns of $\mathbf{\hat{C}_{B_t}}$
  \STATE Obtain $\hat{F}_t\in \R^{n\times m}$, the first $m$ columns of $\mathbf{\hat{C}_{F_t}}$
  \STATE Obtain $\hat{\mathcal{H}}_t^+$ by discarding $1$st and $(d_2+2)$th block columns of $\hat{\mathcal{H}}_t$
  \STATE Obtain $\hat{A}_t = \mathbf{\hat{O}_t}^\dagger \enskip \hat{\mathcal{H}}_t^+ \enskip [\mathbf{\hat{C}_{F_t}}, \quad \mathbf{\hat{C}_{B_t}}]^\dagger$
  \end{algorithmic}
 \label{SYSID}  
\end{algorithm}

Let $T_N = T_{\G} \frac{8h}{\sigma_n^2(\mathcal{H})}$ where $T_{\G}$ is the amount of samples required to get less than unit norm estimation error on Markov parameter estimates. With the choice of $T_{warm} \geq T_N$, we get $\sigma_{\min}(\mathcal{N}) \geq 2\|\mathcal{N}-\hat{\mathcal{N}}\|$ where $\mathcal{N}$ is the rank-n approximation of $\mathcal{H}$, \textit{i.e.} Hankel matrix formed via true underlying ARX parameters. Combining this with Lemma C.2 of \citet{lale2020root}, which is a slightly modified version of Lemma B.1 of \citet{oymak2018non}, we get the guarantee that there exists a unitary transform $\mathbf{T}$ such that 
\begin{equation} \label{boundonSVD}
    \left\|\mathbf{\hat{O}_t} - \mathbf{O}\mathbf{T}  \right\|_F^2 + \left\|[\mathbf{\hat{C}_{F_t}} \enskip \mathbf{\hat{C}_{B_t}}] -  \mathbf{T}^\top  [\mathbf{C_F} \enskip \mathbf{C_B}] \right\|_F^2 \leq \frac{10n \| \mathcal{N} - \hat{\mathcal{N}}_t\|^2}{\sigma_n(\mathcal{N}) }
\end{equation}

Using Lemma 5.2 of \citet{oymak2018non}, we get 
\begin{equation*}
    \| \mathcal{N} - \hat{\mathcal{N}}_t\| \leq \sqrt{2h}\|\mathbf{\wh \G_{t} } - \G\|.
\end{equation*}

Note that $\hat{C}_t - C\mathbf{T}$ is a submatrix of $\mathbf{\hat{O}_t} - \mathbf{O}\mathbf{T}$, $\hat{B}_t - \mathbf{T}^\top B$ is a submatrix of $\mathbf{\hat{C}_{B_t}} - \mathbf{T}^\top \mathbf{C_{B}}$ and $\hat{F}_t - \mathbf{T}^\top F$ is a submatrix of $\mathbf{\hat{C}_{F_t}} - \mathbf{T}^\top \mathbf{C_{F}}$. Thus, we get 

\begin{align*}
    \|\hat{B}_t - \mathbf{T}^\top B \|, \|\hat{C}_t - C\mathbf{T}\|, \|\hat{F}_t - \mathbf{T}^\top F\|  \leq \frac{\sqrt{20nh} \|\mathbf{\hat{\G}_{t} } - \mathbf{\G}\| }{\sqrt{\sigma_n(\mathcal{N})}}
\end{align*}

Following similar analysis with \citet{lale2020root}, we obtain 
\begin{align*}
    \|\hat{A}_t - \mathbf{T}^\top A \mathbf{T} \|_F 
    \leq \frac{31\sqrt{2nh}  \|\mathcal{H}^+ \|  }{2\sigma_n^{2}(\mathcal{N})} \|\mathbf{\hat{\G}_{t} } - \mathbf{\G}\| +  \frac{13 \sqrt{nh}}{2\sqrt{2}\sigma_n(\mathcal{N})} \|\mathbf{\hat{\G}_{t} } - \mathbf{\G}\|
\end{align*}
Combining these with Theorem \ref{theo:closedloopid} give the confidence sets required for Algorithm \ref{algo_quad}: $\mathcal{C}_i \coloneqq \{\mathcal{C}_A(i), \mathcal{C}_B(i), \mathcal{C}_C(i),$ $\mathcal{C}_F(i)\}$. Note that even though the estimated system parameters are recovered up to a similarity transformation, the cost ($J(\cdot)$) achieved by set of parameters with the same similarity transformation is fixed, allowing us the search for optimistic cost, \textit{i.e.} 
\begin{equation*}
    J(\ttt_i) \leq \inf_{\Theta' \in \mathcal{C}_i \cap \mathcal{S}} J(\Theta') + T^{-1}
\end{equation*}

Moreover, following the iterative closed-loop dynamics approach developed with Lemma 4.1 of \citet{lale2020root} or Lemma 4.2 of \citet{lale2020regret}, one can show that if $T_{warm} \geq T_u$, then 
\begin{enumerate}
    \item $\Theta \in (\mathcal{C}_A(i) \times \mathcal{C}_B(i) \times \mathcal{C}_C(i) \times \mathcal{C}_F(i)) $
    \item $\| x_t\| \leq \tilde{\mathcal{X}}$,  $\| y_t \| \leq \tilde{\mathcal{Y}}$
    \item $\|u_t \| \leq \kappa_{K_x} \tilde{\mathcal{X}} + \kappa_{K_y} \tilde{\mathcal{Y}} $
\end{enumerate}
where $\tilde{\mathcal{X}}, \tilde{\mathcal{Y}} = \OO(\sqrt{\log(T/\delta)})$, with probability at least $1-3\delta$. Note that this shows the ARX systems are estimated accurate enough that the system stays stable throughout the exploitation or adaptive control of Algorithm \ref{algo_quad}.

In the following precise statements of Theorem \ref{thm:reg_cvx_exp_commit} and \ref{thm:reg_cvx_adapt} are given and since their proofs differ only at one place, only the proof of Theorem \ref{thm:reg_cvx_exp_commit} is given with the explanation about the difference.

\begin{theorem}[Precise Statement of Theorem \ref{thm:reg_cvx_exp_commit}]\label{thm:apx_reg_explore}
Let $\delta \in (0,1)$. Given an unknown ARX system $\Theta = (A,B,C,F)$, and regulating parameters $Q\succeq 0$ and $R\succ 0$, if Algorithm \ref{algo_quad} with explore and commit approach with warm-up duration of $T_{warm} \geq \max \{h, T_{wp}, T_{N}, T_{u}, T_y\} $ interacts with the system for $T$ time steps in total such that $T>T_{warm}$, with probability at least 
$1- 5\delta$, the regret of Algorithm \ref{algo_quad} is bounded as follows,
\begin{equation}
    \reg(T) = \tilde{\OO}\left( T_{warm} + \frac{T-T_{warm}}{\sqrt{T_{warm}}} \right).
\end{equation}
The choice of $T_{warm} = T^{2/3}$, \textit{i.e.} $T_{warm} \geq \Tw$ in (\ref{warm_up_algo_exp}), guarantees that the regret is $\tilde{\OO}(T^{2/3})$ with probability at least $1-5\delta$.
\end{theorem}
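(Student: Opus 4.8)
Proof proposal. The plan is to split the horizon into the warm-up window $[1,T_{warm}]$ and the exploitation window $[T_{warm}+1,T]$, bound the regret contribution of each, and then optimize over $T_{warm}$. For the warm-up, since the inputs are i.i.d.\ $\mathcal{N}(0,\sigma_u^2 I)$ and $A+FC$ is stable, the norm bounds (\ref{exploration norms first})--(\ref{exploration norms last}) hold with probability $1-\delta/2$; because $c_t = y_t^\top Q y_t + u_t^\top R u_t$ is quadratic and $J_\star(\Theta)$ is a fixed constant, this immediately yields $\sum_{t=1}^{T_{warm}}(c_t - J_\star(\Theta)) = \tilde{\OO}(T_{warm})$.

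For the exploitation window I would first certify that the optimistic model $\ttt$ is accurate. With $T_{warm}\geq T_{wp}$, Lemma \ref{lem:openlooppersistence} gives persistent excitation of the warm-up covariates, and Theorem \ref{theo:closedloopid} then yields $\|\wh{\G}_{T_{warm}} - \G\|_F = \tilde{\OO}(1/\sqrt{T_{warm}})$. Passing this through the Ho--Kalman based analysis of \Sys in Appendix \ref{apx:sysidarx} (using $T_{warm}\geq T_N$ so that $\sigma_{\min}(\mathcal N)\geq 2\|\mathcal N - \hat{\mathcal N}\|$) produces a unitary $\mathbf{T}$ with $\|\hat A - \mathbf{T}^\top A \mathbf{T}\|$, $\|\hat B - \mathbf{T}^\top B\|$, $\|\hat C - C\mathbf{T}\|$, $\|\hat F - \mathbf{T}^\top F\|$ all $\tilde{\OO}(1/\sqrt{T_{warm}})$, so the confidence sets $\mathcal{C}_i$ have diameter $\tilde{\OO}(1/\sqrt{T_{warm}})$ and, with high probability, contain $\Theta$ up to similarity transform. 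The OFU rule then gives $J(\ttt)\leq \inf_{\Theta'\in\mathcal{C}_i\cap\S}J(\Theta') + T^{-1}\leq J_\star(\Theta) + T^{-1}$, and with $T_{warm}\geq\max\{T_u,T_y\}$ the closed-loop trajectory under the optimal controller of $\ttt$ stays bounded, $\|x_t\|\leq\tilde{\mathcal{X}}$, $\|y_t\|\leq\tilde{\mathcal{Y}}$, $\|u_t\|\leq\kappa_{K_x}\tilde{\mathcal{X}}+\kappa_{K_y}\tilde{\mathcal{Y}}$, all $\tilde{\OO}(\sqrt{\log(T/\delta)})$, using contractibility of systems in $\S$ together with the closeness of $\ttt$ to $\Theta$.

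Now I would run the regret decomposition via the Bellman optimality equation for ARX systems from Appendix \ref{apx:bellman}. Writing $V_{\ttt}$ for the relative value function of $\ttt$ and using that the applied input is the optimal law for $\ttt$, the per-step cost satisfies $c_t - J(\ttt) = V_{\ttt}(\hat x_t) - \mathbb{E}_{\ttt}[V_{\ttt}(\hat x_{t+1})\mid\mathcal{F}_t]$, where $\hat x_t$ is the state reconstructed from $\ttt$. I would split this into (i) a telescoping term $V_{\ttt}(\hat x_t) - V_{\ttt}(\hat x_{t+1})$, whose sum is $\tilde{\OO}(1)$ by boundedness of the state; (ii) a martingale-difference term $V_{\ttt}(\hat x_{t+1}) - \mathbb{E}_{\Theta}[V_{\ttt}(\hat x_{t+1})\mid\mathcal{F}_t]$, whose sum is $\tilde{\OO}(\sqrt{T})$ by Theorem \ref{azuma}; and (iii) a model-mismatch term $\mathbb{E}_{\Theta}[V_{\ttt}(\hat x_{t+1})\mid\mathcal{F}_t] - \mathbb{E}_{\ttt}[V_{\ttt}(\hat x_{t+1})\mid\mathcal{F}_t]$ which, since $V_{\ttt}$ is quadratic with bounded argument and $\|\ttt - \Theta\| = \tilde{\OO}(1/\sqrt{T_{warm}})$ (up to similarity transform), contributes $\tilde{\OO}(1/\sqrt{T_{warm}})$ per step, hence $\tilde{\OO}((T-T_{warm})/\sqrt{T_{warm}})$ in total. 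Adding the optimism slack $(T-T_{warm})/T\leq 1$ and the warm-up bound gives $\reg(T) = \tilde{\OO}\big(T_{warm} + (T-T_{warm})/\sqrt{T_{warm}}\big)$, with the union bound over the $\OO(1)$ high-probability events costing total probability $5\delta$. Finally, minimizing $x\mapsto x + T/\sqrt{x}$ over $x$ gives the balancing choice $T_{warm} = \Theta(T^{2/3})$ with value $\tilde{\OO}(T^{2/3})$; note $\sqrt{T}$ is of lower order than $T^{2/3}$, so the model-mismatch term dominates.

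The main obstacle I anticipate is step (iii): setting up the Bellman/value-function machinery for the ARX (input--output) formulation, in which the ``state'' used by the controller is itself reconstructed from an imperfect model, and showing that the perturbation of the relative value function and of the one-step conditional expectation under the true versus optimistic dynamics is genuinely \emph{linear} (not quadratic) in the parameter-estimation error — this is exactly what makes the exploitation regret scale like $1/\sqrt{T_{warm}}$ per step and forces the balance at $T^{2/3}$. Controlling the discrepancy between the reconstructed state $\hat x_t$ and the true latent state (which decays geometrically since $\tilde A$ is stable by closeness to $A$), and dealing with $\Sigma_E$ being known only up to estimation error, will require the contractibility assumption and the stability bounds of Appendix \ref{apx:quad_control} in an essential way.
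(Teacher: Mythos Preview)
Your proposal is correct and follows essentially the same route as the paper: optimism gives $J(\ttt)\le J_\star(\Theta)+T^{-1}$, then the Bellman optimality equation for $\ttt$ (Lemma~\ref{lem:bellman}) is used to decompose the exploitation regret into telescoping/martingale pieces of order $\tilde\OO(\sqrt{T})$ and model-mismatch pieces that are linear in the parameter error, hence $\tilde\OO((T-T_{warm})/\sqrt{T_{warm}})$; balancing against the $\tilde\OO(T_{warm})$ warm-up cost yields $T^{2/3}$. Your three-way split (i)--(iii) is exactly the abstract version of the paper's explicit five terms $R_1,\dots,R_5$: $R_2$ is your telescoping/martingale term (ii), and $R_1,R_3,R_4,R_5$ together are your model-mismatch term (iii).

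The one substantive presentational difference is that you write the decomposition at the controller's reconstructed state $\hat x_t$ and flag the gap $\hat x_t$ versus $x_t$ as an obstacle, whereas the paper writes the Bellman equation for $\ttt$ directly at the \emph{true} pair $(x_t,y_t)$ (recall that in the ARX model the latent state is a deterministic function of the input--output history given the parameters), so the reconstruction error never appears as a separate term but is absorbed into the same $\tilde\OO(1/\sqrt{T_{warm}})$ bounds on $R_1,R_3,R_4,R_5$. In other words, the ``main obstacle'' you anticipate is handled in the paper not by tracking $\hat x_t-x_t$, but by evaluating the optimistic value function along the true trajectory and pushing all discrepancies between $\Theta$ and $\ttt$ into the quadratic-form differences of the $R_i$ terms.
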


\begin{theorem}[Precise Statement of Theorem \ref{thm:reg_cvx_adapt}]\label{thm:apx_reg_adapt}
Let $\delta \in (0,1)$. Given an unknown ARX system $\Theta = (A,B,C,F)$, and regulating parameters $Q\succeq 0$ and $R\succ 0$ such that the optimal controller for $\Theta$ persistently excites the system, as given in Section \ref{subsec:persistence_arx_opt}, if Algorithm \ref{algo_quad} with closed-loop model estimate updates with warm-up duration of $T_{warm} \geq \tau $ as given in (\ref{warm_up_algo_adapt}) interacts with the system for $T$ time steps in total such that $T>T_{warm}$, with probability at least 
$1- 5\delta$, the regret of Algorithm \ref{algo_quad} is bounded as follows,
\begin{equation}
    \reg(T) = \tilde{\OO}\left(\sqrt{T} \right).
\end{equation}
\end{theorem}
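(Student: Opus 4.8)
The plan is to mirror the epoch-based OFU analysis for PO-LDS in \citet{lale2020root}, replacing its state-space identification step with the ARX system identification of Section \ref{estimation} and Appendix \ref{apx:sysidarx}, and to exploit the closed-loop persistence of excitation established in Lemma \ref{closedloop_persistence_appendix_algo_quad}. First I would collect the warm-up guarantees: with $\tau$ chosen as in (\ref{warm_up_algo_adapt}), Lemma \ref{lem:openlooppersistence} gives open-loop persistent excitation over the warm-up phase, so that (\ref{persistent_in_effect}) yields a first Markov-parameter estimate $\wh{\mathcal{G}}_0$ with error $\tilde{\OO}(1/\sqrt{\tau})$; feeding this through the SYS-ID error bounds of Appendix \ref{apx:sysidarx} (whose requirement is captured by $T_N$ in the warm-up length) produces $(\hat A_0,\hat B_0,\hat C_0,\hat F_0)$ with error $\tilde{\OO}(1/\sqrt{\tau})$ up to a similarity transform, so that $\Theta$ lies in $\mathcal{C}_0$, $\mathcal{C}_0\cap\mathcal{S}\neq\emptyset$, and the optimistic $\ttt_0$ is well defined. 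The further warm-up conditions $T_{\Gcl}$ and $T_{\bar c}$ ensure, via the chain "accurate $\wh{\mathcal{G}}_i$ $\Rightarrow$ accurate $\ttt_i$ $\Rightarrow$ optimal controller of $\ttt_i$ close to that of $\Theta$ $\Rightarrow$ $\Gcl_t$ close to $\Gcl$", that the hypotheses of Lemma \ref{closedloop_persistence_appendix_algo_quad} hold during adaptive control.

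Second, I would propagate the estimation error across epochs. By Lemma \ref{closedloop_persistence_appendix_algo_quad} together with Theorem \ref{theo:closedloopid}, at the start of epoch $i$ (duration $2^{i-1}\tau$) the estimate satisfies $\|\wh{\mathcal{G}}_i - \mathcal{G}\|\leq 4\beta_t/(\bar\sigma_c\sigma_e\sqrt{2^{i-1}\tau}) = \tilde{\OO}(1/\sqrt{2^{i-1}\tau})$, and Appendix \ref{apx:sysidarx} transfers the same rate (up to polynomial factors in $n,h$ and $1/\sigma_n(\mathcal{N})$) to $\|\hat A_i - T_i^\top A T_i\|$, $\|\hat B_i - T_i^\top B\|$, $\|\hat C_i - C T_i\|$, $\|\hat F_i - T_i^\top F\|$, hence to $\|\ttt_i-\Theta\|$ modulo the similarity transform under which $J(\cdot)$ is invariant. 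The iterative closed-loop dynamics argument (Lemma 4.1 of \citet{lale2020root}) then gives, throughout epoch $i$, that $\Theta\in\mathcal{C}_i$ and that the states, outputs and inputs remain bounded by $\tilde{\OO}(\sqrt{\log(T/\delta)})$, using the contractibility assumption on $\mathcal{S}$ so the optimistic controller of $\ttt_i$ stabilizes the true system.

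Third, I would run the OFU regret decomposition on top of the Bellman optimality equation for ARX systems derived in Appendix \ref{apx:bellman}. By the optimistic search, $J(\ttt_i)\leq J_\star(\Theta)+1/T$; expressing $c_t$ via the (quadratic) value function of $\ttt_i$ and applying that model's Bellman equation along the \emph{true} trajectory makes the regret over epoch $i$ telescope into: (i) a boundary term $\tilde{\OO}(1)$ from the value functions at the epoch endpoints; (ii) a martingale-difference sequence driven by $e_t$, controlled by Azuma's inequality (Theorem \ref{azuma}), contributing $\tilde{\OO}(\sqrt{T})$ in aggregate; (iii) the $T\cdot(1/T)=\OO(1)$ slack from the optimistic search; and (iv) a model-mismatch term where at each step the gap between $\ttt_i$'s predicted next state (and output) and the true one is $\OO(\|\ttt_i-\Theta\|(\|x_t\|+\|y_t\|))$, so over the epoch it sums to $\OO(2^{i-1}\tau\cdot\tilde{\OO}(1/\sqrt{2^{i-1}\tau}))=\tilde{\OO}(\sqrt{2^{i-1}\tau})$. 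Summing over the at most $\log_2(T/\tau)$ epochs, this geometric series is dominated by its last term $\tilde{\OO}(\sqrt{T})$; adding the warm-up regret $\tilde{\OO}(\tau)=\tilde{\OO}(1)$ yields $\reg(T)=\tilde{\OO}(\sqrt{T})$ with the claimed probability after a union bound over the high-probability events.

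The hard part will be the same issue that separates closed-loop from open-loop identification throughout the paper: showing that the least-squares estimate of $\mathcal{G}$ not only stays consistent but keeps decaying at the optimal $1/\sqrt{t}$ rate even though the data are generated by the time-varying, data-dependent optimistic controllers. This is precisely what Lemma \ref{closedloop_persistence_appendix_algo_quad} secures, but verifying its hypotheses requires the delicate chain described above together with a covering-number argument over the set of possible optimistic closed-loop noise-to-covariate maps $\tilde{\Gcl}$, which is where the $T_{\bar c}$ and $T_{\Gcl}$ thresholds come from. A secondary obstacle is establishing the Bellman optimality equation for ARX systems in predictor form and checking that, for every $\ttt_i\in\mathcal{C}_i\cap\mathcal{S}$, the associated value function is a well-behaved quadratic whose coefficients are Lipschitz in $\ttt_i$ — this Lipschitzness is exactly what forces the model-mismatch term to scale linearly in $\|\ttt_i-\Theta\|$ rather than worse, and hence what gives $\sqrt{T}$ rather than $T^{2/3}$.
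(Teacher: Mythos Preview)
Your proposal is correct and follows essentially the same approach as the paper's proof. The paper likewise invokes the Bellman optimality equation for ARX systems (Lemma \ref{lem:bellman}) applied to the optimistic model $\ttt_i$, expands along the true trajectory, and obtains an explicit five-term decomposition $R_1,\ldots,R_5$ (following \citet{lale2020regret,lale2020root}) that matches your abstract (i)--(iv) grouping: $R_1,R_2$ carry the telescoping/boundary and martingale pieces, while $R_3,R_4,R_5$ are pure model-mismatch terms scaling with $\|\ttt_i-\Theta\|$; the paper then sums the per-epoch $\tilde{\OO}(2^{i-1}\tau/\sqrt{2^{i-1}\tau})$ contributions via the doubling-trick lemma (Lemma \ref{doublingtrick}) rather than bounding the geometric series directly, but the conclusion is the same.
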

\begin{proof}

For an average cost per stage problem in infinite state and control space like the given control system $\Theta = \left(A,B,C,F\right)$ with regulating parameters $Q$ and $R$, using the optimal average cost per stage $J_\star(\Theta)$ and guessing the correct differential(relative) cost, where $(A+FC,B)$ is controllable, $(A,C)$ is  observable, $Q$ is positive semidefinite and $R$ is positive definite, one can verify that they satisfy Bellman optimality equation~\citep{bertsekas1995dynamic}. The lemma below shows the Bellman optimality equation the given system $\Theta$, which will be critical in regret analysis.
\begin{lemma} [Bellman Optimality Equation for ARX System] \label{lem:bellman}
Given state $x_{t} \in \R^{n}$ and an observation $y_t \in \R^{m}$ pair at time $t$, Bellman optimality equation of average cost per stage control of the system $\Theta = (A,B,C,F)$ with regulating parameters $Q$ and $R$ is 
\begin{align}
    J_\star(\Theta) &+ (A x_t + F y_t)^\top  \left(\mathbf{P} - \mathbf{P} B (R + B^\top \mathbf{P} B)^{-1} B^\top \mathbf{P} \right) (A x_t + F y_t) + y_t^\top Q y_t \nonumber \\ &= y_t^\top Q y_t + u_t^\top R u_t
    +\mathbb{E} \bigg[ (A x_{t+1} + F y_{t+1})^\top \left(\mathbf{P} - \mathbf{P} B (R + B^\top \mathbf{P} B)^{-1} B^\top \mathbf{P} \right) (A x_{t+1} + F y_{t+1}) \nonumber \\
    &\qquad \qquad \qquad \qquad\qquad \qquad+ y_{t+1}^\top Q y_{t+1}\bigg]
\end{align}
\end{lemma}

We give the proof and the expression for $J_\star(\Theta)$ in Section \ref{apx:bellman}. Using Bellman optimality equation for the optimistic system, $\tilde{\Theta}$, we derive a regret decomposition for applying the optimal policy of $\tilde{\Theta}$ on $\Theta$. Note that similarity transformation in recovering the ARX system parameters do not affect the regret decomposition. Thus, without loss of generality, we consider that the similarity transformation is identity.

Let $M = (R + B^\top \mathbf{P} B)^{-1}B^\top \mathbf{P}$ and $\Tilde{M} = (R + \tb^\top \mathbf{\Tilde{P}} \tb)^{-1}\tb^\top \mathbf{\Tilde{P}}$ where $\mathbf{P}$ and $\tp$ are solutions for the corresponding DARE of $\Theta$ and $\tth$ respectively. For given $x_{t}$ and $y_t$, let 
\begin{align*}
    \bar{x}_{t,\tth} &= \ta x_t + \tf y_t \\
    \bar{x}_{t,\Theta} &= A x_t + F y_t
\end{align*}
and the optimal control for $\tth$ and $\Theta$ are defined as $u_{\star, \tth} = -\Tilde{M} \bar{x}_{t,\tth}$ and $u_{\star, \Theta} = -M \bar{x}_{t,\Theta}$. Define the following expressions for time step $t+1$ using the model specified as subscript,

\begin{align}
    x_{t+1,\tth} &= \bar{x}_{t,\tth} - \tb \tm \bar{x}_{t,\tth} = \left( \ta - \tb \tm \ta \right)  x_{t} + \left( \tf - \tb \tm \tf \right)  y_t \label{firstdef} \\
    y_{t+1,\tth} &= \tc (I -\tb \tm) \bar{x}_{t,\tth} + e_{t+1} =  \tc \left( \ta - \tb \tm \ta \right)  x_{t} + \tc \left( \tf - \tb \tm \tf \right)  y_t + e_{t+1} \label{seconddef}\\
    x_{t+1,\Theta} &= \bar{x}_{t,\Theta} - B \tm \bar{x}_{t,\tth} = \left( A - B \tm \ta \right) x_t + \left( F - B \tm \tf \right)  y_t \label{thirddef}\\
    y_{t+1, \Theta} &= C\bar{x}_{t,\Theta} - CB \tm \bar{x}_{t,\tth} + e_{t+1} = C\left( A - B \tm \ta \right) x_t + C \left( F - B \tm \tf \right)  y_t + e_{t+1} \label{lastdef} 
    % \\
    % \hat{x}_{t+1|t+1,\Theta} &= (I -LC)(A\hat{x}_{t|t,\Theta} - B\tk\hat{x}_{t|t,\tth}) + L y_{t+1, \Theta} \\
    % &= (I-LC)(A-B\tk)\hat{x}_{t|t,\tth} + (I-LC)A(\hat{x}_{t|t,\Theta} - \hat{x}_{t|t,\tth}) + L y_{t+1, \Theta} \\
    % &= (I-LC)(A-B\tk)\hat{x}_{t|t,\tth} + LC(A-B\tk)\hat{x}_{t|t,\tth} + LC w_t + LCA(x_t - \hat{x}_{t|t,\tth}) \nonumber \\
    % \qquad \qquad & \qquad + (I-LC)A(\hat{x}_{t|t,\Theta} - \hat{x}_{t|t,\tth}) + Lz_{t+1} \\
    % &= (A\!-\!B\tk)\hat{x}_{t|t,\tth} \!+\! LCw_t \!+\! LCA(x_t \!-\! \hat{x}_{t|t,\tth}) \!+\! (I\!-\!LC)A(\hat{x}_{t|t,\Theta} \!-\! \hat{x}_{t|t,\tth}) \!+\! Lz_{t+1} \label{lastdef}
\end{align}
Let $\tpb = \tp - \tp \tb (R + \tb^\top \tp \tb)^{-1} \tb^\top \tp $. Using the defined quantities above, writing the Bellman optimality equation for the optimistic model we get,
\begin{align}
    &J_\star(\tth) + \bar{x}_{t,\tth}^\top  \tpb \bar{x}_{t,\tth} + y_t^\top Q y_t \nonumber\\
    &= y_t^\top Q y_t + u_t^\top R u_t + \mathbb{E} \left[ y_{t+1,\tth}^\top Q y_{t+1,\tth} | x_t, y_t \right] \nonumber \\ &
    +\mathbb{E} \left[ ( (\ta+\tf \tc) x_{t+1, \tth} + \tf e_{t+1})^\top \tpb ((\ta+\tf \tc) x_{t+1, \tth} + \tf e_{t+1}) | x_t, y_t \right] \nonumber \\
    &= y_t^\top Q y_t + u_t^\top R u_t +  \bar{x}_{t,\tth}^\top (I-\tb \tm)^\top \tc^\top Q \tc (I-\tb \tm) \bar{x}_{t,\tth} + \mathbb{E}\left[ e_{t+1}^\top Q e_{t+1}^\top \right] \label{inserting} \\
    &+ \bar{x}_{t,\tth}^\top (I-\tb \tm)^\top (\ta + \tf \tc)^\top \tpb (\ta + \tf \tc) (I-\tb \tm) \bar{x}_{t,\tth} + \mathbb{E}\left[ e_{t+1}^\top \tf^\top \tpb \tf  e_{t+1}^\top \right] \nonumber
\end{align}

From the definitions given in (\ref{firstdef})-(\ref{lastdef}), we have the following equalities,

\begin{align*}
    \mathbb{E}\left[ e_{t+1}^\top Q e_{t+1}^\top \right] = \mathbb{E}\left[ y_{t+1,\Theta}^\top Q y_{t+1,\Theta} \right] - x_{t+1,\Theta}^\top C^\top Q C  x_{t+1,\Theta}
\end{align*}
\begin{align*}
    &\mathbb{E}\left[ e_{t+1}^\top (\tf+F-F)^\top \tpb (\tf+F-F)  e_{t+1}^\top \right] \\
    &=  \mathbb{E}\left[ e_{t+1}^\top F^\top \tpb F e_{t+1} \right] + 2 \mathbb{E}\left[ e_{t+1}^\top F^\top \tpb (\tf-F)  e_{t+1} \right] +  \mathbb{E}\left[ e_{t+1}^\top (\tf-F)^\top \tpb (\tf-F)  e_{t+1} \right]\\
    &= \mathbb{E} \left[ (A x_{t+1,\Theta} + F y_{t+1, \Theta})^\top \tpb (A x_{t+1,\Theta} + F y_{t+1, \Theta})\right] - x_{t+1,\Theta}^\top (A+FC)^\top \tpb (A+FC) x_{t+1,\Theta} \\
    &+2 \mathbb{E}\left[ e_{t+1}^\top F^\top \tpb (\tf-F)  e_{t+1} \right] +  \mathbb{E}\left[ e_{t+1}^\top (\tf-F)^\top \tpb (\tf-F)  e_{t+1} \right]
\end{align*}
Inserting these to (\ref{inserting}), we get
\begin{align*}
    &J_\star(\tth) + \bar{x}_{t,\tth}^\top  \tpb \bar{x}_{t,\tth} + y_t^\top Q y_t \\
    &= y_t^\top Q y_t + u_t^\top R u_t +  \bar{x}_{t,\tth}^\top (I-\tb \tm)^\top \tc^\top Q \tc (I-\tb \tm) \bar{x}_{t,\tth} +\mathbb{E}\left[ y_{t+1,\Theta}^\top Q y_{t+1,\Theta} \right] - x_{t+1,\Theta}^\top C^\top Q C  x_{t+1,\Theta} \\
    &+ \bar{x}_{t,\tth}^\top (I-\tb \tm)^\top (\ta + \tf \tc)^\top \tpb (\ta + \tf \tc) (I-\tb \tm) \bar{x}_{t,\tth} + \mathbb{E} \left[ (A x_{t+1,\Theta} + F y_{t+1, \Theta})^\top \tpb (A x_{t+1,\Theta} + F y_{t+1, \Theta})\right] \\
    &- x_{t+1,\Theta}^\top (A+FC)^\top \tpb (A+FC) x_{t+1,\Theta} +2 \mathbb{E}\left[ e_{t+1}^\top F^\top \tpb (\tf-F)  e_{t+1} \right] +  \mathbb{E}\left[ e_{t+1}^\top (\tf-F)^\top \tpb (\tf-F)  e_{t+1} \right]
\end{align*}
Hence,
\begin{equation*}
    \sum_{t=0}^{T-T_{warm}}  J_*(\tth) \!+\! R_1 \!+\! R_2 = \sum_{t=0}^{T-T_{warm}} \left( y_t^\top Q y_t \!+\! u_t^\top R u_t \right) \!+\! R_3 \!+\! R_4 \!+\! R_5
\end{equation*}
where 
\begin{align*}
    R_1& \!=\!\! \sum_{t=0}^{T-T_{warm}} \left \{ (\ta x_t + \tf y_t)^\top \tpb (\ta x_t + \tf y_t) - \mathbb{E} \left[ (A x_{t+1,\Theta} + F y_{t+1, \Theta})^\top \tpb (A x_{t+1,\Theta} + F y_{t+1, \Theta}) | x_t, y_t, u_t \right]  \right \} \\
    R_2& \!=\!\! \sum_{t=0}^{T-T_{warm}} \left \{ y_t^\top Q y_t - 
    \mathbb{E}\left[ y_{t+1,\Theta}^\top Q y_{t+1,\Theta} \Big | x_t, y_t, u_t \right] \right \} \\
    R_3& \!=\!\! \sum_{t=0}^{T-T_{warm}} \left \{ \bar{x}_{t,\tth}^\top (I-\tb \tm)^\top \tc^\top Q \tc (I-\tb \tm) \bar{x}_{t,\tth} - x_{t+1,\Theta}^\top C^\top Q C  x_{t+1,\Theta}  \right \} \\
    R_4& \!=\!\! \sum_{t=0}^{T-T_{warm}} \! \left\{ \bar{x}_{t,\tth}^\top (I-\tb \tm)^\top (\ta + \tf \tc)^\top \tpb (\ta + \tf \tc) (I-\tb \tm) \bar{x}_{t,\tth} - x_{t+1,\Theta}^\top (A+FC)^\top \tpb (A+FC) x_{t+1,\Theta}
    \right \} \\
    R_{5}& \!=\!\! \sum_{t=0}^{T-T_{warm}} \bigg \{ 2 \mathbb{E}\left[ e_{t+1}^\top F^\top \tpb (\tf-F)  e_{t+1}^\top \right] +  \mathbb{E}\left[ e_{t+1}^\top (\tf-F)^\top \tpb (\tf-F)  e_{t+1}^\top \right]  \bigg \}
\end{align*}

Note that these terms follow similarly with \citet{lale2020regret,lale2020root}. Using the same decompositions, we can trivially show that for explore and commit approach of Algorithm \ref{algo_quad}:
\begin{align*}
    R_1\!=\!\tilde{\OO}\left(\! \frac{T\!-\!T_{warm}}{\sqrt{T_{warm}}}\!\right),& R_2 =  \tilde{\OO}\left( \sqrt{T\!-\!T_{warm}} \right), |R_3|\!=\!\tilde{\OO}\left(\! \frac{T\!-\!T_{warm}}{\sqrt{T_{warm}}}\!\right), \\
    |R_4|\!&=\!\tilde{\OO}\left(\! \frac{T\!-\!T_{warm}}{\sqrt{T_{warm}}}\!\right), |R_5|\!=\!\tilde{\OO}\left(\! \frac{T\!-\!T_{warm}}{\sqrt{T_{warm}}}\!\right)
\end{align*}
Combining these gives the advertised bound in Theorem \ref{thm:apx_reg_explore} by the choice of $T_{warm} \geq \Tw$.

For Algorithm \ref{algo_quad} with closed-loop model estimate updates, due to doubling epoch lengths and the choice of $T_{warm} \geq \tau$, we have 
\begin{align*}
    R_{i} = \tilde{\OO} \left( \frac{T_{warm}}{\sqrt{T_{warm}}} + \frac{2T_{warm}}{\sqrt{2T_{warm}}} + \frac{4T_{warm}}{\sqrt{4T_{warm}}} + \ldots \right) = \tilde{\OO} \left( \sqrt{T} \right)
\end{align*}
for $i=1,3,4,5$ via Lemma \ref{doublingtrick} (doubling trick) and $R_2 =  \tilde{\OO}\left( \sqrt{T\!-\!T_{warm}} \right)$. Combining these gives the advertised bound in Theorem \ref{thm:apx_reg_adapt}.  
\end{proof}

\section{Optimal Control of ARX System with Convex Quadratic Cost and Bellman Optimality}
\label{apx:bellman}
From the first principles \citep{bertsekas1995dynamic}, the value function of the given system is quadratic and due to stochasticity we have the following format:
\begin{equation*}
     V(x,y) = \begin{bmatrix}
        x \\
        y
\end{bmatrix}^\top 
\begin{bmatrix}
P_{11} & P_{12} \\
P_{21} & P_{22}
\end{bmatrix}
  \begin{bmatrix}
        x \\
        y
\end{bmatrix} + \lambda 
\end{equation*}

Using average cost optimality equation, we can determine the value function for the given system $\Theta$ as follows: 

\begin{align*}
    \begin{bmatrix}
        x \\
        y
\end{bmatrix}^\top 
\begin{bmatrix}
P_{11} & P_{12} \\
P_{21} & P_{22}
\end{bmatrix}
  \begin{bmatrix}
        x \\
        y
\end{bmatrix} + \lambda &= \min_u \Bigg\{ y^\top Q y + u^\top R u \\
&+  \mathbb{E}\left[ \begin{bmatrix}
        Ax + Bu + Fy \\
        CAx + CBu + CFy + e
\end{bmatrix}^\top 
\begin{bmatrix}
P_{11} & P_{12} \\
P_{21} & P_{22}
\end{bmatrix}
  \begin{bmatrix}
        Ax + Bu + Fy \\
        CAx + CBu + CFy + e
\end{bmatrix}\right] \Bigg\} 
\end{align*}
Expanding all and minimizing for $u$ gives the optimal control of 
\begin{equation*}
    u = - (R + B^\top \mathbf{P} B)^{-1} \left[ B^\top \mathbf{P} A x + B^\top \mathbf{P} F y \right]
\end{equation*}
where $\mathbf{P} = P_{11}+ P_{12}C + C^\top P_{21} + C^\top P_{22} C$. Inserting the expression for $u$, we have 

\begin{align*}
    &x^\top P_{11} x + x^\top P_{12} y + y^\top P_{21} x + y^\top P_{22} y + \lambda = \\
& x^\top A^\top \mathbf{P} B (R + B^\top \mathbf{P} B)^{-1} B^\top \mathbf{P} A x + 2x^\top A^\top \mathbf{P} B (R + B^\top \mathbf{P} B)^{-1} B^\top \mathbf{P} F y + y^\top F^\top \mathbf{P} B (R + B^\top \mathbf{P} B)^{-1} B^\top \mathbf{P} F y \\
&- 2x^\top A^\top \mathbf{P} B (R + B^\top \mathbf{P} B)^{-1} B^\top \mathbf{P} A x -2x^\top A^\top \mathbf{P} B (R + B^\top \mathbf{P} B)^{-1} B^\top \mathbf{P} F y \\
&-2y^\top F^\top \mathbf{P} B (R + B^\top \mathbf{P} B)^{-1} B^\top \mathbf{P} A x - 2y^\top F^\top \mathbf{P} B (R + B^\top \mathbf{P} B)^{-1} B^\top \mathbf{P} F y \\
&+y^\top Q y + x^\top A^\top \mathbf{P} A x + y^\top F^\top \mathbf{P} F y + 2x^\top A^\top \mathbf{P} F y + \Tr(P_{22} E)
\end{align*}
From this, we get $\lambda = \Tr(P_{22} E)$ where 
\begin{align*}
 &   \begin{bmatrix}
        x \\
        y
\end{bmatrix}^\top 
\begin{bmatrix}
P_{11} & P_{12} \\
P_{21} & P_{22}
\end{bmatrix}
  \begin{bmatrix}
        x \\
        y
\end{bmatrix} \\
&= \begin{bmatrix}
        x \\
        y
\end{bmatrix}^\top 
\begin{bmatrix}
 A^\top \left(\mathbf{P} - \mathbf{P} B (R + B^\top \mathbf{P} B)^{-1} B^\top \mathbf{P} \right) A  & A^\top \left(\mathbf{P} - \mathbf{P} B (R + B^\top \mathbf{P} B)^{-1} B^\top \mathbf{P} \right) F  \\
F^\top \left(\mathbf{P} - \mathbf{P} B (R + B^\top \mathbf{P} B)^{-1} B^\top \mathbf{P} \right) A  & Q + F^\top \left(\mathbf{P} - \mathbf{P} B (R + B^\top \mathbf{P} B)^{-1} B^\top \mathbf{P} \right) F 
\end{bmatrix}
  \begin{bmatrix}
        x \\
        y
\end{bmatrix}
\end{align*}
This must hold for all $x$ and $y$. Therefore, using the definition of $\mathbf{P}$, we conclude that $\mathbf{P}$ satisfies the DARE
\begin{equation}
    \mathbf{P} = C^\top Q C + (A+FC)^\top \mathbf{P} (A+FC) - (A+FC)^\top \mathbf{P} B (R+B^\top \mathbf{P} B)^{-1} B^\top \mathbf{P} (A+FC)
\end{equation}
and the infinite horizon optimal cost this system is 
\begin{equation}
    J_\star (\Theta) = \Tr\left(E \left(Q + F^\top \left(\mathbf{P} - \mathbf{P} B (R + B^\top \mathbf{P} B)^{-1} B^\top \mathbf{P} \right) F \right)\right)
\end{equation}

\noindent \textbf{Proof of Lemma \ref{lem:bellman}}
Suppose the differential cost $h$ is a quadratic function of $s_t$ where $s_t = [x_{t}^\top~y_t^\top]^\top \in \R^{n+m}$,\textit{i.e.}
\[ h(s_t) = s_t^\top  
\left[
    \begin{array}{cc}{G_1} & {  G_2  } \\ {G_2^\top} & {  G_3  }  \end{array} \right] s_t = x_t^\top G_1 x_t + 2 x_t^\top G_2 y_t + y_t^\top G_3 y_t. \] 
One needs to verify that there exists $G_1$, $G_2$, $G_3$ such that they satisfy Bellman optimality equation for the chosen differential cost:
\begin{align}
    &J_\star(\Theta) + x_t^\top G_1 x_t + 2 x_t^\top G_2 y_t + y_t^\top G_3 y_t  = \label{bellman}\\
    & y_t^\top Q y_t + u_\star^\top R u_\star + \mathbb{E}\left[ x_{t+1, u_\star}^\top G_1 x_{t+1, u_\star} + 2 x_{t+1, u_\star}^\top G_2 y_{t+1, u_\star} + y_{t+1, u_\star}^\top G_3 y_{t+1, u_\star} \right] \nonumber
\end{align}
for $u_\star = - (R + B^\top \mathbf{P} B)^{-1}B^\top \mathbf{P} \left[  A x_t + F y_t \right]$. In order to match the stochastic term due to $e_t$ in expectation with $J_\star(\Theta) = \Tr\left(E \left(Q + F^\top \left(\mathbf{P} - \mathbf{P} B (R + B^\top \mathbf{P} B)^{-1} B^\top \mathbf{P} \right) F \right)\right)$, set $G_3 = Q + F^\top \left(\mathbf{P} - \mathbf{P} B (R + B^\top \mathbf{P} B)^{-1} B^\top \mathbf{P} \right) F $. Let $M = (R + B^\top \mathbf{P} B)^{-1}B^\top \mathbf{P}$. Inserting $G_3$ to (\ref{bellman}), we get following 3 equations to solve for $G_1$ and $G_2$:

\textbf{1)} From quadratic terms of $y_t$:
\begin{align*}
    Q &+ F^\top \left(\mathbf{P} - \mathbf{P} B (R + B^\top \mathbf{P} B)^{-1} B^\top \mathbf{P} \right) F \\
    &= Q + F^\top M^\top R M F + F^\top (I-BM)^\top G_1 (I-BM) F + 2 F^\top (I-BM) G_2 C (I-BM) F \\
    &+ F^\top C^\top(I - BM)^\top \left( Q + F^\top \left(\mathbf{P} - \mathbf{P} B (R + B^\top \mathbf{P} B)^{-1} B^\top \mathbf{P} \right) F \right) C (I-BM) F  
\end{align*}

\textbf{2)} From quadratic terms of $x_t$:
\begin{align*}
    G_1 &= A^\top M^\top R M A + A^\top (I - BM)^\top G_1 (I-BM) A + 2 A^\top (I-BM)^\top G_2 C (I-BM) A \\
    &+ A^\top C^\top(I - BM)^\top \left( Q + F^\top \left(\mathbf{P} - \mathbf{P} B (R + B^\top \mathbf{P} B)^{-1} B^\top \mathbf{P} \right) F \right) C (I-BM) A 
\end{align*}

\textbf{3)} From bilinear terms of $x_t$ and $y_t$:
\begin{align*}
    G_2 &= A^\top M^\top R M F + A^\top (I - BM)^\top G_1 (I-BM) F + 2 A^\top (I-BM)^\top G_2 C (I-BM) F \\
    &+ A^\top C^\top(I - BM)^\top \left( Q + F^\top \left(\mathbf{P} - \mathbf{P} B (R + B^\top \mathbf{P} B)^{-1} B^\top \mathbf{P} \right) F \right) C (I-BM) F 
\end{align*}
$G_1 = A^\top\left(\mathbf{P} - \mathbf{P} B (R + B^\top \mathbf{P} B)^{-1} B^\top \mathbf{P} \right)A$ and $G_2 = A^\top \left(\mathbf{P} - \mathbf{P} B (R + B^\top \mathbf{P} B)^{-1} B^\top \mathbf{P} \right) F$ satisfy all 3 equations. This one can write Bellman optimality equation as 
\begin{align*}
    &J_\star(\Theta) + x_t^\top A^\top\left(\mathbf{P} - \mathbf{P} B (R + B^\top \mathbf{P} B)^{-1} B^\top \mathbf{P} \right)A x_t + 2 x_t^\top A^\top \left(\mathbf{P} - \mathbf{P} B (R + B^\top \mathbf{P} B)^{-1} B^\top \mathbf{P} \right) F y_t \\
    &+ y_t^\top Q y_t + y_t^\top F^\top \left(\mathbf{P} - \mathbf{P} B (R + B^\top \mathbf{P} B)^{-1} B^\top \mathbf{P} \right) F y_t  = \\
    & y_t^\top Q y_t + u_t^\top R u_t + \mathbb{E}\bigg[ x_{t+1}^\top A^\top\left(\mathbf{P} - \mathbf{P} B (R + B^\top \mathbf{P} B)^{-1} B^\top \mathbf{P} \right)A x_{t+1} \\
    &+ 2 x_{t+1}^\top A^\top \left(\mathbf{P} - \mathbf{P} B (R + B^\top \mathbf{P} B)^{-1} B^\top \mathbf{P} \right) F y_{t+1} + y_{t+1}^\top Q y_{t+1} \\
    &+ y_{t+1}^\top F^\top \left(\mathbf{P} - \mathbf{P} B (R + B^\top \mathbf{P} B)^{-1} B^\top \mathbf{P} \right) F y_{t+1} \bigg] 
\end{align*}
Thus, we get the following Bellman optimality equation:
\begin{align*}
    J_\star(\Theta) &+ (A x_t + F y_t)^\top  \left(\mathbf{P} - \mathbf{P} B (R + B^\top \mathbf{P} B)^{-1} B^\top \mathbf{P} \right) (A x_t + F y_t) + y_t^\top Q y_t \\ &= y_t^\top Q y_t + u_t^\top R u_t
   \\
   &+\mathbb{E} \left[ (A x_{t+1} + F y_{t+1})^\top \left(\mathbf{P} - \mathbf{P} B (R + B^\top \mathbf{P} B)^{-1} B^\top \mathbf{P} \right) (A x_{t+1} + F y_{t+1}) + y_{t+1}^\top Q y_{t+1}\right]
\end{align*}

\null\hfill$\square$

\section{Technical Lemmas and Theorems} \label{technical}
\begin{theorem}[Matrix Azuma \citep{tropp2012user}]\label{azuma} Consider a finite adapted sequence $\left\{\boldsymbol{X}_{k}\right\}$ of self-adjoint matrices
in dimension $d,$ and a fixed sequence $\left\{\boldsymbol{A}_{k}\right\}$ of self-adjoint matrices that satisfy
\begin{equation*}
\mathbb{E}_{k-1} \boldsymbol{X}_{k}=\mathbf{0} \text { and } \boldsymbol{A}_{k}^{2} \succeq \boldsymbol{X}_{k}^{2}  \text { almost surely. }
\end{equation*}
Compute the variance parameter
\begin{equation*}
\sigma^{2}\coloneqq \left\|\sum_{k} \boldsymbol{A}_{k}^{2} \right\|
\end{equation*}
Then, for all $t \geq 0$
\begin{equation*}
\mathbb{P}\left\{\lambda_{\max }\left(\sum_{k} \boldsymbol{X}_{k} \right) \geq t\right\} \leq d \cdot \mathrm{e}^{-t^{2} / 8 \sigma^{2}}
\end{equation*}

\end{theorem}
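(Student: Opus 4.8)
Since the statement is the matrix Azuma inequality of \citet{tropp2012user}, the plan is to reproduce its proof via the matrix Laplace transform method combined with Lieb's concavity theorem. First I would reduce the spectral tail bound to a trace-exponential estimate: by the matrix Chernoff argument, for every $\theta > 0$,
\begin{equation*}
\Prob\left\{\lambda_{\max}\left(\sum_k \boldsymbol{X}_k\right) \geq t\right\} \leq e^{-\theta t}\, \E\,\Tr \exp\left(\theta \sum_k \boldsymbol{X}_k\right),
\end{equation*}
which follows from Markov's inequality applied to the monotone map $s \mapsto e^{\theta s}$, together with $\lambda_{\max}(e^{\theta Y}) = e^{\theta \lambda_{\max}(Y)} \leq \Tr e^{\theta Y}$. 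The task thus becomes controlling the matrix moment generating function of the martingale sum.

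The core step is subadditivity of the matrix cumulant generating function, carried out by peeling the terms of $\sum_k \boldsymbol{X}_k$ off one at a time while conditioning on the filtration. Writing $Y_{k-1} = \sum_{j<k}\theta \boldsymbol{X}_j$, which is measurable with respect to $\mathcal{F}_{k-1}$, the decisive ingredient is Lieb's theorem: for fixed self-adjoint $H$, the map $\boldsymbol{M} \mapsto \Tr \exp(H + \log \boldsymbol{M})$ is concave on the positive-definite cone. Applying Jensen's inequality to the conditional expectation $\E_{k-1}$ yields
\begin{equation*}
\E_{k-1}\,\Tr \exp\left(Y_{k-1} + \theta \boldsymbol{X}_k\right) \leq \Tr \exp\left(Y_{k-1} + \log \E_{k-1}\, e^{\theta \boldsymbol{X}_k}\right).
\end{equation*}
Here I would bound the conditional generating function $\E_{k-1}\, e^{\theta \boldsymbol{X}_k}$ in the semidefinite order by a \emph{deterministic} matrix. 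This is where the hypotheses enter: using $\E_{k-1}\boldsymbol{X}_k = \mathbf{0}$ and $\boldsymbol{X}_k^2 \preceq \boldsymbol{A}_k^2$ almost surely, the matrix Hoeffding lemma gives $\log \E_{k-1}\, e^{\theta \boldsymbol{X}_k} \preceq c\,\theta^2 \boldsymbol{A}_k^2$ for an absolute constant $c$. Because $\boldsymbol{A}_k$ is a fixed matrix, this bound is $\mathcal{F}_{k-2}$-measurable, so I can iterate the peeling from $k=n$ down to $k=1$, accumulating the deterministic terms to obtain
\begin{equation*}
\E\,\Tr \exp\left(\theta \sum_k \boldsymbol{X}_k\right) \leq \Tr \exp\left(c\,\theta^2 \sum_k \boldsymbol{A}_k^2\right).
\end{equation*}

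Finally I would dominate the trace by $d$ times the largest eigenvalue, $\Tr \exp(c\theta^2 \sum_k \boldsymbol{A}_k^2) \leq d \cdot \exp(c\theta^2 \sigma^2)$, invoking the definition $\sigma^2 = \|\sum_k \boldsymbol{A}_k^2\|$ and monotonicity. Combining with the Laplace bound gives $\Prob\{\cdot \geq t\} \leq d\,\inf_{\theta>0} e^{-\theta t + c\theta^2\sigma^2}$, and the choice $\theta = t/(2c\sigma^2)$ produces the advertised $d\cdot e^{-t^2/(8\sigma^2)}$, the constant $8$ being fixed by the value $c=2$ in the Hoeffding step. The main obstacle is the non-commutativity of the summands: one cannot factor $\exp(\sum_k \theta \boldsymbol{X}_k)$ into a product, so the whole argument hinges on Lieb's concavity theorem to license the term-by-term peeling inside the trace exponential. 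A secondary subtlety is that the hypothesis is the symmetric bound $\boldsymbol{X}_k^2 \preceq \boldsymbol{A}_k^2$ rather than a two-sided operator bound $-\boldsymbol{A}_k \preceq \boldsymbol{X}_k \preceq \boldsymbol{A}_k$ (the former does not imply the latter for matrices), which forces a symmetrization inside the Hoeffding lemma and accounts for the looser constant.
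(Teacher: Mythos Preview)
The paper does not prove this theorem at all; it simply records it as a technical result quoted from \citet{tropp2012user} and uses it as a black box in the persistence-of-excitation lemmas. Your sketch is a faithful outline of Tropp's own proof (matrix Laplace transform, Lieb concavity for the peeling step, matrix Hoeffding cgf bound with symmetrization), so there is nothing to compare: you have supplied the argument the paper defers to the citation.
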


\begin{lemma}[Regularized Design Matrix Lemma \citep{abbasi2011improved}] \label{lem:logdet}
When the covariates satisfy $\left\|z_{t}\right\| \leq c_{m},$ with some $c_{m}>0$ w.p.1 then
\[
\log \frac{\operatorname{det}\left(V_{t}\right)}{\operatorname{det}(\lambda I)} \leq d \log \left(\frac{\lambda d +t c_{m}^{2}}{\lambda d}\right)
\]
where $V_t = \lambda I + \sum_{i=1}^t z_i z_i^\top$ for $z_i \in \mathbb{R}^d$.
\end{lemma}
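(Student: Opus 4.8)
The plan is to reduce the inequality to the AM--GM inequality applied to the eigenvalues of the regularized design matrix, combined with a crude trace bound. First I would observe that $V_t = \lambda I + \sum_{i=1}^t z_i z_i^\top$ is symmetric positive definite (since $\lambda > 0$ and each $z_i z_i^\top \succeq 0$), so it has positive eigenvalues $\mu_1,\ldots,\mu_d$, with $\det(V_t) = \prod_{j=1}^d \mu_j$ and $\Tr(V_t) = \sum_{j=1}^d \mu_j$. The AM--GM inequality then gives $\det(V_t) = \prod_{j=1}^d \mu_j \leq \left( \tfrac{1}{d}\sum_{j=1}^d \mu_j \right)^d = \left( \tfrac{\Tr(V_t)}{d} \right)^d$.

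Next I would bound the trace directly. By linearity of the trace and the identity $\Tr(z_i z_i^\top) = \|z_i\|_2^2$, we have $\Tr(V_t) = \Tr(\lambda I) + \sum_{i=1}^t \|z_i\|_2^2 = \lambda d + \sum_{i=1}^t \|z_i\|_2^2$. Invoking the hypothesis $\|z_i\|_2 \leq c_m$, which holds with probability one so that the resulting estimate is pathwise rather than merely in expectation, this is at most $\lambda d + t c_m^2$. Substituting into the previous display yields $\det(V_t) \leq \left( \tfrac{\lambda d + t c_m^2}{d} \right)^d$.

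Finally, since $\det(\lambda I) = \lambda^d = \left( \tfrac{\lambda d}{d} \right)^d$, dividing the two bounds gives $\frac{\det(V_t)}{\det(\lambda I)} \leq \left( \frac{\lambda d + t c_m^2}{\lambda d} \right)^d$, and taking logarithms produces the claimed bound $\log \frac{\det(V_t)}{\det(\lambda I)} \leq d \log \left( \frac{\lambda d + t c_m^2}{\lambda d} \right)$. I do not anticipate any real obstacle here: the argument is a couple of lines of elementary linear algebra and requires no concentration or union-bound machinery. The only point deserving a moment's care is that the covariate bound is assumed to hold almost surely, so the determinant--trace inequality is a deterministic consequence valid for every realization; this is precisely the form in which it is used when controlling $\beta_t$ in Theorem~\ref{theo:closedloopid} and when defining $\bar\beta$. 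The statement is the standard determinant--trace inequality of \citet{abbasi2011improved}, reproduced here for completeness.
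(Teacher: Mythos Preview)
Your proof is correct and is exactly the standard AM--GM/trace argument from \citet{abbasi2011improved}; the paper itself does not supply a proof for this lemma but simply cites it as a known result in the technical lemmas section. There is nothing to compare or correct.
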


\begin{lemma}[Lemma 8.1 of \citet{simchowitz2020improper}] \label{Lemma8.1}
For any $\Mcontrol \in \mathcal{M}$, let $f_{t}^{\text {pred }}(\Mcontrol)$ denote the unary counterfactual loss function induced by true truncated counterfactuals (Definition 8.1 of \citet{simchowitz2020improper}). During the $i$'th epoch of adaptive control period, at any time step $t\in[t_i,~ \ldots,~t_{i+1}-1]$, for all $i$, we have that
\begin{equation*}
\left\|\nabla f_t\left(\Mcontrol,\wh \Markov_i,\nature_1(\wh \Markov_i),\ldots,\nature_t(\wh \Markov_i)\right)-\nabla f_{t}^{\text {pred }}(\Mcontrol)\right\|_{\mathrm{F}} \leq C_{\text {approx }} \epsilon_\Markov(i, \delta),
\end{equation*}
where $C_{\text {approx }} \coloneqq \sqrt{H'} \kappa_\Markov \kappa_{\Mcontrolset} \kappa_\nature^2 \left(16\smooth +24 L\right)$.
\end{lemma}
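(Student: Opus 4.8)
The plan is to follow the proof of Lemma~8.1 of \citet{simchowitz2020improper}, adapting the argument to the ARX Markov-parameter structure; the only structural change is that the input-to-output Markov parameters of the partially observed LDS are replaced by the pair $\{G_{u\rightarrow y}^{j},G_{y\rightarrow y}^{j}\}$, and the output-gain constant by $\kappa_\Markov$. Both $f_t\big(\Mcontrol,\wh\G_i,\nat_1(\wh\G_i),\dots,\nat_t(\wh\G_i)\big)$ and $f_t^{\mathrm{pred}}(\Mcontrol)$ are the composition of the loss $\ell_t$ with a map $\Mcontrol\mapsto(\tilde y_t,\tilde u_t)$ that is \emph{affine} in $\Mcontrol=\{M^{[l]}\}_{l=0}^{h'-1}$: $\tilde u_{t-j}=\sum_{l}M^{[l]}\nat_{t-j-l}$ and $\tilde y_t=\nat_t+\sum_{j=1}^{h}G^{j}\tilde u_{t-j}$, with the first function built from the estimated ingredients $\big(\{\nat_\bullet(\wh\G_i)\},\{\wh G_i^{j}\}\big)$ and the second from the true truncated counterfactuals $\big(\{\nat_\bullet(\G)\},\{G^{j}\}\big)$. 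Because the map is affine, the chain rule gives $\nabla_\Mcontrol f_t=\wh J^{\top}\nabla\ell_t(\tilde y_t,\tilde u_t)$ and $\nabla_\Mcontrol f_t^{\mathrm{pred}}=J^{\top}\nabla\ell_t(\tilde y_t^{\mathrm{pred}},\tilde u_t^{\mathrm{pred}})$ for Jacobians $\wh J,J$ independent of $\Mcontrol$. I would then split
\[
\nabla f_t-\nabla f_t^{\mathrm{pred}}=(\wh J-J)^{\top}\nabla\ell_t(\tilde y_t,\tilde u_t)+J^{\top}\big(\nabla\ell_t(\tilde y_t,\tilde u_t)-\nabla\ell_t(\tilde y_t^{\mathrm{pred}},\tilde u_t^{\mathrm{pred}})\big),
\]
controlling the first term with the Lipschitz bound on $\ell_t$ from (\ref{asm:lipschitzloss}) together with $\|\wh J-J\|$, and the second with the smoothness bound $\|\nabla^2\ell_t\|\le\smooth$ together with $\|(\tilde y_t,\tilde u_t)-(\tilde y_t^{\mathrm{pred}},\tilde u_t^{\mathrm{pred}})\|$.

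The next step is to bound the pieces. By Lemma~\ref{lem:boundednature} the approximate output uncertainties obey $\|\nat_j(\wh\G_i)\|\le 2\kappa_\nat$, so the counterfactual input, output and their $\Mcontrol$-Jacobian blocks are all bounded by fixed polynomials in $\kappa_\Markov,\kappa_\Mcontrolset,\kappa_\nat$. The estimation error enters through three channels: the $\nat_{t-j-l}(\wh\G_i)$ inside $\tilde u$, the standalone $\nat_t(\wh\G_i)$ in $\tilde y$, and the estimated open-loop Markov parameters $\wh G_i^{j}$ multiplying $\tilde u_{t-j}$ in $\tilde y$. Using the definition of the maps $\nat_\bullet$, Assumption~\ref{asm:openloopdecay}, and the boundedness of $\phi_t$ in Lemma~\ref{lem:boundednature}, each of $\sum_j\|\nat_\bullet(\wh\G_i)-\nat_\bullet(\G)\|$ and $\sum_j\|\wh G_i^{j}-G^{j}\|$ is at most $\Delta\G$ up to the negligible $\OO(1/T)$ truncation bias. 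Feeding these back, $\|\wh J-J\|$ per block and $\|(\tilde y_t,\tilde u_t)-(\tilde y_t^{\mathrm{pred}},\tilde u_t^{\mathrm{pred}})\|$ are each $\Delta\G$ times a polynomial in $\kappa_\Markov,\kappa_\Mcontrolset,\kappa_\nat$.

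Assembling: plug these into the two-term split and sum the Frobenius contributions over the $h'$ coordinate blocks of $\Mcontrol$, which produces the $\sqrt{h'}$ factor; collecting the resulting polynomial in the $\kappa$'s and tracking the absolute constants through the smoothness term (giving $16\smooth$) and the Lipschitz term (giving $24L$) yields $\|\nabla f_t-\nabla f_t^{\mathrm{pred}}\|_F\le\sqrt{h'}\,\kappa_\Markov\kappa_\Mcontrolset\kappa_\nat^2(16\smooth+24L)\,\Delta\G$. Finally, Lemma~\ref{lem:concentrationgeneral} guarantees that for the chosen warm-up duration one has $\Delta\G\le\epsilon_\Markov(i,\delta)$ uniformly over the $i$-th epoch, which is exactly the claimed inequality.

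The main obstacle is bookkeeping rather than a single hard estimate: the estimation error propagates through three channels and compounds when differentiating (a perturbed Jacobian contracted against a perturbed loss gradient), so reproducing precisely the advertised constant requires careful control of every cross term. The one genuinely ARX-specific point is that both the output-uncertainty maps $\nat_\bullet$ and the least-squares operator $\wh\G_i$ involve the output-to-output Markov parameters $G_{y\rightarrow y}^{j}$ alongside the input-to-output ones, so one must invoke the definition of $\Delta\G$ in Lemma~\ref{lem:concentrationgeneral} as the maximum of both tail sums; with that substitution the argument is structurally identical to that in \citet{simchowitz2020improper}.
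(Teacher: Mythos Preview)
Your proposal is correct and follows exactly the route the paper relies on. In fact, the paper does not supply its own proof of this lemma: it is listed in the technical appendix as a direct citation of Lemma~8.1 of \citet{simchowitz2020improper} and used as a black box (see the derivation leading to (\ref{gradientdif})). Your sketch---exploiting the affinity of $\Mcontrol\mapsto(\tilde y_t,\tilde u_t)$, the chain-rule split into a Jacobian-perturbation term and a smoothness term, and then invoking Lemma~\ref{lem:concentrationgeneral} with the ARX-specific $\Delta\G$ that takes the maximum over both $G_{u\rightarrow y}$ and $G_{y\rightarrow y}$ tail sums---is precisely the adaptation the paper implicitly assumes, and there is nothing further to compare.
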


\begin{theorem}[Theorem 8 of \citet{simchowitz2020improper}] \label{theo:fpred}
Suppose that $\mathcal{K} \subset \mathbb{R}^{d}$ and $h \geq 1$. Let $F_{t}:=\mathcal{K}^{h+1} \rightarrow \mathbb{R}$ be a sequence of $L_{c}$ coordinatewise-Lipschitz functions with the induced unary functions
$f_{t}(x):=F_{t}(x, \ldots, x)$ which are $L_{\mathrm{f}}$-Lipschitz and $\beta$-smooth. Let $f_{t ; k}(x):=\mathbb{E}\left[f_{t}(x) | \mathcal{F}_{t-k}\right]$ be $\alpha$-strongly convex on $\mathcal{K}$ for a filtration $\left(\mathcal{F}_{t}\right)_{t \geq 1}$. Suppose that $z_{t+1}=\Pi_{\mathcal{K}}\left(z_{t}-\eta \boldsymbol{g}_{t} \right)$, where $\boldsymbol{g}_{t}=\nabla f_{t}\left(z_{t}\right)+\epsilon_{t}$ for  $\left\|\boldsymbol{g}_{t}\right\|_{2} \leq L_{\mathbf{g}},$ and $\operatorname{Diam}(\mathcal{K}) \leq D$. Let the gradient descent iterates be applied for $t \geq t_{0}$ for some $t_{0} \leq k,$ with $z_{0}=z_{1}=\cdots=z_{t_{0}} \in \mathcal{K}$ for $k\geq 1$. Then with step size $\eta_{t}=\frac{3}{\alpha t},$ the
following bound holds with probability $1 - \delta$ for all comparators $z_{\star} \in \mathcal{K}$ simultaneously:
\begin{align*}
\sum_{t=k+1}^{T} &f_{t}\left(z_{t}\right)-f_{t}\left(z_{\star}\right)-\left(\frac{6}{\alpha} \sum_{t=k+1}^{T}\left\|\boldsymbol{\epsilon}_{t}\right\|_{2}^{2}-\frac{\alpha}{12} \sum_{t=1}^{T}\left\|z_{t}-z_{\star}\right\|_{2}^{2}\right) \\ & \lesssim \alpha k D^{2}+\frac{\left(k L_{\mathrm{f}}+h^{2} L_{\mathrm{c}}\right) L_{\mathrm{g}}+k d L_{\mathrm{f}}^{2}+k \beta L_{\mathrm{g}}}{\alpha} \log (T)+\frac{k L_{\mathrm{f}}^{2}}{\alpha} \log \left(\frac{1+\log \left(e+\alpha D^{2}\right)}{\delta}\right)
\end{align*}
\end{theorem}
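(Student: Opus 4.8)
The plan is to treat this as a regret analysis for projected online gradient descent (OGD) with step size $\eta_t = 3/(\alpha t)$, simultaneously accounting for three complications: the loss-with-memory structure of $F_t$ (reflected by the $h^2 L_c$ term), the fact that only the conditional functions $f_{t;k}$ rather than $f_t$ itself are strongly convex, and the presence of gradient noise $\epsilon_t$. I would build the argument in four stages starting from the standard one-step descent inequality.

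First I would reduce the with-memory regret to unary regret by bounding iterate movement. Since $\eta_t = 3/(\alpha t)$ and $\|g_t\|_2 \le L_g$, consecutive iterates satisfy $\|z_t - z_{t-i}\|_2 \le L_g \sum_{j=0}^{i-1}\eta_{t-j} = O(i L_g/(\alpha t))$, so by the coordinatewise Lipschitz property $|F_t[z_{t:t-h}] - f_t(z_t)| \le L_c \sum_{i=1}^{h}\|z_t - z_{t-i}\|_2 = O(h^2 L_c L_g/(\alpha t))$; summing in $t$ yields the $\frac{h^2 L_c L_g}{\alpha}\log T$ contribution, after which it suffices to control $\sum_t f_t(z_t) - f_t(z_\star)$.

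Second I would write the descent inequality from the projection update. Nonexpansiveness of $\Pi_{\mathcal K}$ gives $\|z_{t+1}-z_\star\|_2^2 \le \|z_t - z_\star\|_2^2 - 2\eta_t\langle g_t, z_t - z_\star\rangle + \eta_t^2 L_g^2$, hence $\langle g_t, z_t - z_\star\rangle \le \frac{\|z_t-z_\star\|_2^2 - \|z_{t+1}-z_\star\|_2^2}{2\eta_t} + \frac{\eta_t}{2}L_g^2$. Splitting $g_t = \nabla f_t(z_t) + \epsilon_t$ and applying Young's inequality to $-\langle \epsilon_t, z_t - z_\star\rangle$ with a suitably chosen constant isolates the noise as a $\frac{6}{\alpha}\|\epsilon_t\|_2^2$ term together with a small positive multiple of $\|z_t-z_\star\|_2^2$, which after combination produces exactly the $\frac{6}{\alpha}\sum\|\epsilon_t\|_2^2 - \frac{\alpha}{12}\sum\|z_t-z_\star\|_2^2$ correction carried on the left-hand side of the statement.

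Third, and this is the crux, I would supply the missing strong convexity. Convexity of $f_t$ only gives $f_t(z_t) - f_t(z_\star) \le \langle \nabla f_t(z_t), z_t - z_\star\rangle$, whereas with $\eta_t = 3/(\alpha t)$ the telescoping coefficient $\tfrac12(\tfrac{1}{\eta_t}-\tfrac{1}{\eta_{t-1}}) = \frac{\alpha}{6}$ must be absorbed by an $\alpha$-strongly-convex lower bound. Because only $f_{t;k}(x) = \mathbb{E}[f_t(x)\mid\mathcal F_{t-k}]$ is strongly convex and $z_t$ is not $\mathcal F_{t-k}$-measurable, I would introduce an $\mathcal F_{t-k}$-measurable proxy $\tilde z_t$ (e.g. $z_{t-k}$), use the movement bound $\|z_t - \tilde z_t\|_2 = O(kL_g/(\alpha t))$ from stage one, and invoke $\alpha$-strong convexity of $f_{t;k}$ to extract the $-\frac{\alpha}{2}\|z_t-z_\star\|_2^2$ term. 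The residual discrepancies $(f_t - f_{t;k})(z_t) - (f_t - f_{t;k})(z_\star)$ and the gradient-approximation error form a martingale difference sequence, which I would control by a Freedman/Bernstein-type inequality; since the bound must hold for all comparators $z_\star \in \mathcal K$ simultaneously, a covering argument over $\mathcal K$ (diameter $\le D$) is required, and this union bound yields the $\frac{kL_f^2}{\alpha}\log\frac{1+\log(e+\alpha D^2)}{\delta}$ term, while the held-fixed warm-start over $t \le k$ is bounded trivially by the diameter, contributing $\alpha k D^2$. Finally I would sum from $t=k+1$ to $T$: the telescope collapses, the $\frac{\eta_t}{2}L_g^2 = \frac{3L_g^2}{2\alpha t}$ terms sum to $O(\frac{L_g^2}{\alpha}\log T)$ and combine with the smoothness and dimension factors into $\frac{kdL_f^2 + k\beta L_g}{\alpha}\log T$, and the strong-convexity term cancels the telescoping remainder up to the retained $\frac{\alpha}{12}\sum\|z_t-z_\star\|_2^2$ slack. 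The main obstacle is stage three: reconciling conditional (rather than pointwise) strong convexity with noisy gradients and an iterate that lags the conditioning filtration, while keeping the martingale concentration uniform over all comparators $z_\star$ — this is precisely where the lag $k$, the proxy $\tilde z_t$, and the covering of $\mathcal K$ must be balanced to recover the stated rate.
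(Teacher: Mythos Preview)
The paper does not actually prove this statement: it is listed in the technical appendix as ``Theorem 8 of \citet{simchowitz2020improper}'' and simply quoted verbatim as an imported result, with no proof or proof sketch given. So there is no ``paper's own proof'' to compare your proposal against; the authors use the theorem as a black box in the regret decomposition of Theorem \ref{thm:apx_reg_s_adapt}.

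That said, your four-stage outline is a faithful reconstruction of how such a result is established in the source (\citet{simchowitz2020improper}): the memory-to-unary reduction via the $O(hL_g/(\alpha t))$ iterate-movement bound, the standard projected-OGD one-step inequality, the replacement of pointwise strong convexity by conditional strong convexity using an $\mathcal F_{t-k}$-measurable proxy iterate together with Freedman-type martingale concentration and an $\epsilon$-net over $\mathcal K$ for uniformity in $z_\star$, and the final telescoping with $\eta_t = 3/(\alpha t)$. Your identification of stage three as the crux is accurate, and your attribution of the individual terms on the right-hand side (the $\alpha k D^2$ warm-start cost, the $h^2 L_c L_g/\alpha$ memory term, the $kL_f^2/\alpha$ concentration term with its iterated-logarithm dependence on $D$ from the covering) matches the structure of the original argument. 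There is nothing to correct; your sketch would serve as a valid independent proof outline for the cited result.
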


\begin{lemma}[Lemma 8.2 of \citet{simchowitz2020improper}] \label{lem:smooth}
For any $\Mcontrol \in \mathcal{M}$, $f_{t}^{\text {pred }}(\Mcontrol)$ is $\beta$-smooth, where $\beta = 16H'\kappa_\nature^2 \kappa_\Markov^2\smooth$.
\end{lemma}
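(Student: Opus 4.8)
The plan is to exhibit $f_t^{\text{pred}}$ as a composition of the smooth loss $\ell_t$ with an affine readout map of the controller $\Mcontrol$, and then bound the operator norm of the linear part of that map; this lemma is the ARX analogue of Lemma 8.2 of \citet{simchowitz2020improper}, and the argument is essentially the same once the open-loop ARX Markov operator is substituted. Recall that the true truncated counterfactuals are linear in the past output uncertainties $\nat_\tau(\G)$: the counterfactual input is $\tilde u_t^{\text{pred}}(\Mcontrol)=\sum_{l=0}^{h'-1}M^{[l]}\nat_{t-l}(\G)$ and the counterfactual output is $\tilde y_t^{\text{pred}}(\Mcontrol)=\nat_t(\G)+\sum_{j=1}^{h}G^{j}\tilde u_{t-j}^{\text{pred}}(\Mcontrol)$ with $G^{j}=C(A+FC)^{j-1}B$. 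Writing $\Phi_t(\Mcontrol)=\big(\tilde y_t^{\text{pred}}(\Mcontrol),\tilde u_t^{\text{pred}}(\Mcontrol)\big)$, this is affine in $\Mcontrol$, so $\nabla^2 f_t^{\text{pred}}=D\Phi_t^{\top}\,\nabla^2\ell_t\,D\Phi_t\preceq\smooth\,\|D\Phi_t\|_{\mathrm{op}}^2\,I$ by the smoothness $\nabla^2\ell_t\preceq\smooth I$ from (\ref{asm:lipschitzloss}). Thus the lemma reduces to the elementary bound $\|D\Phi_t\|_{\mathrm{op}}^2\lesssim h'\kappa_\Markov^2\kappa_\nat^2$.

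The two inputs to that bound are the boundedness of the output uncertainties, $\|\nat_\tau(\G)\|\le\kappa_\nat$ (Lemma \ref{lem:boundednature}), and the summed-norm bound on the ARX input-to-output Markov operator, $\sum_{j\ge1}\|G^{j}\|\le\kappa_\Markov$, which holds under the $A+FC$-stability assumption recorded in Appendix \ref{apx:Policies}; this substitution is the only place where the ARX structure replaces the PO-LDS state-space structure of \citet{simchowitz2020improper}. Given a direction $\Delta\Mcontrol=(\Delta M^{[0]},\dots,\Delta M^{[h'-1]})$ with $\|\Delta\Mcontrol\|_F=1$, Cauchy--Schwarz over the $h'$ blocks gives $\|\partial_{\Delta\Mcontrol}\tilde u_t^{\text{pred}}\|\le\kappa_\nat\sum_l\|\Delta M^{[l]}\|\le\kappa_\nat\sqrt{h'}$, and hence $\|\partial_{\Delta\Mcontrol}\tilde y_t^{\text{pred}}\|\le\big(\sum_{j\ge1}\|G^{j}\|\big)\kappa_\nat\sqrt{h'}\le\kappa_\Markov\kappa_\nat\sqrt{h'}$. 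Combining the two coordinates (and using $\kappa_\Markov\ge1$) yields $\|D\Phi_t\|_{\mathrm{op}}^2\le 2h'\kappa_\Markov^2\kappa_\nat^2$, so $f_t^{\text{pred}}$ is $\beta$-smooth with $\beta=\OO(h'\kappa_\Markov^2\kappa_\nat^2\smooth)$.

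To land on the stated constant $\beta=16H'\kappa_\nature^2\kappa_\Markov^2\smooth$, one must be a little careful that $f_t^{\text{pred}}$ is the diagonal restriction of the $(h+1)$-ary function $F_t^{\text{pred}}[\Mcontrol_{t:t-h}]$: the output term involves controller perturbations at up to $h$ distinct lags, and tracking this together with the block Cauchy--Schwarz slack reproduces the numerical factor, exactly as in the proof of Lemma 8.2 of \citet{simchowitz2020improper}. I expect no real obstacle here --- the estimate is a routine operator-norm computation --- only the bookkeeping needed so that $\beta$ matches the form in which Theorem \ref{theo:fpred}, and thence the regret decomposition in Appendix \ref{apx:strong_convex_control}, consumes it.
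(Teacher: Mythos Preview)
The paper does not supply its own proof of this lemma: it is listed in the ``Technical Lemmas and Theorems'' appendix purely as a citation of Lemma 8.2 of \citet{simchowitz2020improper}, with no argument given. Your proposal correctly reconstructs the standard proof from that reference, observing that $f_t^{\text{pred}}$ is the composition of the $\smooth$-smooth loss $\ell_t$ with an affine map $\Phi_t(\Mcontrol)$, bounding $\|D\Phi_t\|_{\mathrm{op}}$ via Cauchy--Schwarz over the $h'$ blocks together with $\|\nat_\tau(\G)\|\le\kappa_\nat$ and $\sum_{j\ge1}\|G^j\|\le\kappa_\Markov$; the only adaptation needed for the ARX setting, as you note, is replacing the PO-LDS Markov operator by $G^j=C(A+FC)^{j-1}B$. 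This is exactly the approach the paper implicitly relies on by citation, so there is nothing to add.
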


\begin{lemma}[Lemma 8.3 of \citet{simchowitz2020improper}] \label{lem:strongcvx}
For any $\Mcontrol \in \mathcal{M}$, given $\epsilon_\Markov(i, \delta) \leq \frac{1}{4\kappa_\nature\kappa_\Mcontrolset\kappa_{\Markov}}\sqrt{\frac{\alpha}{H'\strong}}$, conditional unary counterfactual loss function induced by true counterfactuals are $\alpha/4$ strongly convex. 
\end{lemma}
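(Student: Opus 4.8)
The plan is to exhibit the conditional unary counterfactual loss as a composition of a strongly convex loss with an affine map, reduce $\alpha/4$-strong convexity in $\Mcontrol$ to a conditional lower bound on a noise-driven Gram matrix, and then absorb the model-estimation error through the stated bound on $\epsilon_\Markov$. The key structural observation is that, for fixed realized output uncertainties, the true counterfactual input $\tilde u_t(\Mcontrol)=\sum_{l=0}^{H'-1}M^{[l]}\nat_{t-l}(\G)$ and the true counterfactual output $\tilde y_t(\Mcontrol)=\nat_t(\G)+\sum_{j=1}^{h}G^{j}\tilde u_{t-j}(\Mcontrol)$, with closed-loop operator $G^{j}=C(A+FC)^{j}B$, are \emph{affine} in the vectorized controller, so the Jacobian $J_t$ of $(\tilde y_t,\tilde u_t)$ with respect to $\mathrm{vec}(\Mcontrol)$ is independent of the evaluation point $\Mcontrol$. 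By the chain rule and $\strong I\preceq\nabla^2\ell_t$,
\[
\nabla^2_{\Mcontrol} f_t^{\mathrm{pred}}(\Mcontrol)=J_t^{\top}\,\nabla^2\ell_t(\tilde y_t,\tilde u_t)\,J_t\ \succeq\ \strong\,J_t^{\top}J_t ,
\]
uniformly in $\Mcontrol\in\Mcontrolset$. Since $f_{t;k}^{\mathrm{pred}}=\mathbb{E}[f_t^{\mathrm{pred}}\mid\mathcal{F}_{t-k}]$ and strong convexity passes through conditional expectation, it therefore suffices to lower bound $\mathbb{E}[J_t^{\top}J_t\mid\mathcal{F}_{t-k}]$.

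The heart of the argument is this conditional Gram bound, which I would obtain by splitting $J_t^{\top}J_t$ into its input-coordinate and output-coordinate contributions. The input block $\partial\tilde u_t/\partial M^{[l]}$ picks out $\nat_{t-l}(\G)=CA^{t-l}x_0+e_{t-l}$; for $l\le k-1$ these terms are, conditionally on $\mathcal{F}_{t-k}$, mean-zero, uncorrelated across $l$, and satisfy $\mathbb{E}[e_{t-l}e_{t-l}^{\top}\mid\mathcal{F}_{t-l-1}]=\Sigma_E\succ\sigma_e^2 I$, so the off-diagonal blocks vanish in expectation and each diagonal block is $\succeq\sigma_e^2 I$ after discarding the $O(1/T^2)$ bias from $CA^{t-l}x_0$ granted by open-loop stability; this produces the leading $\sigma_e^2$ contribution. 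The output block $\partial\tilde y_t/\partial M^{[l]}$ aggregates $\sum_{j}G^{j}\nat_{t-j-l}(\G)$, and the same conditional-uncorrelatedness retains only matched noise indices, yielding $\succeq\sigma_e^2\sum_{j}G^{j}(G^{j})^{\top}$, whose least eigenvalue is controlled by $\sigma_{\min}(C)$ and the decay of $(A+FC)$ through the factor $(\sigma_{\min}(C)/(1+\|A+FC\|^2))^2$. Summing matches the definition $\alpha\le\strong\sigma_e^2\big(1+(\sigma_{\min}(C)/(1+\|A+FC\|^2))^2\big)$, giving $\mathbb{E}[J_t^{\top}J_t\mid\mathcal{F}_{t-k}]\succeq(\alpha/\strong)I$ and hence $\nabla^2 f_{t;k}^{\mathrm{pred}}\succeq\alpha I$ in the idealized case. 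The \textbf{main obstacle} is precisely this step: the same noise vector $\nat_{t-j-l}$ reappears for every pair $(j,l)$ with a common sum $j+l$, so the input and output Gram blocks are genuinely non-diagonal, and one must carefully isolate the clean matched-index quadratic form from the correlated cross terms while relying only on the one-sided covariance bound $\Sigma_E\succ\sigma_e^2 I$ and the martingale-difference structure of the filtration, rather than on exact Gaussianity.

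Finally I would pass from the idealized bound $\nabla^2 f_{t;k}^{\mathrm{pred}}\succeq\alpha I$ to the asserted $\alpha/4$ through a perturbation estimate that is where the hypothesis on $\epsilon_\Markov$ is consumed. The covariates entering $J_t$ are furnished by the realized trajectory and the truncated closed-loop operator, both governed by the current estimate $\wh\G_i$, so $J_t$ differs from its idealized counterpart by a matrix whose Frobenius norm is bounded, via Lemma~\ref{lem:concentrationgeneral} together with the boundedness of $\nat$ and of the controllers in $\Mcontrolset_r$, by a constant multiple of $\kappa_\nature\kappa_\Mcontrolset\kappa_\Markov\,\epsilon_\Markov(i,\delta)$. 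The induced perturbation of $\strong\,\mathbb{E}[J_t^{\top}J_t\mid\mathcal{F}_{t-k}]$ then scales like $\strong H'\big(\kappa_\nature\kappa_\Mcontrolset\kappa_\Markov\,\epsilon_\Markov\big)^2$ up to a cross term of the same order, and the calibration $\epsilon_\Markov(i,\delta)\le\frac{1}{4\kappa_\nature\kappa_\Mcontrolset\kappa_\Markov}\sqrt{\alpha/(H'\strong)}$ is exactly what forces this perturbation below $\tfrac{3\alpha}{4}$. Subtracting it from the clean bound $\alpha$ leaves $\nabla^2 f_{t;k}^{\mathrm{pred}}\succeq(\alpha/4)I$ uniformly over $\Mcontrol\in\Mcontrolset$, establishing the claimed $\alpha/4$-strong convexity.
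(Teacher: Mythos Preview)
The paper does not supply its own proof of this lemma; it is listed in the technical appendix as a direct citation of Lemma~8.3 of \citet{simchowitz2020improper} and is invoked only by reference (``trivially tweaking Lemmas~\ref{lem:smooth}--\ref{lem:lipschitz} to the ARX setting''). There is thus no in-paper argument to compare against beyond the imported Simchowitz--Foster machinery.

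Your plan follows that machinery in its main beats---affine reparametrization, chain rule, conditional Gram lower bound, perturbation to absorb $\epsilon_\Markov$---but the final perturbation step is set up inconsistently with what you defined earlier. In your first two paragraphs you build $J_t$ from the \emph{true} quantities $\nat_{t-l}(\G)=CA^{t-l}x_0+e_{t-l}$ and $G^j=C(A+FC)^jB$. These are exogenous: the identity $\nat_t(\G)=y_t-\G\phi_t=CA^tx_0+e_t$ holds regardless of which controls were executed or which estimate $\wh\G_i$ the algorithm held, so your claim in the last paragraph that ``the covariates entering $J_t$ are furnished by the realized trajectory \ldots\ governed by the current estimate $\wh\G_i$'' is false for the object you defined. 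If the target really is $f_{t;k}^{\mathrm{pred}}$ built from true counterfactuals, your paragraph-two argument already delivers $\alpha$-strong convexity and there is nothing for $\epsilon_\Markov$ to do.

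The resolution is that the strong-convexity requirement in Theorem~\ref{theo:fpred} and in \citet{simchowitz2020improper} is on the conditional expectation of the loss whose Jacobian is built from the \emph{estimated} $\nat_{t-l}(\wh\G_i)$ and $\wh G_i^j$; the phrase ``induced by true counterfactuals'' in the paper's restatement is loose. Your perturbation idea becomes correct once you aim it at the right object: define $\wh J_t$ from the estimated quantities, bound $\|\wh J_t-J_t\|_F$ by a constant times $\sqrt{H'}\,\kappa_\nature\kappa_\Mcontrolset\kappa_\Markov\,\epsilon_\Markov(i,\delta)$ using Lemma~\ref{lem:concentrationgeneral} and Lemma~\ref{lem:boundednature}, and then the stated threshold on $\epsilon_\Markov$ forces the Hessian perturbation below $3\alpha/4$, yielding $\alpha/4$. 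The Gram-matrix step and the cross-term obstacle you flag (shared noise indices across $(j,l)$ with common $j+l$) are the genuine technical content and are handled in \citet{simchowitz2020improper} along the lines you sketch.
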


\begin{lemma}[Lemma 8.4 of \citet{simchowitz2020improper}] \label{lem:lipschitz}
Let $L_f = 4L\sqrt{H'}\kappa_\nature^2\kappa_\Markov^2\kappa_\Mcontrolset$. For any $\Mcontrol \in \mathcal{M}$ and for $\Tburn \geq \Tmax$, $f_{t}^{\text{pred}}(\Mcontrol)$ is $4 L_{f}$-Lipschitz, $f_{t}^{\text{pred}}\left[\Mcontrol_{t:t-H}\right]$ is $4 L_{f}$ coordinate Lipschitz. Moreover,
$\max_{\Mcontrol \in \mathcal{M}}\left\| \nabla f_t\left(\Mcontrol,\wh \Markov_i,\nature_1(\wh \Markov_i),\ldots,\nature_t(\wh \Markov_i)\right) \right\|_{2} \leq 4 L_{f}$.
\end{lemma}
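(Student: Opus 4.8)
Before the author's proof, here is how I would attack the statement of Lemma~\ref{lem:lipschitz}.

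\textbf{Step 1 (setup and basic counterfactual bounds).} All three assertions rest on the same three ingredients: the counterfactual input $\tilde u_t(\Mcontrol)=\sum_{l=0}^{h'-1}M^{[l]}\nat_{t-l}(\G)$ is \emph{linear} in $\Mcontrol$; the output uncertainties $\nat_t(\G)$ (and their estimates) are uniformly bounded; and the fully unrolled ARX Markov operator $G^{j}=C(A+FC)^{j-1}B$ is absolutely summable, $\sum_j\|G^{j}\|\le\kappa_\Markov$, by stability of $A+FC$. From $\|\nat_t(\G)\|\le\kappa_{\nat}$ (Lemma~\ref{lem:boundednature}) and $\sum_l\|M^{[l]}\|\le\kappa_\Mcontrolset$ for $\Mcontrol\in\Mcontrolset_r$ one gets $\|\tilde u_t(\Mcontrol)\|\le\kappa_\Mcontrolset\kappa_{\nat}$ and $\|\tilde y_t(\Mcontrol)\|=\|\nat_t(\G)+\sum_j G^{j}\tilde u_{t-j}(\Mcontrol)\|\le\kappa_{\nat}(1+\kappa_\Markov\kappa_\Mcontrolset)\le\Gamma$ with $\Gamma:=2\kappa_\Markov\kappa_\Mcontrolset\kappa_{\nat}$ (using $\kappa_\Markov,\kappa_\Mcontrolset\ge 1$); the same bounds, up to a factor $2$, hold for the versions built from $\wh{\G}_i$, since once $\Tburn\ge\Tmax$ Lemma~\ref{lem:concentrationgeneral} makes the Markov-estimation error small enough that $\sum_j\|\wh{G}_i^{j}\|\le 2\kappa_\Markov$ and $\|\nat_t(\wh{\G}_i)\|\le 2\kappa_{\nat}$. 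For the dependence on $\Mcontrol$, linearity and Cauchy--Schwarz over the $h'$ blocks give $\|\tilde u_t(\Mcontrol)-\tilde u_t(\Mcontrol')\|\le\kappa_{\nat}\sqrt{h'}\,\|\Mcontrol-\Mcontrol'\|_F$, whence $\|\tilde y_t(\Mcontrol)-\tilde y_t(\Mcontrol')\|\le(\sum_j\|G^{j}\|)\,\kappa_{\nat}\sqrt{h'}\,\|\Mcontrol-\Mcontrol'\|_F\le\kappa_\Markov\kappa_{\nat}\sqrt{h'}\,\|\Mcontrol-\Mcontrol'\|_F$; crucially the sum over the $h$ Markov terms is absorbed into $\kappa_\Markov$, so it costs no extra factor of $h$.

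\textbf{Step 2 (Lipschitzness of $f_t^{\mathrm{pred}}$, claims one and two).} Since all relevant inputs/outputs lie in the ball of radius $\Gamma$, the loss property (\ref{asm:lipschitzloss}) gives $|f_t^{\mathrm{pred}}(\Mcontrol)-f_t^{\mathrm{pred}}(\Mcontrol')|\le L\Gamma\big(\|\tilde y_t(\Mcontrol)-\tilde y_t(\Mcontrol')\|+\|\tilde u_t(\Mcontrol)-\tilde u_t(\Mcontrol')\|\big)\le L\Gamma(\kappa_\Markov+1)\kappa_{\nat}\sqrt{h'}\,\|\Mcontrol-\Mcontrol'\|_F$, which after substituting $\Gamma=2\kappa_\Markov\kappa_\Mcontrolset\kappa_{\nat}$ and absorbing the numerical constant is at most $4L_f\|\Mcontrol-\Mcontrol'\|_F$ with $L_f=4L\sqrt{h'}\kappa_{\nat}^2\kappa_\Markov^2\kappa_\Mcontrolset$; this is claim one. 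For the coordinate-Lipschitz claim on $f_t^{\mathrm{pred}}[\Mcontrol_{t:t-h}]$, perturbing a single slot $\Mcontrol_{t-j}$ changes $\tilde u_{t-j}$ only, and then $\tilde y_t$ only through the single coefficient $G^{j}$ (for $j=0$ it changes $\tilde u_t$ directly), so the same chain of inequalities with a single-block Cauchy--Schwarz yields precisely the $4L_f$ coordinate-Lipschitz constant.

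\textbf{Step 3 (gradient of the algorithm's loss, claim three).} Write $f_t(\Mcontrol,\wh{\G}_i,\nat_1(\wh{\G}_i),\dots)=\ell_t(\tilde y_t(\Mcontrol,\wh{\G}_i),\tilde u_t(\Mcontrol,\wh{\G}_i))$ and differentiate by the chain rule: $\nabla_\Mcontrol f_t=(\nabla_y\ell_t)^{\!\top}\nabla_\Mcontrol\tilde y_t+(\nabla_u\ell_t)^{\!\top}\nabla_\Mcontrol\tilde u_t$. Because $\ell_t$ is $L\Gamma$-Lipschitz on the ball of radius $O(\Gamma)$ that (by Step~1) contains the estimated counterfactuals, $\|\nabla_y\ell_t\|,\|\nabla_u\ell_t\|\le L\Gamma$; and the linear maps $\nabla_\Mcontrol\tilde u_t$, $\nabla_\Mcontrol\tilde y_t$ have operator norm $\le 2\sqrt{h'}\kappa_{\nat}$ and $\le(\sum_j\|\wh{G}_i^{j}\|)\cdot 2\sqrt{h'}\kappa_{\nat}\le 4\sqrt{h'}\kappa_\Markov\kappa_{\nat}$, again by the Step~1 estimates. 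Hence $\|\nabla_\Mcontrol f_t\|_2\le O(L\Gamma\sqrt{h'}\kappa_\Markov\kappa_{\nat})\le 4L_f$ after plugging in $\Gamma$.

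\textbf{Where the real work is.} Steps~1--2 are essentially a transcription of Lemma~8.4 of \citet{simchowitz2020improper} into ARX notation; the only genuinely ARX-specific care is that (i) the relevant Markov object is the \emph{unrolled} operator $C(A+FC)^{j-1}B$, which folds the output-feedback matrix $F$ into the effective dynamics $A+FC$ so that the output-to-output parameters never appear separately in the counterfactuals, and (ii) in claim three the estimated quantities $\nat_t(\wh{\G}_i)$ and $\wh{G}_i^{j}=\wh{C}(\wh{A}+\wh{F}\wh{C})^{j}\wh{B}$ must be certified to stay within constant multiples of $\kappa_{\nat}$ and $\kappa_\Markov$ — which is exactly what the warm-up condition $\Tburn\ge\Tmax$ buys through the estimation-error bound of Lemma~\ref{lem:concentrationgeneral}. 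I expect the bookkeeping in (ii), plus checking that the induced constants are consistent with the $L_f$ quoted in the statement, to be the only nontrivial part of the argument.
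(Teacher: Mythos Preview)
Your proposal is correct and, in fact, more explicit than what the paper does: the paper does not prove this lemma at all but simply imports it verbatim as Lemma~8.4 of \citet{simchowitz2020improper}, remarking elsewhere only that one can ``trivially tweak'' it to the ARX setting. Your Steps~1--3 are precisely that tweak spelled out---the chain-rule bound on $\nabla_\Mcontrol f_t$, the Cauchy--Schwarz over the $h'$ blocks, and the absorption of the Markov sum into $\kappa_\Markov$ via stability of $A+FC$---and your identification of the two ARX-specific points (the unrolled operator $C(A+FC)^{j-1}B$ and the need for $\Tburn\ge\Tmax$ to keep $\sum_j\|\wh G_i^j\|\le 2\kappa_\Markov$ and $\|\nat_t(\wh{\G}_i)\|\le 2\kappa_{\nat}$) is exactly the content the paper defers to Lemmas~\ref{lem:boundednature} and~\ref{lem:concentrationgeneral}. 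There is no alternative route here to compare against; you have reconstructed the intended argument.
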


\begin{lemma}[Doubling Trick \citep{jaksch2010near}]\label{doublingtrick}
For any sequence of numbers $z_{1}, \ldots, z_{n}$ with $0 \leq z_{k} \leq Z_{k-1} \coloneqq \max \left\{1, \sum_{i=1}^{k-1} z_{i}\right\}$
\begin{equation*}
   \sum_{k=1}^{n} \frac{z_{k}}{\sqrt{Z_{k-1}}} \leq(\sqrt{2}+1) \sqrt{Z_{n}}
\end{equation*}
\end{lemma}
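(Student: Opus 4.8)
The plan is a split-and-telescope argument of the standard potential-function type for sums of the form $\sum_k z_k/\sqrt{Z_{k-1}}$. First I would dispose of the degenerate case in which $\sum_{i=1}^{n} z_i < 1$: then $Z_k = 1$ for every $k$, so $\sum_{k=1}^{n} z_k/\sqrt{Z_{k-1}} = \sum_{k=1}^{n} z_k < 1 \le (\sqrt 2 + 1)\sqrt{Z_n}$ and there is nothing to prove. Otherwise I would let $m$ be the smallest index with $\sum_{i=1}^{m} z_i \ge 1$ and split the sum as $\sum_{k=1}^{n} = \sum_{k=1}^{m} + \sum_{k=m+1}^{n}$.

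For the prefix $k \le m$, minimality of $m$ forces $\sum_{i=1}^{k-1} z_i < 1$, hence $Z_{k-1} = 1$, so that part of the sum equals $\sum_{k=1}^{m} z_k = Z_m$. Moreover $Z_m = \big(\sum_{i=1}^{m-1} z_i\big) + z_m < 1 + Z_{m-1} = 2$, using $\sum_{i=1}^{m-1}z_i < 1$ together with the constraint $z_m \le Z_{m-1} = 1$. Thus the prefix contributes at most the universal constant $Z_m < 2$.

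For the tail $k > m$, one now has $\sum_{i=1}^{k-1} z_i \ge 1$, so $Z_{k-1} = \sum_{i=1}^{k-1} z_i$ and $Z_k = Z_{k-1} + z_k$, with $Z_k \le 2 Z_{k-1}$ because $z_k \le Z_{k-1}$. Writing $z_k = Z_k - Z_{k-1} = (\sqrt{Z_k} - \sqrt{Z_{k-1}})(\sqrt{Z_k} + \sqrt{Z_{k-1}})$ gives
\[
\frac{z_k}{\sqrt{Z_{k-1}}} = \big(\sqrt{Z_k} - \sqrt{Z_{k-1}}\big)\,\frac{\sqrt{Z_k} + \sqrt{Z_{k-1}}}{\sqrt{Z_{k-1}}} \le (\sqrt 2 + 1)\big(\sqrt{Z_k} - \sqrt{Z_{k-1}}\big),
\]
since $\sqrt{Z_k/Z_{k-1}} \le \sqrt 2$. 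Summing this telescoping bound over $k = m+1, \dots, n$ yields $\sum_{k=m+1}^{n} z_k/\sqrt{Z_{k-1}} \le (\sqrt 2 + 1)\big(\sqrt{Z_n} - \sqrt{Z_m}\big)$.

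Adding the prefix and tail estimates, $\sum_{k=1}^{n} z_k/\sqrt{Z_{k-1}} \le Z_m + (\sqrt 2 + 1)(\sqrt{Z_n} - \sqrt{Z_m})$; and since $Z_m < 2 \le \sqrt 2 + 1 \le (\sqrt 2 + 1)\sqrt{Z_m}$ (using $Z_m \ge 1$), the right-hand side is at most $(\sqrt 2 + 1)\sqrt{Z_n}$, which is the claim. I do not foresee a genuine obstacle; the only point requiring care is the transient prefix, where $Z_{k-1}$ is clipped at $1$ rather than equal to the running sum, so the identity $z_k = Z_k - Z_{k-1}$ that drives the telescoping fails there. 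This is precisely why one isolates $m$: the constraint $z_m \le Z_{m-1} = 1$ caps $Z_m$ by an absolute constant, so the prefix costs only $O(1)$, which is absorbed into $(\sqrt 2 + 1)\sqrt{Z_n}$.
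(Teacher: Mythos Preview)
Your proof is correct. The paper does not supply its own proof of this lemma; it is stated in the technical appendix and attributed to \citet{jaksch2010near} without further argument. Your split-and-telescope argument is essentially the standard proof of this inequality: isolate the prefix where the running sum has not yet reached $1$ (so $Z_{k-1}$ is clipped and the telescoping identity fails), bound it by an absolute constant using $z_m \le Z_{m-1} = 1$, and then telescope on the tail via $z_k = (\sqrt{Z_k}-\sqrt{Z_{k-1}})(\sqrt{Z_k}+\sqrt{Z_{k-1}})$ together with $Z_k \le 2Z_{k-1}$. The final absorption step $Z_m < 2 \le \sqrt{2}+1 \le (\sqrt{2}+1)\sqrt{Z_m}$ is clean and tight enough to recover the exact constant $\sqrt{2}+1$.
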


\end{document}